\documentclass[11pt]{article}

\usepackage[a4paper,margin=1in]{geometry}
\usepackage{setspace}
\usepackage{amsmath,amssymb,amsthm,mathtools}
\usepackage{algorithm, algorithmicx, algpseudocode}
\usepackage{bm}

\theoremstyle{plain}
\newtheorem{theorem}{Theorem}
\newtheorem{proposition}[theorem]{Proposition}
\newtheorem{lemma}[theorem]{Lemma}

\theoremstyle{definition}
\newtheorem{definition}[theorem]{Definition}
\newtheorem{assumption}{Assumption}

\theoremstyle{remark}

\usepackage{enumitem}
\usepackage{graphicx}
\usepackage{booktabs}
\usepackage{array}
\usepackage{graphicx}
\usepackage{caption}
\usepackage{subcaption}
\usepackage{xcolor}
\usepackage{multirow}
\usepackage{hyperref}
\hypersetup{
    colorlinks=true,
    linkcolor=black,
    citecolor=black,
    urlcolor=black
}

\title{\textbf{Random Coordinate Descent on the Wasserstein Space of Probability Measures}}

\author{
    Yewei Xu\\[0.3em]
    \small Department of Mathematics\\
    \small University of Wisconsin-Madison\\
    \small Madison, WI 53706-1388, USA\\
    \small \texttt{xu464@wisc.edu}
    \and
    Qin Li\\[0.3em]
    \small Department of Mathematics\\
    \small University of Wisconsin-Madison\\
    \small Madison, WI 53706-1388, USA\\
    \small \texttt{qinli@math.wisc.edu}
}

\date{}

\begin{document}

\maketitle

\begin{abstract}
Optimization over the space of probability measures endowed with the Wasserstein-2 geometry is central to modern machine learning and mean-field modeling. However, traditional methods relying on full Wasserstein gradients often suffer from high computational overhead in high-dimensional or ill-conditioned settings. We propose a randomized coordinate descent framework specifically designed for the Wasserstein manifold, introducing both Random Wasserstein Coordinate Descent (RWCD) and Random Wasserstein Coordinate Proximal{-Gradient} (RWCP) for composite objectives. By exploiting coordinate-wise structures, our methods adapt to anisotropic objective landscapes where full-gradient approaches typically struggle. We provide a rigorous convergence analysis across various landscape geometries, establishing guarantees under non-convex, Polyak-{\L}ojasiewicz, and geodesically convex conditions. Our theoretical results mirror the classic convergence properties found in Euclidean space, revealing a compelling symmetry between coordinate descent on vectors and on probability measures. The developed techniques are inherently adaptive to the Wasserstein geometry and offer a robust analytical template that can be extended to other optimization solvers within the space of measures. Numerical experiments on ill-conditioned energies demonstrate that our framework offers significant speedups over conventional full-gradient methods.
\end{abstract}

\medskip
\noindent\textbf{Keywords:} Random Coordinate Descent; optimization; probability measures; Wasserstein gradient flow; proximal gradient.

\section{Introduction}
Optimization over the space of probability measures has emerged as a unifying framework across numerous disciplines. In the physical sciences, stationary states of complex systems are often characterized as probability measures that minimize specific energy functionals. This perspective is foundational to the stability analysis of the Vlasov-Poisson system in plasma physics~\cite{R01, R94, M81}, evolution of crystal surfaces~\cite{CLLMW22}, classical density functional theory for fluids~\cite{EORK16}, and density functional theory in quantum mechanics~\cite{LS09}. 
Beyond physics, this optimization paradigm also plays a critical role in Bayesian inference and sampling~\cite{LW16, W18, LCBBR22, THS23, LSW23, HBD24, BPBMN25, CHHRS26}, uncertainty quantification~\cite{WCL22}, optimal transport~\cite{CHBM25}, generative modeling~\cite{ACB17, C22, ABBGHLPSTT23, LSSVW25}, shallow neural network training~\cite{CB18, MMN18, SS20, DCLW22, CLTW25}, and inverse problems~\cite{CDDJ24}.
These diverse applications underscore the fundamental importance of developing efficient numerical methods for problems posed over the space of probability measures.

The general problem is formulated as:
\begin{equation}\label{eq:intro_problem}
\min_{\mu \in \mathcal{P}(\mathbb{R}^d)} E[\mu]\,,
\end{equation}
where $\mathcal{P}(\mathbb{R}^d)$ denotes the space of probability measures on $\mathbb{R}^d$. This formulation directly parallels the classical optimization problem in Euclidean space:
\begin{equation}\label{eq:min_f}
\min_{x \in \mathbb{R}^d} f(x)\,.
\end{equation}
A natural question arises: to what extent can established algorithmic paradigms from Euclidean optimization be ported to the setting of probability measures?

Over the past decade, an extensive dictionary has emerged linking Euclidean optimization with its counterpart in the space of probability measures. The most celebrated example is the Wasserstein gradient flow (WGF), which serves as the natural analogue to Euclidean gradient descent (GD)~\cite{AGS08, CNR25}. In this framework, the measure $\mu$ is updated along the negative Wasserstein gradient, mirroring the displacement of $x$ in GD. Similar parallels have been established for more complex methods: Hamiltonian-type flows correspond to momentum-based acceleration~\cite{CLTW25,WLYZ25}, the Jordan--Kinderlehrer--Otto (JKO) scheme acts as a proximal or implicit step~\cite{JKO98}, and splitting schemes represent proximal-gradient variants in Wasserstein space~\cite{SKL20, BUDAK24, JLL21, DBCS23}. This perspective has proven remarkably powerful, yielding a suite of elegant solvers derived directly from Euclidean principles.

Despite these advances, existing algorithms typically rely on full Wasserstein gradient information at every iteration. This requirement imposes a significant computational burden, especially in high-dimensional settings, making the development of more efficient alternatives highly desirable. A natural progression is to lift the concepts of Random Coordinate Descent (RCD) and the Random Coordinate Proximal{-Gradient} (RCP) method to the Wasserstein manifold. In Euclidean space, these methods are designed to reduce per-iteration complexity and accelerate global convergence by exploiting the coordinate-wise structure of the objective.

The advantages of RCD are particularly evident when a functional $f$ is $m$-strongly convex and $L_i$-smooth with respect to each coordinate $i$. Classical analysis~\cite{N12} reveals that to achieve an $\epsilon$-accurate solution, the total computational costs are:
\[
\mathrm{Cost(GD)}=\mathcal{O}\!\Big(\frac{dL}{m}\ln\frac{1}{\epsilon}\Big),
\qquad
\mathbb E\,\mathrm{Cost(RCD)}=\mathcal{O}\!\Big(\frac{L_{\mathrm{sum}}}{m}\ln\frac{1}{\epsilon}\Big)\,.
\]
RCD outperforms GD whenever $L_{\mathrm{sum}} := \sum_i L_i < dL$, with the performance gain becoming more pronounced in anisotropic cases where $L_{\mathrm{sum}} \ll dL$. This efficiency stems from a fundamental constraint of GD: its step size is restricted by the global smoothness constant $L \ge \max_i L_i$. In contrast, coordinate-wise updates allow for larger, adaptive step sizes on the order of $1/L_i$. When the Lipschitz constants $L_i$ are highly non-uniform, with a few ``stiff" directions dominate the global constant $L$, GD is forced to take unnecessarily small steps across all coordinates, leading to substantial computational waste that RCD effectively avoids.

We anticipate that a similar phenomenon occurs in the context of Wasserstein gradients. However, lifting Euclidean optimization algorithms to the space of probability measures introduces several non-trivial challenges. Two of these are inherent to almost all Wasserstein solvers: (i) the space $\mathcal{P}(\mathbb{R}^d)$ lacks a natural linear structure, requiring the imposition of a metric and an associated differential structure. Throughout this work, we focus on $\mathcal{P}_2(\mathbb{R}^d)$ equipped with the Wasserstein-2 geometry; and (ii) standard optimization methods developed for finite-dimensional Riemannian manifolds typically rely on smooth-manifold machinery, which the Wasserstein space does not strictly satisfy. Beyond these, the primary challenge in developing a Random Coordinate framework in $\mathcal{P}_2$ is defining what it means to ``update a single coordinate" at the level of probability measures. In Euclidean optimization, a coordinate descent step updates one component of a $d$-dimensional vector while leaving all others unchanged. In contrast, a probability measure lacks an immediate, canonical counterpart to the components of a vector.

Fortunately, probability measures possess additional structure that can be exploited algorithmically. A standard computational perspective is to represent $\mu$ via an ensemble of particles, $\mu \approx \frac{1}{N}\sum_{n=1}^N \delta_{x_n}$, such that updates to the measure are realized through updates to its representative particles. Specifically, updating a probability measure via a pushforward, $\mu \mapsto (\mathrm{Id}+T)_\#\mu$, corresponds to the particle update $x \to x + T(x)$, where $v: \mathbb{R}^d \to \mathbb{R}^d$ is a vector-valued map. This perspective allows us to reformulate the optimization problem over $\mathcal{P}_2$ as the task of designing specific pushforward maps. To capture ``coordinate" information, we restrict the pushforward map $v$ to the form $U_i T$, where $T$ is a general transport map and $U_i$ is the projection matrix onto the $i$-th coordinate. For example, mirroring RCD, we propose to set $T = -\gamma \nabla_{\mathbb{W}}E$ (the Wasserstein gradient), resulting in the update:
\[
    \mu_{k+1} = (\mathrm{Id} - \gamma_{i_k} U_{i_k} \nabla_{\mathbb{W}}E[\mu_k])_\#\mu_k\,,
\]
where $i_k$ is the coordinate chosen at this step. By construction, such an update modifies only the $i_k$-th marginal of the measure while leaving all other marginals unchanged, thereby constituting a faithful analogue to Euclidean coordinate descent.

The objective of this paper is to formally derive the Wasserstein counterparts for both RCD and RCP and rigorously analyze their convergence properties. In particular, we investigate whether these randomized methods accelerate convergence over standard Wasserstein Gradient Descent (WGD), mirroring the efficiency gains observed in Euclidean space.

\paragraph{Contributions.}
We propose and analyze a randomized coordinate optimization framework over $\mathcal{P}_2(\mathbb{R}^d)$ using coordinate-restricted pushforward updates. Our main contributions are:
\begin{itemize}
\item \textbf{Algorithmic Framework:} We introduce \emph{Random Wasserstein Coordinate Descent} (RWCD) for smooth functionals and \emph{Random Wasserstein Coordinate Proximal{-Gradient}} (RWCP) for composite objectives. Each iteration applies a transport map restricted to a randomly selected coordinate direction. By determining sampling probabilities through directional smoothness, our methods allow for aggressive updates along ``stiff" dimensions while preserving all other marginal distributions, effectively adapting to the landscape's anisotropy.

\item \textbf{Convergence Theory:}
We establish convergence guarantees that mirror the classical complexity bounds of Euclidean optimization. Under coordinate-wise Wasserstein regularity assumptions, we derive rates for stationary point convergence (non-convex), sublinear rates for geodesically convex landscapes, and linear convergence under the PL condition or strong geodesic convexity (Theorems~\ref{thm:rcd_conv} and~\ref{thm:rcp_conv}).

While the resulting rates appear familiar, the underlying proofs are highly non-trivial due to the non-linear metric structure of $\mathcal{P}_2$. Standard Euclidean concepts, such as convexity, $L$-smoothness, and coordinate-wise increments, must be reformulated as geodesic convexity, directional Wasserstein smoothness, and restricted pushforward maps. For example, the linear inner product $\langle \nabla f(y), y-x \rangle$ commonly used in $\mathbb{R}^d$ must be projected and integrated along displacement interpolants in the Wasserstein space, behaving as:
$$\int_{\mathbb{R}^d \times \mathbb{R}^d} \langle \nabla_{\mathbb{W}} E[\mu](x), y - x \rangle \, d \pi_o(x,y)$$
where $\pi_o$ is the optimal transport plan between two measures (to be made precise in Section~\ref{sec:prelim}). This transition necessitates a systematic rewrite of the fundamental descent lemmas. Furthermore, we provide a rigorous characterization of coordinate-wise smoothness for several widely used energy functionals in Proposition~\ref{prop:examples_coord_smooth}.
\item \textbf{Numerical Evidence:}
Through a series of high-dimensional experiments, we demonstrate that RWCD and RWCP provide significant acceleration over standard Wasserstein Gradient Flow (WGF) in ill-conditioned landscapes. Beyond simple validation cases, we showcase the efficiency of our methods on high-dimensional MMD-type functionals and two-layer {neural network (NN) training}, which are of particular relevance to kernel-based particle methods and machine learning.
\end{itemize}

A summary of these convergence rates, illustrating the parallelism between the Euclidean and Wasserstein settings, is provided in Table~\ref{tab:eucl_wass_compact}.

\begin{table}[htb]
\centering
\small
\setlength{\tabcolsep}{5pt}
\renewcommand{\arraystretch}{1.2}
\caption{Euclidean vs.\ Wasserstein convergence rates. Our Wasserstein coordinate methods recover the Euclidean complexity regimes with geometry-specific constants $L_{\rm sum}$ and $\mathcal{C}$ (cf.\ Theorems~\ref{thm:rcd_conv} and~\ref{thm:rcp_conv}).}
\label{tab:eucl_wass_compact}
\begin{tabular}{l cc cc cc}
\toprule
\multirow{3}{*}{Method} 
& \multicolumn{2}{c}{Nonconvex}
& \multicolumn{2}{c}{(Geo.) Convex} 
& \multicolumn{2}{c}{$m$-PL / Strongly Convex} \\
\cmidrule(lr){2-3}\cmidrule(lr){4-5}\cmidrule(lr){6-7}
& Eucl. & Wass.
& Eucl. & Wass. 
& Eucl. & Wass. \\
& $\| \nabla f(x) \|^2$ & $\| \nabla_{\mathbb{W}} E[\mu] \|_{\mu}^2$
& $f(x)-f_*$ & $E[\mu] - E_*$ 
& $f(x)-f_*$ & $E[\mu] - E_*$ \\
\midrule
RCD/RWCD
& $O\left(\frac{L_{\rm sum}}{\epsilon}\right)$ 
& $O\left(\frac{L_{\rm sum}}{\epsilon}\right)$
& $O\left(\frac{L_{\rm sum}}{\epsilon}\right)$ 
& $O\left(\frac{L_{\rm sum}}{\epsilon}\right)$ 
& $O\left(\frac{L_{\rm sum}}{m} \ln\frac{1}{\epsilon}\right)$ 
& $O\left(\frac{L_{\rm sum}}{m} \ln\frac{1}{\epsilon}\right)$ \\
RCP/RWCP
& $O\left(\frac{\mathcal{C}}{\epsilon}\right)$ 
& $O\left(\frac{\mathcal{C}}{\epsilon}\right)$ 
& $O\left(\frac{\mathcal{C}}{\epsilon}\right)$ 
& $O\left(\frac{\mathcal{C}}{\epsilon}\right)$ 
& $O\left(\frac{\mathcal{C}}{m} \ln\frac{1}{\epsilon}\right)$ 
& $O\left(\frac{\mathcal{C}}{m} \ln\frac{1}{\epsilon}\right)$ \\
\bottomrule
\end{tabular}
\end{table}

\paragraph{Organization.}
The remainder of this paper is organized as follows: Section~\ref{sec:prelim} reviews Wasserstein geometry and introduces our coordinate-wise smoothness framework. Sections~\ref{sec:rcd} and~\ref{sec:rcp} present the RWCD and RWCP algorithms along with their respective theoretical analyses. Finally, Section~\ref{sec:num} provides detailed numerical studies.

\subsection{Related Work}\label{sec:related_work}
\begin{itemize}

\item \textbf{Optimization on Wasserstein Space:} 
Substantial progress has been made in characterizing gradient flows on the density manifold $\mathcal{P}_2(\mathbb{R}^d)$. Foundational continuous-time analysis is provided by \cite{AGS08, CNR25}. Discrete-time schemes typically rely on the proximal point (JKO) approach \cite{JKO98} or splitting techniques such as Wasserstein {proximal-gradient method} \cite{SKL20, BUDAK24,JLL21}. While these methods perform global updates in the Wasserstein geometry, our work focuses on \emph{randomized coordinate-restricted} transport updates, which offer computational advantages in high-dimensional settings.

\item \textbf{Coordinate Descent in Euclidean Spaces:} 
Coordinate descent is a cornerstone of Euclidean optimization, favored for its efficiency and implicit preconditioning effects \cite{N12, W15}. For composite objectives, coordinate proximal and proximal-gradient methods are widely employed, typically under the assumption of separable regularizers \cite{FR15, CERS18}. A particularly relevant direction is the extension to {inseparable} composite problems where the regularizer couples coordinates \cite{CN23}. Our analysis adopts a similar proof skeleton; however, the coordinate updates must be reformulated as pushforward perturbations, requiring the re-development of descent arguments within the non-linear Wasserstein geometry. Furthermore, we make the dependence on coordinate-wise conditioning explicit through optimized non-uniform sampling and step-size choices. This level of detail, to our knowledge, has not been fully explored even in the inseparable Euclidean proximal{-gradient} literature.

\item \textbf{Coordinate and Block Methods in Wasserstein Space:} 
The most closely related work is \cite{YCY24}, which develops random-scan coordinate {proximal-gradient} schemes under structural assumptions that render subproblems tractable (e.g., via marginal-separable terms). Our work differs in three critical aspects: (i) our ``coordinates'' refer to axis-restricted transport directions for a single measure $\mu \in \mathcal{P}_2(\mathbb{R}^d)$, rather than block variables in a product space; (ii) we accommodate objectives that do not decouple across marginals, aligning with the inseparable perspective of \cite{CN23}; and (iii) we provide convergence rates with explicit anisotropic constants, highlighting the benefits of non-uniform sampling. Relatedly, coordinate ascent variational inference (CAVI) performs block-coordinate updates over product-form measures; recent convergence analyses for CAVI under log-concavity include \cite{AL24} (cyclic) and \cite{LZ24} (random-scan). Block majorization-minimization for generic Riemannian metric is also studied in~\cite{LBNL26}.

\end{itemize}

\section{Preliminaries}\label{sec:prelim}

This section gathers the essential background required for the remainder of the paper. Our results rest on two primary pillars: the classical algorithmic strategies for RCD and RCP in Euclidean space and the geometric framework of the Wasserstein-2 space $\mathcal{P}_2$. These are reviewed in Section~\ref{sec:euclid_rcd_rcp} and Section~\ref{sec:w2_prelim} respectively. Readers familiar with these foundational concepts may skim these subsections.

\subsection{Review of RCD and RCP in $\mathbb{R}^d$}\label{sec:euclid_rcd_rcp}
We briefly review the methods, assumptions, and convergence strategies for standard coordinate descent in $\mathbb{R}^d$. Throughout, let $U_i \in \mathbb{R}^{d \times d}$ denote the diagonal projection matrix onto the $i$-th coordinate, such that $U_i = e_i e_i^\top$, where $e_i$ is the $i$-th standard basis vector.

In Euclidean optimization, coordinate-based methods~\cite{N12, W15} are a natural choice for high-dimensional problems. The central theme is to update randomly selected coordinates, requiring only partial derivatives rather than full gradients. This significantly reduces the per-iteration computational overhead. Two prominent approaches are:

\begin{itemize}
    \item \textbf{Random Coordinate Descent (RCD):} Applicable when the gradient $\nabla f$ is accessible. At each iteration $k$, a coordinate $i_k \in \{1, \dots, d\}$ is sampled, and the iterate is updated as:
    \begin{equation}\label{eq:RCD}
    x_{k+1} = x_k - \gamma_{i_k} U_{i_k}\nabla f(x_k) \,,
    \end{equation}
    where $\gamma_{i_k}$ is a coordinate-specific step size. Computationally, this requires evaluating only the partial derivative $\partial_{i_k}f$, updating solely the $i_k$-th component of $x_k$.

    \item \textbf{Random Coordinate Proximal{-Gradient} (RCP):} Preferred for composite objectives of the form $f = g + \psi$, where $\psi$ may be non-smooth or lack a tractable gradient. At iteration $k$, a coordinate $i_k$ is sampled and the iterate is given by:
    \begin{equation}\label{eq:euclid_rcp}
        x_{k+1} = x_k + s_k e_{i_k} \,,
    \end{equation}
    where $s_k$ is the solution to the one-dimensional proximal subproblem:
    \begin{equation}\label{eq:euclid_rcp_subprob}
    s_k \in \arg\min_{s \in \mathbb{R}} \left\{ \partial_{i_k} g(x_k)s + \frac{\eta_{i_k}}{2}s^2 + \psi(x_k + s e_{i_k}) \right\} \,.
    \end{equation}
    Here, $\eta_{i_k}$ acts as a coordinate-wise proximity parameter. Similar to RCD, this approach restricts the update to the $i_k$-th directional information.
\end{itemize}

Since these methods rely on coordinate derivatives, the per-iteration cost is reduced by a factor of $d$ (assuming the evaluation of $\nabla f$ is $d$ times more expensive than a single $\partial_i f$). While the restricted information typically increases the total number of iterations required for a given accuracy $\epsilon$, coordinate methods offer the flexibility to adapt to the objective function's local geometry. In particular, for highly anisotropic or ill-conditioned functions, choosing coordinates and step sizes based on coordinate-wise smoothness can significantly reduce the total complexity (total iterations $\times$ cost per iteration).

To analyze the convergence, we adopt the following standard assumptions:
\begin{definition}[Coordinate-wise smoothness]\label{def:coord_smooth_euclid}
Let $f:\mathbb{R}^d \to \mathbb{R}$ be differentiable. We say $f$ is \emph{$L_i$-smooth along the $i$-th coordinate} if for all $x \in \mathbb{R}^d$ and all $s \in \mathbb{R}$,
\[
|\partial_i f(x + s e_i) - \partial_i f(x)| = \|U_i(\nabla f(x + s e_i) - \nabla f(x))\|_2 \leq L_i |s|\,.
\]
We denote the sum of these coordinate-wise constants as $L_{\rm sum} = \sum_{i=1}^d L_i$.
\end{definition}

\begin{definition}[Convexity, strong convexity, and PL]\label{def:euclid_convex_pl}
We say that $f$:
\begin{itemize}
    \item is \textbf{convex} if for all $x, y \in \mathbb{R}^d$:
    \[
    f(y) \geq f(x) + \langle \nabla f(x), y - x \rangle\,.
    \]
    \item is \textbf{$m$-strongly convex} if for all $x, y \in \mathbb{R}^d$:
    \[
    f(y) \geq f(x) + \langle \nabla f(x), y - x \rangle + \frac{m}{2} \|y - x\|_2^2\,.
    \]
    \item satisfies the \textbf{$m$-Polyak--{\L}ojasiewicz (PL) condition} if there exists $m > 0$ such that for all $x \in \mathbb{R}^d$:
    \[
    f(x) - f_* \leq \frac{1}{2m} \|\nabla f(x)\|_2^2\,,
    \]
    where $f_*$ denotes the minimum value of $f$.
\end{itemize}
\end{definition}

For our convergence analysis, we sometimes rely on the following standard assumption regarding the objective's landscape relative to its global minimum.

\begin{assumption}[Bounded sublevel set in $\mathbb{R}^d$]\label{asp:euclid_bls}
Let $X_*$ be the set of minimizers of $f(x)$ in $\mathbb{R}^d$. The function $f$ is assumed to have a bounded sublevel set at the initialization $x_0$; specifically, there exists a constant $R_0 \in (0, \infty)$ such that
\[
\sup_{x: \, f(x) \leq f(x_0)} \left( \inf_{x_* \in X_*} \|x - x_*\|_2 \right) \leq R_0\,.
\]
\end{assumption}

The complexity results for these methods are summarized below across different regularity regimes.

\begin{itemize}
    \item \textbf{RCD:} Designed to solve $\min_{x \in \mathbb{R}^d} f(x)$ for differentiable $f$. By updating a single randomly sampled coordinate $i_k$ per iteration according to \eqref{eq:RCD}, the method avoids full gradient computations. Let $p_i$ be the probability of selecting coordinate $i$. A standard choice is $p_i \propto L_i$ (the coordinate-wise smoothness parameter), with step sizes $\gamma_i \propto 1/L_i$. Under these settings, the iteration complexity $N$ is given by~\cite{N12}:
    \begin{description}
        \item[Non-convex:] To find an $\epsilon$-stationary point ($\min_{k \in [N]} \| \nabla f(x_k)\|^2 < \epsilon$): $N \geq \mathcal{O}(L_{\rm sum}/\epsilon)$.
        \item[Convex:] Under Assumption~\ref{asp:euclid_bls}, to achieve an optimality gap $f(x_N) - f_* \leq \epsilon$: $N \geq \mathcal{O}(L_{\rm sum}/\epsilon)$.
        \item[$m$-strongly convex / $m$-PL:] To achieve an optimality gap $\epsilon$: $N \geq \mathcal{O}(\frac{L_{\rm sum}}{m} \ln(1/\epsilon))$.
    \end{description}
    In all cases, RCD improves the computational cost relative to standard GD by replacing the factor $dL$ with $L_{\rm sum}$. Assuming a full gradient evaluation is $d$ times costlier than a partial derivative, this constitutes a significant speedup in anisotropic settings where $L_{\rm sum} \ll dL$.

    \item \textbf{RCP:} Solves $\min_x \{g(x) + \psi(x)\}$, where $\nabla g$ is accessible and $\psi$ is proximable (but potentially \emph{inseparable}). At each iteration, a coordinate is selected to perform a gradient step on $g$ while treating $\psi$ proximally, as defined in \eqref{eq:euclid_rcp}--\eqref{eq:euclid_rcp_subprob}. 
    Suppose $g$ and $\psi$ are $L$-smooth and $H$-smooth respectively, and along the $i$-th coordinate, $L_i$-smooth and $H_i$-smooth, respectively. Let $p_i$ be the probability of selecting coordinate $i$, and $\eta_i$ be the proximal parameters in~\eqref{eq:euclid_rcp_subprob}. We set them using:
    \begin{equation}\label{eqn:choice_grad-prox}
    \eta_i := L_i + \sqrt{H_i^2 + L_i^2}, \quad p_i := \frac{\eta_i}{\sum_{j=1}^d \eta_j} \propto L_i + \sqrt{H_i^2 + L_i^2} \,,
    \end{equation}
    we obtain:
    \begin{description}
        \item[Non-convex:] To find an $\epsilon$-stationary point: $N \geq \mathcal{O}(\mathcal{C}/\epsilon)$.
        \item[Convex:] To achieve an optimality gap $\epsilon$: $N \geq \mathcal{O}(\mathcal{C}/\epsilon)$.
        \item[$m$-strongly convex / $m$-PL:] To achieve an optimality gap $\epsilon$: $N \geq \mathcal{O}(\frac{\mathcal{C}}{m} \ln(1/\epsilon))$,
    \end{description}
    where
    \begin{equation}\label{eqn:calC}
    \mathcal{C} = 8\sum_{j=1}^d\eta_i=8 \sum_{j=1}^d (L_j + \sqrt{L_j^2 + H_j^2})\,.
    \end{equation}
    These results generalize those in~\cite{CN23} (which studied uniform $p_i$) to the coordinate-dependent sampling regime. We formalize these Euclidean results in Theorem~\ref{thm:rcp_conv_euclid} and provide proofs in Appendix~\ref{sec:proof_rcp_euclidean}.

    Similar to RCD, RCP also offers advantages over full proximal-gradient methods. While standard proximal-gradient methods~\cite{KNS16, PB14} scale with $d \cdot \max\{L, H\}$, RCP scales with the sum of coordinate-wise constants. Since $\sum_{j=1}^d (L_j + \sqrt{L_j^2 + H_j^2}) \leq 3d \cdot \max\{L, H\}$, RCP matches the full method even in the worst case and significantly outperforms it on highly anisotropic objectives where $\sum L_j \ll dL$ and $\sum H_j \ll dH$.
\end{itemize}

{Our objective is to extend these RCD and RCP ideas to the space of probability measures, where coordinate updates are reformulated through transport maps in Wasserstein geometry, while retaining the computational savings of coordinate methods.}

\medskip

\subsection{Probability Measure Space and the Wasserstein Metric}\label{sec:w2_prelim}

We now transition to the probability measure setting. Throughout this work, we operate on $\mathcal{P}_2(\mathbb{R}^d)$, the space of Borel probability measures with finite second moments. For any $\mu \in \mathcal{P}_2(\mathbb{R}^d)$, we define the inner product between two vector fields $f, g: \mathbb{R}^d \to \mathbb{R}^d$ as $\langle f, g \rangle_{\mu} := \int_{\mathbb{R}^d} f(x)^\top g(x) \, d\mu(x)$. The associated Hilbert space $\mathcal{L}^2(\mu; \mathbb{R}^d)$ consists of all measurable functions $f$ such that $\| f \|_{\mu}^2 := \langle f, f \rangle_{\mu} < \infty$.

To describe the transformation of measures, we utilize the pushforward operator:

\begin{definition}[Pushforward Measure]
Let $\mu \in \mathcal{P}_2(\mathbb{R}^d)$ and $T : \mathbb{R}^d \to \mathbb{R}^d$ be a measurable map. The pushforward of $\mu$ by $T$, denoted $T_{\#} \mu$, is the measure defined by
\[
(T_{\#} \mu)(A) := \mu(T^{-1}(A)) \quad \text{for every Borel set } A \subseteq \mathbb{R}^d.
\]
Equivalently, if a random variable $X$ has law $\mu$ ($X \sim \mu$), then $T(X)$ has law $T_{\#} \mu$.
\end{definition}

The geometry of $\mathcal{P}_2(\mathbb{R}^d)$ is characterized by the Wasserstein-2 metric:

\begin{definition}[Wasserstein-2 Distance and Optimal Coupling]
For any $\mu, \nu \in \mathcal{P}_2(\mathbb{R}^d)$, the Wasserstein-2 distance is defined as
\begin{equation}
\mathbb{W}_2(\mu, \nu) := \inf_{\pi \in \Pi(\mu, \nu)} \left( \int_{\mathbb{R}^d \times \mathbb{R}^d} \| x - y \|^2 \, d\pi(x,y) \right)^{1/2} , 
\end{equation}
where $\Pi(\mu, \nu)$ is the set of all couplings (joint distributions) with marginals $\mu$ and $\nu$. We denote by $\Pi_o(\mu, \nu)$ the set of optimal couplings where the infimum is attained.
\end{definition}

By endowing $\mathcal{P}_2(\mathbb{R}^d)$ with this metric, we can define a differential structure. The gradient of a functional $E$ in this geometry is defined through its action on the velocity fields of evolving measures:

\begin{definition}[Wasserstein Gradient]\label{def:wass_grad}
Let $E: \mathcal{P}_2(\mathbb{R}^d) \to \mathbb{R}$ be Wasserstein differentiable. The \emph{Wasserstein gradient} at $\mu$, denoted $\nabla_{\mathbb{W}}E[\mu]$, is the unique element in the set of $\overline{\{\nabla \phi : \phi \in C_c^\infty(\mathbb{R}^d)\}}^{\mathcal{L}^2(\mu)}$ such that for any absolutely continuous curve $(\mu_t)_{t \in [0, T)}$ with $\mu_0 = \mu$ and compatible velocity field $v_0$, we have:
\[
\left.\frac{d}{dt} E[\mu_t] \right|_{t=0} = \langle \nabla_{\mathbb{W}}E[\mu], v_0 \rangle_{\mu}\,.
\]
\end{definition}

As shown in~\cite{AGS08}, when $E$ is sufficiently regular, the Wasserstein gradient can be expressed via the standard Euclidean gradient of its first variation: $\nabla_{\mathbb{W}} E[\mu](x) = \nabla \left( \frac{\delta E}{\delta \mu}(x) \right)$.

This differential structure naturally allows for the formulation of \emph{Wasserstein Gradient Descent} (WGD). By analogy with standard Gradient Descent in Euclidean space, where iterates are updated via the gradient $\nabla f$, a functional $E[\mu]$ can be optimized by pushing forward the measure along the negative Wasserstein gradient:
\begin{equation}\label{eqn:def_gd}
    x_{k+1} = x_k - \gamma \nabla f(x_k) \quad \implies \quad \mu_{k+1} = \left( \operatorname{Id} - \gamma \nabla_{\mathbb{W}} E[\mu_k] \right)_{\#} \mu_k \, .
\end{equation}

To replicate the Euclidean coordinate-descent analysis, we require a notion of regularity that characterizes how the functional $E$ varies under axis-restricted transport perturbations, mirroring Definition~\ref{def:coord_smooth_euclid}.

\begin{definition}[Smoothness and coordinate-wise smoothness in Wasserstein space]\label{def:coord_wise_smooth}
Fix $\mu\in\mathcal P_2(\mathbb R^d)$. Let $E : \mathcal P_2(\mathbb R^d) \to \mathbb R$ be a functional. We say
\begin{itemize}
    \item $E$ is \emph{$L$-smooth at $\mu$} if for any vector field $T\in L^2(\mu;\mathbb R^d)$, the Wasserstein gradients satisfy the Lipschitz bound
\begin{equation}\label{eqn:smooth_pullback}
\Big\| \big(\nabla_{\mathbb W} E[\mu_T]\circ(\operatorname{Id} + T)-\nabla_{\mathbb W} E[\mu]\big)\Big\|_{\mu}
\leq L \,\|T\|_{\mu}\,,
\end{equation}
where $\mu_T := (\operatorname{Id} + T)_\#\mu$.
\item For a fixed $i \in \{1, \dots, d\}$, $E$ is \emph{$L_i$-smooth along the $i$-th coordinate at $\mu$} if, for any vector field $T \in \mathcal{L}^2(\mu; \mathbb{R}^d)$, the Wasserstein gradients satisfy the Lipschitz bound:
\begin{equation}\label{eqn:coord_smooth_pullback}
\Big\| U_i \big( \nabla_{\mathbb{W}} E[\mu_T] \circ (\operatorname{Id} + U_i T) - \nabla_{\mathbb{W}} E[\mu] \big) \Big\|_{\mu} \leq L_i \| U_i T \|_{\mu},
\end{equation}
where $\mu_T := (\operatorname{Id} + U_i T)_{\#} \mu$.
\end{itemize}
Consistent with the Euclidean setting, we define the aggregate smoothness constant as $L_{\rm sum} = \sum_{i=1}^d L_i$.
\end{definition}

Roughly speaking, this definition gives a Lipschitz regularity of the first order derivative, and can be considered as an upper bound of the Hessian. In $\mathbb{R}^d$, a bounded Hessian can be used to control the Taylor expansion of a function up to its second order. The same estimate can be obtained here:
\begin{proposition}[Coordinate-wise descent]\label{prop:coord_wise_smooth_1st_order}
Let $E : \mathcal P_2(\mathbb R^d) \to \mathbb R$ be differentiable.
Fix $\mu \in \mathcal P_2(\mathbb R^d)$ and $i \in [d]$, $E$ has the directional regularity as in Definition~\ref{def:coord_wise_smooth}, then for any vector field $T \in L^2(\mu; \mathbb R^d)$, we have:
\begin{equation}\label{eqn:coord_descent}
E[(\operatorname{Id}+U_iT)_\#\mu]
\leq E[\mu]
+ \big\langle \nabla_{\mathbb W} E[\mu],\,U_i T\big\rangle_{\mu}
+ \frac{L_i}{2}\,\|U_iT\|_{\mu}^2.
\end{equation}
\end{proposition}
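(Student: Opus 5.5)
The plan is to mimic the Euclidean proof of the descent lemma via the fundamental theorem of calculus along a straight-line pushforward curve. Fix $T\in L^2(\mu;\mathbb R^d)$ and, for $t\in[0,1]$, set $\mu_t := (\operatorname{Id}+tU_iT)_\#\mu$. This is an absolutely continuous curve in $\mathcal P_2(\mathbb R^d)$ with $\mu_0=\mu$ and $\mu_1=(\operatorname{Id}+U_iT)_\#\mu$. The particle $x\mapsto x+tU_iT(x)$ moves with constant velocity $U_iT(x)$, so at time $t$ a compatible velocity field is $v_t := U_iT\circ(\operatorname{Id}+tU_iT)^{-1}$. More carefully, one works directly with the identity
\[
\frac{d}{dt}E[\mu_t] = \big\langle \nabla_{\mathbb W}E[\mu_t]\circ(\operatorname{Id}+tU_iT),\,U_iT\big\rangle_{\mu},
\]
which does not require invertibility of the map.

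To justify this derivative formula at a general $t$, I would apply Definition~\ref{def:wass_grad} at the base point $\mu_t$ to the curve $s\mapsto(\operatorname{Id}+sU_iT\circ\Phi_t^{-1})_\#\mu_t$, where $\Phi_t=\operatorname{Id}+tU_iT$. Equivalently, I would use the chain rule for functionals along curves of the form $(\Phi_t)_\#\mu$. Here the velocity field need not be a gradient, but only its projection onto the tangent space matters, since $\nabla_{\mathbb W}E$ lies in the closure of gradients; the pairing formula holds regardless. This step, making the chain rule rigorous at intermediate points and without assuming that $\Phi_t$ is invertible, is the main technical obstacle. I would handle it by taking the paper's notion of differentiability to mean exactly that the displayed derivative formula holds along pushforward curves, citing \cite{AGS08}. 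I would also check that $t\mapsto E[\mu_t]$ is absolutely continuous; this follows from the Lipschitz estimate below, which makes the integrand bounded.

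Integrating from $0$ to $1$ and subtracting the $t=0$ term gives
\[
E[\mu_1]-E[\mu]-\langle\nabla_{\mathbb W}E[\mu],U_iT\rangle_\mu=\int_0^1\big\langle \nabla_{\mathbb W}E[\mu_t]\circ(\operatorname{Id}+tU_iT)-\nabla_{\mathbb W}E[\mu],\,U_iT\big\rangle_\mu\,dt .
\]
Since $U_iT=U_i(U_iT)$ and $U_i$ is a symmetric projection, the integrand equals $\langle U_i(\nabla_{\mathbb W}E[\mu_t]\circ(\operatorname{Id}+tU_iT)-\nabla_{\mathbb W}E[\mu]),U_iT\rangle_\mu$. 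By Cauchy--Schwarz in $L^2(\mu)$ and the coordinate-wise smoothness \eqref{eqn:coord_smooth_pullback} applied with the vector field $tT$ (so that $U_i(tT)=tU_iT$), the integrand is bounded by $L_i t\|U_iT\|_\mu^2$. Integrating, $\int_0^1 t\,dt=\tfrac12$, which yields \eqref{eqn:coord_descent}.
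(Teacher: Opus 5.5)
Your proposal is correct and follows the paper's proof essentially step for step. Both arguments apply the chain rule and FTC along $t\mapsto(\operatorname{Id}+tU_iT)_\#\mu$, subtract the $t=0$ gradient term, use Cauchy--Schwarz, and invoke \eqref{eqn:coord_smooth_pullback} with the field $tT$ to get $\int_0^1 tL_i\,dt=L_i/2$. One small point in your favor: you insert $U_i$ via $U_iT=U_i(U_iT)$ before applying the smoothness bound, which is how the definition actually has to be used, whereas the paper applies that bound directly to the unprojected gradient difference.
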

The proof of Proposition~\ref{prop:coord_wise_smooth_1st_order} is deferred to Appendix~\ref{sec:lem:coord_wise_smooth_1st_order_proof}.

Many common energy functionals satisfy these coordinate-wise smoothness. We list a few examples here:

\begin{proposition}[Examples of Coordinate-wise Smoothness]\label{prop:examples_coord_smooth}
Examples of energy functionals that are smooth:
\begin{enumerate}[label=(\alph*)]
    \item \textbf{(Potential Energy):} Let $E[\mu] = \int_{\mathbb{R}^d} V(x) \, d\mu(x)$, where $V: \mathbb{R}^d \to \mathbb{R}$ is $L$-smooth and $L_i$-smooth along the $i$-th coordinate. Then $E$ is $L$-smooth and $L_i$-smooth along coordinate $i$ for all $\mu \in \mathcal{P}_2(\mathbb{R}^d)$.
    \item \textbf{(Function of the Mean):} Let $m(\mu) := \int_{\mathbb{R}^d} x \, d\mu(x)$ and $E[\mu] = \phi(m(\mu))$, where $\phi: \mathbb{R}^d \to \mathbb{R}$ is $L$-smooth and $L_i$-smooth along the $i$-th coordinate. Then $E$ is $L$-smooth and $L_i$-smooth along coordinate $i$ for all $\mu \in \mathcal{P}_2(\mathbb{R}^d)$.
    \item \textbf{(Interaction Energy):} Let $E[\mu] = \frac{1}{4} \iint_{\mathbb{R}^d \times \mathbb{R}^d} W(x - y) \, d\mu(x) d\mu(y)$, where $W: \mathbb{R}^d \to \mathbb{R}$ is $L$-smooth and $L_i$-smooth along the $i$-th coordinate. Then $E$ is \emph{$L$-smooth} and $L_i$-smooth along coordinate $i$ for all $\mu \in \mathcal{P}_2(\mathbb{R}^d)$.
    \item \textbf{(MMD-type Kernel Energy):} Let $\nu \in \mathcal{P}_2(\mathbb{R}^d)$ be fixed and 
    \begin{equation}\label{eqn:MMD_def}
    E[\mu] = \frac{1}{2} \iint_{\mathbb{R}^d \times \mathbb{R}^d} k(x - y) \, d(\mu - \nu)(x) d(\mu - \nu)(y),
    \end{equation}
    where $k: \mathbb{R}^d \to \mathbb{R}$ is $L$-smooth and $L_i$-smooth along the $i$-th coordinate. Then $E$ is \emph{$4L$-smooth} and $4L_i$-smooth along coordinate $i$ for all $\mu \in \mathcal{P}_2(\mathbb{R}^d)$.
\end{enumerate}
\end{proposition}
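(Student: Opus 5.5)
The plan is to compute, for each of the four functionals, the Wasserstein gradient $\nabla_{\mathbb W}E[\mu]=\nabla\frac{\delta E}{\delta\mu}$ explicitly. We then evaluate the pulled-back difference $\nabla_{\mathbb W}E[\mu_T]\circ(\operatorname{Id}+T)-\nabla_{\mathbb W}E[\mu]$ in the $\mathcal L^2(\mu)$ norm. The key device throughout is to rewrite every integral against $\mu_T=(\operatorname{Id}+T)_\#\mu$ as an integral against $\mu$ by change of variables. Each difference then becomes a difference of Euclidean gradients evaluated at points displaced by $T$ (or $U_iT$). The Euclidean smoothness of $V,\phi,W,k$ does the rest, together with Jensen/Cauchy--Schwarz.

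\textbf{(a) Potential energy.} Here $\nabla_{\mathbb W}E[\mu]=\nabla V$, independent of $\mu$. So the pulled-back difference is $\nabla V(x+T(x))-\nabla V(x)$, bounded pointwise by $L|T(x)|$, and squaring and integrating gives \eqref{eqn:smooth_pullback}. For the coordinate version the displacement is $U_iT(x)=s(x)e_i$ with $s=T_i$. Definition~\ref{def:coord_smooth_euclid} bounds $|U_i(\nabla V(x+s e_i)-\nabla V(x))|\le L_i|s(x)|$ pointwise, and integrating gives \eqref{eqn:coord_smooth_pullback}.

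\textbf{(b) Function of the mean.} Here $\nabla_{\mathbb W}E[\mu](x)=\nabla\phi(m(\mu))$, a constant field, and $m(\mu_T)=m(\mu)+\int T\,d\mu$. The difference is the constant $\nabla\phi(m+\bar T)-\nabla\phi(m)$, whose norm is at most $L|\bar T|\le L\|T\|_\mu$ by Jensen. For coordinate $i$, $\int U_iT\,d\mu=\bar s\,e_i$, so the $i$-th component is at most $L_i|\bar s|\le L_i\|U_iT\|_\mu$.

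\textbf{(c) Interaction energy.} The factor $\tfrac14$ gives $\nabla_{\mathbb W}E[\mu](x)=\tfrac14\int(\nabla W(x-y)-\nabla W(y-x))\,d\mu(y)$; for even $W$ this is $\tfrac12\int\nabla W(x-y)\,d\mu(y)$. After pushforward, the pulled-back gradient at $x$ is $\tfrac14\int[\nabla W(x+T(x)-y-T(y))-\nabla W(y+T(y)-x-T(x))]\,d\mu(y)$. The argument differs from the unperturbed one by $\pm(T(x)-T(y))$, so each term is bounded by $L|T(x)-T(y)|$, and the total pointwise by $\tfrac12 L\int|T(x)-T(y)|\,d\mu(y)$. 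Applying Jensen in $y$ and integrating in $x$ gives at most $\tfrac14L^2\iint|T(x)-T(y)|^2\le L^2\|T\|_\mu^2$, using $|a-b|^2\le2|a|^2+2|b|^2$. This yields constant $L$. In the coordinate case $U_iT(x)-U_iT(y)=(s(x)-s(y))e_i$ is again an $e_i$-displacement, so Definition~\ref{def:coord_smooth_euclid} applies verbatim with $L_i$.

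\textbf{(d) MMD-type energy.} Expanding gives three pieces: $\tfrac12\iint k\,d\mu\,d\mu$, which is an interaction term handled as in (c) with $W=2k$; the cross term $-\iint k(x-y)\,d\mu(x)\,d\nu(y)$, which is a potential term with $V=-k*\nu$ and is handled as in (a); and a constant. The gradient is $\int\nabla k(x-y)\,d(\mu-\nu)(y)$ for even $k$. The $\mu$-part, rewritten as in (c), contributes at most $L\int|T(x)-T(y)|\,d\mu(y)$, i.e.\ at most $2L\|T\|_\mu$ after the same Jensen step. The $\nu$-part contributes pointwise at most $L|T(x)|$, since $y$ is not displaced. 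Summing gives at most $3L\le4L$, and the coordinate version is identical with $L_i$.

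\textbf{Main obstacle.} The routine but delicate part is bookkeeping the interaction terms correctly. The points $x$ and $y$ are both displaced, so differences take the form $T(x)-T(y)$ rather than $T(x)$. One must check that these remain pure $e_i$-displacements, so that the coordinate-wise Euclidean bound applies, and that symmetrization (or non-even kernels) only changes constants. One must also check that $V=k*\nu$ inherits $L_i$-smoothness. This follows by differentiating under the integral, which is justified by the Lipschitz gradient and finite second moments.
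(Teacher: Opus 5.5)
Your proposal is correct and follows essentially the same route as the paper. You compute $\nabla_{\mathbb W}E$ explicitly, rewrite the gradient at $\mu_T$ evaluated at $x+U_iT(x)$ as an integral against $\mu$, apply the Euclidean coordinate-wise smoothness pointwise (using that $U_iT(x)-U_iT(y)$ is again a pure $e_i$-displacement), and close with Jensen/Young.

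Your bookkeeping is actually a bit cleaner than the paper's in three places:
\begin{itemize}
\item In (c) you keep the factor $\tfrac12$ in the interaction gradient. The paper's display drops it, and that factor is what yields $L_i$ rather than $2L_i$.
\item Your symmetrized gradient formula also covers non-even $W$, whereas the paper tacitly assumes $W$ is even.
\item In (d) you correctly leave the $\nu$-part undisplaced, which gives $3L_i\le 4L_i$.
\end{itemize}
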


The proof of Proposition~\ref{prop:examples_coord_smooth} is provided in Appendix~\ref{sec:proof_prop:examples_coord_smooth}.

We now focus on applying these geometric concepts to the optimization of functionals. Just as the landscape of $f$ determines convergence behavior in $\mathbb{R}^d$, we require notions of geodesic convexity, the PL condition, and (for the non-strongly convex case) a bounded sublevel set assumption in $\mathcal{P}_2$.

\begin{definition}[Geodesic Convexity]\label{def:geod_convexity}
A Wasserstein differentiable functional $E : \mathcal{P}_2(\mathbb{R}^d) \to \mathbb{R}$ is \emph{$m$-geodesically convex} if for all $\mu, \nu \in \mathcal{P}_2(\mathbb{R}^d)$, and for any optimal coupling $\pi_o \in \Pi_o(\mu, \nu)$, we have:
\begin{equation}
E[\nu] \geq E[\mu] + \int_{\mathbb{R}^d \times \mathbb{R}^d} \nabla_{\mathbb{W}} E[\mu](x) \cdot (y - x) \, d\pi_o(x,y) + \frac{m}{2} \mathbb{W}_2^2(\mu, \nu) \, .
\end{equation}
When $m > 0$, we say $E$ is \emph{$m$-geodesically strongly convex}.
\end{definition}

This definition is a direct lifting of Euclidean convexity to the Wasserstein manifold. A useful relaxation of strong convexity is the Polyak-{\L}ojasiewicz (PL) condition, which allows for global convergence without requiring strict convexity.

\begin{definition}[Polyak-{\L}ojasiewicz Condition]\label{def:m_PL}
A differentiable functional $E : \mathcal{P}_2(\mathbb{R}^d) \to \mathbb{R}$ satisfies the \emph{$m$-Polyak-{\L}ojasiewicz ($m$-PL) condition} for some $m > 0$ if, for any $\mu \in \mathcal{P}_2(\mathbb{R}^d)$,
\begin{equation}
E[\mu] - E[\mu_*] \leq \frac{1}{2m} \| \nabla_{\mathbb{W}} E[\mu] \|_{\mu}^2 \, ,
\end{equation} 
where $\mu_*$ is a global minimizer of $E$.
\end{definition}

As in the Euclidean setting, the PL condition is weaker than strong convexity, a property that carries over to the Wasserstein space:
\begin{lemma}\label{lem:sc_implies_PL}
If $E : \mathcal{P}_2(\mathbb{R}^d) \to \mathbb{R}$ is differentiable and $m$-geodesically strongly convex, then it satisfies the $m$-PL condition.
\end{lemma}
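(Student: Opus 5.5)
We will mirror the Euclidean argument, in which one minimizes the strong-convexity lower bound over $y$. Fix $\mu\in\mathcal P_2(\mathbb R^d)$ and take $\nu=\mu_*$, a global minimizer. Choose any optimal coupling $\pi_o\in\Pi_o(\mu,\mu_*)$, which exists by standard optimal transport theory. Definition~\ref{def:geod_convexity} applied with this $\nu$ gives
\[
E[\mu_*]\ \ge\ E[\mu]+\int \nabla_{\mathbb W}E[\mu](x)\cdot(y-x)\,d\pi_o(x,y)+\frac m2\int\|y-x\|^2\,d\pi_o(x,y),
\]
where we used $\mathbb W_2^2(\mu,\mu_*)=\int\|y-x\|^2d\pi_o$ by optimality of $\pi_o$.

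The key step is a pointwise lower bound on the integrand. For every $(x,y)$, completing the square gives
\[
a\cdot(y-x)+\frac m2\|y-x\|^2\ \ge\ -\frac1{2m}\|a\|^2
\]
with $a=\nabla_{\mathbb W}E[\mu](x)$. This is Young's inequality, equivalently the fact that minimizing the quadratic over the free variable $z=y-x$ gives the value $-\|a\|^2/(2m)$. Integrating against $\pi_o$, and using that the first marginal of $\pi_o$ is $\mu$ (the bound depends only on $x$), we obtain
\[
E[\mu_*]\ \ge\ E[\mu]-\frac1{2m}\int\|\nabla_{\mathbb W}E[\mu](x)\|^2d\mu(x)=E[\mu]-\frac1{2m}\|\nabla_{\mathbb W}E[\mu]\|_\mu^2 .
\]
Rearranging yields exactly the $m$-PL inequality of Definition~\ref{def:m_PL}.

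The only point needing care is integrability, so that the integrals can be split legitimately. We have $\nabla_{\mathbb W}E[\mu]\in\mathcal L^2(\mu)$, and $\pi_o$ has finite second moment since $\mu,\mu_*\in\mathcal P_2$. Cauchy--Schwarz then gives
\[
\int|a\cdot(y-x)|\,d\pi_o\ \le\ \|\nabla_{\mathbb W}E[\mu]\|_\mu\,\mathbb W_2(\mu,\mu_*)<\infty .
\]
Hence every term in the convexity inequality is finite, and the pointwise bound may be integrated term by term. If $\|\nabla_{\mathbb W}E[\mu]\|_\mu=\infty$, the inequality is trivial.

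We expect no genuine obstacle; the mild subtlety is only that the minimization is done pointwise inside the coupling, not over measures. The existence of a minimizer $\mu_*$ is implicitly assumed by Definition~\ref{def:m_PL}.
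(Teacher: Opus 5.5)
Your proposal is correct and matches the paper's proof: apply $m$-geodesic strong convexity with target $\nu=\mu_*$ along an optimal plan, rewrite $\mathbb{W}_2^2$ as $\int\|y-x\|^2\,d\pi_o$, bound the integrand pointwise by completing the square, and integrate against the first marginal $\mu$. Your Cauchy--Schwarz integrability check is a small addition that the paper omits (the paper also loosely calls the completing-the-square step ``Jensen's inequality'').
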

The proof is provided in Appendix~\ref{sec:proof_lem:sc_implies_PL}. Finally, to analyze the convergence of RWCD in the case of $0$-geodesic convexity, we introduce a bounded sublevel set assumption, mirroring Assumption~\ref{asp:euclid_bls}.

\begin{assumption}[Bounded Sublevel Sets in $\mathcal{P}_2$]\label{asp:bounded_level_set}
Let $S$ be the set of minimizers of $E$ in $\mathcal{P}_2(\mathbb{R}^d)$. We assume $E$ has a bounded sublevel set at the initialization $\mu_0$; specifically, there exists $R_0 \in (0,\infty)$ such that:
\begin{equation}\label{eq:bounded_level_set}
\sup_{\mu : E[\mu] \leq E[\mu_0]} \left( \inf_{\mu_* \in S} \mathbb{W}_2(\mu, \mu_*) \right) \leq R_0\,.
\end{equation}
\end{assumption}

\section{Random Wasserstein Coordinate Descent (RWCD)}\label{sec:rcd}
In this section, we introduce the RWCD algorithm and establish its convergence properties across various landscape geometries.

To optimize the functional $E[\mu]$, we adopt the randomized coordinate-wise update strategy from Euclidean space, adapted to the geometry of $\mathcal{P}_2$. Specifically, at each iteration $k$, we update the measure $\mu_k$ by applying a pushforward map restricted to a randomly sampled coordinate $i_k$. Given a probability distribution $p = (p_1, \dots, p_d)$, we sample $i_k \sim p$ and update:
\begin{equation}\label{eqn:rcd_wasserstein}
\mu_{k+1} = (\operatorname{Id} - \gamma_{i_k} U_{i_k} \nabla_{\mathbb{W}}E[\mu_k])_{\#}\mu_k\,,
\end{equation}
where $\gamma_{i_k} > 0$ is the coordinate-specific step size. Throughout our analysis, we find that the optimal parameter configuration consists of setting:
\begin{equation}\label{eqn:choice_explicit}
p_i := \frac{L_i}{L_{\rm sum}} \, , \quad \gamma_i := \frac{1}{L_i} \, ,
\end{equation}
where $L_{\rm sum} = \sum_{i=1}^d L_i$\footnote{Note that this choice automatically guarantees the step {is an unbiased estimator of the gradient}, in the sense that $\mathbb{E}\left(\gamma_{i} U_{i} \nabla_{\mathbb{W}}E[\mu]\right) \propto
\nabla_{\mathbb{W}}E[\mu]$.}. 
This procedure is summarized in Algorithm~\ref{alg:rcd_wasserstein}.

\begin{algorithm}
\caption{Random Wasserstein Coordinate Descent (RWCD)}
\label{alg:rcd_wasserstein}
\begin{algorithmic}[1]
\State \textbf{Input:} Initial measure $\mu_0 \in \mathcal{P}_2(\mathbb{R}^d)$, coordinate smoothness $\{L_i\}_{i=1}^d$.
\State \textbf{Initialize:} Set $p_i = L_i/L_{\rm sum}$ and $\gamma_i = 1/L_i$.
\For{$k = 0, 1, \dots$}
    \State Sample coordinate $i_k \in \{1, \dots, d\}$ with $\mathbb{P}(i_k = i) = p_i$.
    \State Update the measure: $\mu_{k+1} = (\operatorname{Id} - \gamma_{i_k} U_{i_k} \nabla_{\mathbb{W}} E[\mu_k])_{\#} \mu_k$.
\EndFor
\end{algorithmic}
\end{algorithm}

The following theorem demonstrates that RWCD achieves the desired convergence rates, mirroring the efficiency of its Euclidean counterpart.

\begin{theorem}[Convergence of RWCD]\label{thm:rcd_conv}
Let $E : \mathcal{P}_2(\mathbb{R}^d) \to \mathbb{R}$ be Wasserstein differentiable and $L_i$-smooth along each coordinate $i \in \{1, \dots, d\}$. With $p_i$ and $\gamma_i$ chosen according to~\eqref{eqn:choice_explicit}, the iterates of Algorithm~\ref{alg:rcd_wasserstein} satisfy:
\begin{itemize}
    \item \textbf{Non-convex:} The energy is monotonically non-increasing ($E[\mu_{k+1}] \leq E[\mu_k]$). Furthermore,
    \[
    \mathbb{E}\left[\min_{0 \leq t < k} \|\nabla_{\mathbb{W}} E[\mu_t]\|_{\mu_t}^2\right] \leq \frac{2 L_{\rm sum} (E[\mu_0] - E[\mu_*])}{k}\,,
    \]
    achieving an $\epsilon$-approximate stationary point in $\mathcal{O}(L_{\rm sum}/\epsilon)$ iterations.
    
    \item \textbf{Geodesically Convex:} If $E$ is geodesically convex and satisfies Assumption~\ref{asp:bounded_level_set} with radius $R_0$, then:
    \[
    \mathbb{E}[E[\mu_k] - E[\mu_*]] \leq \frac{2 L_{\rm sum} R_0^2}{k}\,,
    \]
    achieving an optimality gap $\epsilon$ in $\mathcal{O}(L_{\rm sum}/\epsilon)$ iterations.
    
    \item \textbf{$m$-Strongly Convex / $m$-PL:} If $E$ satisfies the $m$-PL condition or is $m$-geodesically strongly convex, then:
    \[
    \mathbb{E}[ E[\mu_k] - E[\mu_*] ] \leq \left(1 - \frac{m}{L_{\rm sum}} \right)^k (E[\mu_0] - E[\mu_*])\,,
    \]
    achieving an optimality gap $\epsilon$ in $\mathcal{O}(\frac{L_{\rm sum}}{m} \ln\frac{1}{\epsilon})$ iterations.
\end{itemize}
\end{theorem}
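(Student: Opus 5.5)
The plan is to reduce all three regimes to a single one-step expected descent inequality, obtained from Proposition~\ref{prop:coord_wise_smooth_1st_order}, and then run the standard Euclidean telescoping and recursion arguments. Fix $k$ and condition on $\mu_k$. Apply Proposition~\ref{prop:coord_wise_smooth_1st_order} with $\mu=\mu_k$, $i=i_k$ and $T=-\gamma_{i_k}\nabla_{\mathbb W}E[\mu_k]$. Since $\|U_iT\|_{\mu}^2=\gamma_i^2\|U_i\nabla_{\mathbb W}E[\mu_k]\|_{\mu_k}^2$ and $\langle \nabla_{\mathbb W}E, U_i\nabla_{\mathbb W}E\rangle_{\mu}=\|U_i\nabla_{\mathbb W}E\|_\mu^2$, the choice $\gamma_i=1/L_i$ gives
\[
E[\mu_{k+1}]\le E[\mu_k]-\frac{1}{2L_{i_k}}\|U_{i_k}\nabla_{\mathbb W}E[\mu_k]\|_{\mu_k}^2 .
\]
This already yields deterministic monotonicity. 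Taking the expectation over $i_k\sim p$ with $p_i=L_i/L_{\rm sum}$, and using $\sum_i\|U_i g\|_\mu^2=\|g\|_\mu^2$, gives the key inequality
\[
\mathbb E\big[E[\mu_{k+1}]\,\big|\,\mu_k\big]\le E[\mu_k]-\frac{1}{2L_{\rm sum}}\|\nabla_{\mathbb W}E[\mu_k]\|_{\mu_k}^2 .
\]

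\textbf{Non-convex case.} Take total expectations and sum the key inequality over $t=0,\dots,k-1$. Bounding $E[\mu_k]\ge E[\mu_*]$ gives $\sum_t\mathbb E\|\nabla_{\mathbb W}E[\mu_t]\|^2\le 2L_{\rm sum}(E[\mu_0]-E[\mu_*])$. Bounding $\mathbb E\min$ by the average of the expectations then gives the claimed rate.

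\textbf{PL and strongly convex case.} Insert the PL inequality $\|\nabla_{\mathbb W}E\|^2\ge 2m(E-E_*)$ into the key inequality to get $\mathbb E[E[\mu_{k+1}]-E_*\mid\mu_k]\le(1-m/L_{\rm sum})(E[\mu_k]-E_*)$, and iterate. The strongly convex case reduces to this one via Lemma~\ref{lem:sc_implies_PL}.

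\textbf{Convex case.} This is the step I expect to be the main obstacle, because it needs a Wasserstein substitute for the Euclidean bound $f(x)-f_*\le\|\nabla f(x)\|\,\|x-x_*\|$. Write $\delta_k=E[\mu_k]-E_*$. By monotonicity, each $\mu_k$ lies in the sublevel set of $\mu_0$, so Assumption~\ref{asp:bounded_level_set} supplies minimizers $\mu_*$ with $\mathbb W_2(\mu_k,\mu_*)\le R_0+\epsilon'$; taking the infimum handles the approximation. Apply geodesic convexity (Definition~\ref{def:geod_convexity} with $m=0$) with $\mu=\mu_k$, $\nu=\mu_*$ and an optimal plan $\pi_o$. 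This gives
\[
\delta_k\le-\int\nabla_{\mathbb W}E[\mu_k](x)\cdot(y-x)\,d\pi_o\le \|\nabla_{\mathbb W}E[\mu_k]\|_{\mu_k}\,\mathbb W_2(\mu_k,\mu_*),
\]
by Cauchy--Schwarz in $L^2(\pi_o)$, since the $x$-marginal is $\mu_k$ and $\int|y-x|^2d\pi_o=\mathbb W_2^2$. Hence $\|\nabla_{\mathbb W}E[\mu_k]\|^2\ge\delta_k^2/R_0^2$, and the key inequality becomes $\mathbb E[\delta_{k+1}]\le\mathbb E[\delta_k]-\mathbb E[\delta_k^2]/(2L_{\rm sum}R_0^2)$.

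Jensen's inequality, $\mathbb E[\delta_k^2]\ge(\mathbb E\delta_k)^2$, turns this into the scalar recursion $a_{k+1}\le a_k-a_k^2/(2L_{\rm sum}R_0^2)$. Dividing by $a_ka_{k+1}$ and using $a_{k+1}\le a_k$ gives $1/a_{k+1}\ge 1/a_k+1/(2L_{\rm sum}R_0^2)$, so $a_k\le 2L_{\rm sum}R_0^2/k$.

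The delicate points are the measurability of the chosen minimizer, which is avoided by working with the infimum over $S$, and the fact that $\nabla_{\mathbb W}E[\mu_k]$ is evaluated only on the $x$-marginal. Both seem manageable. The $\mathcal O(\cdot)$ iteration counts then follow directly from the three bounds.
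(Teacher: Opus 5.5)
Your proposal is correct and follows the paper's proof essentially step for step. The paper uses the same one-step expected descent bound, obtained from Proposition~\ref{prop:coord_wise_smooth_1st_order} with $\gamma_i=1/L_i$ and $p_i=L_i/L_{\rm sum}$ (its Lemma~\ref{lem:RCD_one_step}), followed by telescoping, the geodesic-convexity/Cauchy--Schwarz bound $E[\mu_k]-E[\mu_*]\le R_0\|\nabla_{\mathbb W}E[\mu_k]\|_{\mu_k}$ combined with Jensen and the reciprocal recursion, and the PL contraction; your $\epsilon'$-treatment of the infimum over minimizers in Assumption~\ref{asp:bounded_level_set} is a small extra bit of care that the paper leaves implicit.
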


Theorem~\ref{thm:rcd_conv} highlights that RWCD inherits the benefits of randomized coordinate updates: the convergence rates depend on the average smoothness $L_{\rm sum}$ rather than the global smoothness constant, providing significant acceleration for anisotropic functionals.

The proof of the theorem strongly relies on the following descent lemma:
\begin{lemma}[Expected One-Step Descent]\label{lem:RCD_one_step}
Let $E$ satisfies the same assumption as
in Theorem~\ref{thm:rcd_conv}, the iterates of Algorithm~\ref{alg:rcd_wasserstein} satisfy {$E[\mu_{k+1}] \leq E[\mu_k]$, and}:
\begin{equation}\label{eqn:RCD_one_step}\mathbb{E} [ E[\mu_{k+1}] \mid \mu_k ] \leq E[\mu_k] - \frac{1}{2 L_{\rm sum}} \| \nabla_{\mathbb{W}} E[\mu_k] \|_{\mu_k}^2.\end{equation}
\end{lemma}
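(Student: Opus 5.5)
The plan is to apply the coordinate-wise descent inequality of Proposition~\ref{prop:coord_wise_smooth_1st_order} deterministically for each possible coordinate, and then average over the random choice $i_k \sim p$.

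Fix $\mu_k$ and suppose coordinate $i$ is drawn. Set $T := -\gamma_i \nabla_{\mathbb W} E[\mu_k]$, which lies in $L^2(\mu_k;\mathbb R^d)$ whenever the gradient has finite norm; if it does not, the right-hand side of \eqref{eqn:RCD_one_step} is $-\infty$ and we restrict to the nontrivial case. The update \eqref{eqn:rcd_wasserstein} is then exactly $\mu_{k+1} = (\operatorname{Id} + U_i T)_\#\mu_k$. Proposition~\ref{prop:coord_wise_smooth_1st_order} gives
\[
E[\mu_{k+1}] \le E[\mu_k] - \gamma_i \langle \nabla_{\mathbb W}E[\mu_k], U_i \nabla_{\mathbb W}E[\mu_k]\rangle_{\mu_k} + \frac{L_i\gamma_i^2}{2}\|U_i\nabla_{\mathbb W}E[\mu_k]\|_{\mu_k}^2 .
\]
Because $U_i$ is a symmetric idempotent projection, the inner product equals $\|U_i \nabla_{\mathbb W}E[\mu_k]\|_{\mu_k}^2$. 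With $\gamma_i = 1/L_i$ this yields
\[
E[\mu_{k+1}] \le E[\mu_k] - \frac{1}{2L_i}\|U_i\nabla_{\mathbb W}E[\mu_k]\|_{\mu_k}^2 .
\]
This holds for every realization of $i$, so it immediately gives the pathwise monotonicity $E[\mu_{k+1}]\le E[\mu_k]$.

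Next we take the conditional expectation over $i_k$ with $p_i = L_i/L_{\rm sum}$. The factors $L_i$ cancel:
\[
\mathbb E[E[\mu_{k+1}]\mid\mu_k] \le E[\mu_k] - \sum_i \frac{L_i}{L_{\rm sum}}\cdot\frac{1}{2L_i}\|U_i\nabla_{\mathbb W}E[\mu_k]\|_{\mu_k}^2 = E[\mu_k] - \frac{1}{2L_{\rm sum}}\sum_i \|U_i \nabla_{\mathbb W}E[\mu_k]\|^2_{\mu_k}.
\]
Finally, $\sum_i \|U_i v\|_{\mu}^2 = \int \sum_i |v_i(x)|^2\,d\mu = \|v\|_\mu^2$, which is exactly \eqref{eqn:RCD_one_step}.

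There is no real obstacle here; the substantive work is contained in Proposition~\ref{prop:coord_wise_smooth_1st_order}, which we are allowed to assume. The only points needing care are checking that the update has the form $(\operatorname{Id}+U_iT)_\#\mu_k$ with an admissible $T$, and justifying the conditional expectation, which is legitimate because $i_k$ is drawn independently of the past given $\mu_k$.
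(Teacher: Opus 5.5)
Your proposal is correct and follows the paper's proof essentially line by line. You apply Proposition~\ref{prop:coord_wise_smooth_1st_order} with $U_iT=-\gamma_iU_i\nabla_{\mathbb W}E[\mu_k]$ and use $\gamma_i=1/L_i$ to obtain the pathwise bound $-\frac{1}{2L_i}\|U_i\nabla_{\mathbb W}E[\mu_k]\|_{\mu_k}^2$, which gives monotonicity. You then average with $p_i=L_i/L_{\rm sum}$ and recombine the coordinate norms into $\|\nabla_{\mathbb W}E[\mu_k]\|_{\mu_k}^2$.

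Your side remark about a gradient of infinite norm is unnecessary. By Definition~\ref{def:wass_grad}, the Wasserstein gradient always lies in $\mathcal{L}^2(\mu_k;\mathbb{R}^d)$.
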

Lemma~\ref{lem:RCD_one_step} establishes that the functional $E[\mu_k]$ decreases monotonically at each iteration, with the {expected} rate of descent determined by the norm of the full Wasserstein gradient. This result directly mirrors the coordinate-descent lemma in Euclidean space (see, e.g., \cite[Section 6.2]{WR22}). We defer the proof of this lemma to Appendix~\ref{sec:RWCD_proof} and proceed to complete the proof of the main theorem.
\begin{proof}[Proof of Theorem~\ref{thm:rcd_conv}]
We analyze the convergence based on the landscape of $E$:
\begin{itemize}
\item \textbf{Non-convex case:} Rearranging the descent bound \eqref{eqn:RCD_one_step}, we have:
$$\left\Vert \nabla_{\mathbb{W}} E[\mu_k] \right\Vert_{\mu_k}^2 \leq 2 L_{\mathrm{sum}} \left( E[\mu_k] - \mathbb{E}\left[ E[\mu_{k+1}] \left\vert\right. \mu_k \right] \right) \,.$$
Summing across $k$ iterations and taking the total expectation:
\[
\begin{aligned}
\mathbb{E}\left[\min_{0 \leq t < k} \| \nabla_{\mathbb{W}} E[\mu_t] \|_{\mu_t}^2\right] 
&\leq \mathbb{E}\left[\frac{1}{k} \sum_{t=0}^{k-1} \| \nabla_{\mathbb{W}} E[\mu_t] \|_{\mu_t}^2\right] \\
&\leq \frac{2 L_{\rm sum}}{k} \sum_{t=0}^{k-1} \mathbb{E} \big[ E[\mu_t] - E[\mu_{t+1}] \big] \\
&\leq \frac{2 L_{\rm sum}}{k} ( E[\mu_0] - E[\mu_*] ) \, .
\end{aligned}
\]
\item \textbf{Geodesically convex case ($m=0$):} From Definition~\ref{def:geod_convexity}, for any optimal coupling $\pi_o \in \Pi_o(\mu_*, \mu_k)$, we have:
\begin{equation}\label{eqn:convex_grad_bd}
\begin{aligned}
E[\mu_k] - E[\mu_*] &\leq \int_{\mathbb{R}^d \times \mathbb{R}^d} \langle \nabla_{\mathbb{W}} E[\mu_k](y), y - x \rangle \, d \pi_o(x,y) \\
&\leq \sqrt{\int \| \nabla_{\mathbb{W}} E[\mu_k](y) \|^2 d \pi_o} \cdot \sqrt{\int \| y - x \|^2 d \pi_o} \\ 
&= \| \nabla_{\mathbb{W}} E[\mu_k] \|_{\mu_k} \cdot \mathbb{W}_2(\mu_k, \mu_*) \leq R_0 \| \nabla_{\mathbb{W}} E[\mu_k] \|_{\mu_k} \, ,
\end{aligned}
\end{equation}
where we applied Cauchy-Schwarz and Assumption~\ref{asp:bounded_level_set}. Conditioned on $\mu_k$ and utilizing \eqref{eqn:RCD_one_step}:
\[
\begin{aligned}
\mathbb{E}[ E[\mu_{k+1}] - E[\mu_*] \mid \mu_k ] &\leq E[\mu_k] - E[\mu_*] - \frac{1}{2 L_{\rm sum}} \| \nabla_{\mathbb{W}} E[\mu_k] \|_{\mu_k}^2 \\
&\leq E[\mu_k] - E[\mu_*] - \frac{(E[\mu_k] - E[\mu_*])^2}{2 L_{\rm sum} R_0^2} \, .
\end{aligned}
\]
Let $\phi_k := \mathbb{E}[E[\mu_k] - E[\mu_*]]$. By Jensen's inequality, this recurrence implies $\phi_{k+1} \leq \phi_k - \frac{\phi_k^2}{2 L_{\rm sum} R_0^2}$. Consequently:
\begin{equation}\label{eq:phi_sequel}
\frac{1}{\phi_{k+1}} - \frac{1}{\phi_k} = \frac{\phi_k - \phi_{k+1}}{\phi_k \phi_{k+1}} \geq \frac{\phi_k - \phi_{k+1}}{\phi_k^2} \geq \frac{1}{2 L_{\rm sum} R_0^2} \, .
\end{equation}
Applying this recursively yields $\frac{1}{\phi_k} \geq \frac{k}{2 L_{\rm sum} R_0^2}$, which concludes the proof for the convex case.
\item \textbf{$m$-strongly convex or $m$-PL case:} By \eqref{eqn:RCD_one_step} and the $m$-PL property:
\[
\mathbb{E}[ E[\mu_{k+1}] \mid \mu_k ] - E[\mu_k] \leq -\frac{1}{2 L_{\rm sum}} \| \nabla_{\mathbb{W}} E[\mu_k] \|_{\mu_k}^2 \leq -\frac{m}{L_{\rm sum}} ( E[\mu_k] - E[\mu_*] ) \, .
\]
Subtracting $E[\mu_*]$ and rearranging:
\[
\mathbb{E}[ E[\mu_{k+1}] - E[\mu_*] \mid \mu_k ] \leq \left( 1 - \frac{m}{L_{\rm sum}} \right) ( E[\mu_k] - E[\mu_*] ) \, .
\]
We conclude by induction.
\end{itemize}
\end{proof}
It is important to note that though the general proof strategy mirrors that for Euclidean RCD method, the Wasserstein geometry plays a crucial role, as is evident in examples such as~\eqref{eqn:convex_grad_bd}.

\section{Random Wasserstein Coordinate Proximal{-Gradient} (RWCP)}\label{sec:rcp}

This section is dedicated to the design of RWCP, extending the Random Coordinate Proximal{-Gradient} (RCP) framework to the Wasserstein setting. We consider optimization problems where the objective functional has a composite structure:
\begin{equation}\label{eqn:composite_obj}
\min_{\mu \in \mathcal{P}_2(\mathbb{R}^d)} E[\mu] := G[\mu] + \Psi[\mu] \,,
\end{equation}
where $G$ and $\Psi$ are Wasserstein differentiable, and we assume that while $\nabla_{\mathbb{W}}G$ is accessible, $\nabla_{\mathbb{W}}\Psi$ may not be explicitly available or is handled more efficiently via a proximal-type subproblem.

Following the Euclidean strategy in \eqref{eq:euclid_rcp_subprob}, at each iteration $k$, we sample a coordinate $i_k \in \{1, \dots, d\}$ with probability $p_i$ and update the measure:
\begin{equation}\label{eqn:rcp_update_xi}
\mu_{k+1} := (\operatorname{Id} + T_k)_{\#} \mu_k,
\end{equation}
where $T_k$ is the solution to the coordinate-restricted variational subproblem:
\begin{equation}\label{eqn:var_diff_grad_prox}
T_k \in \arg\min_{T \in \mathcal{L}^2(\mu_k; \mathbb{R}^d)} \left\{ \langle \nabla_{\mathbb{W}} G[\mu_k], U_{i_k} T \rangle_{\mu_k} + \frac{\eta_{i_k}}{2} \| T \|_{\mu_k}^2 + \Psi [ (\operatorname{Id} + U_{i_k} T)_{\#} \mu_k ] \right\}.
\end{equation}
The structure of \eqref{eqn:var_diff_grad_prox} ensures the displacement field $T$ vanishes outside the sampled coordinate direction $i_k$. This is the coordinate-wise analogue of the standard {Wasserstein Proximal-Gradient (WPG)} variational subproblem:
\begin{equation}\label{eqn:var_WPG}
T_k \in \arg\min_{T \in \mathcal{L}^2(\mu_k; \mathbb{R}^d)} \left\{ \langle \nabla_{\mathbb{W}} G[\mu_k], T \rangle_{\mu_k} + \frac{\eta}{2} \| T \|_{\mu_k}^2 + \Psi [ (\operatorname{Id} + T)_{\#} \mu_k ] \right\}.
\end{equation}

Throughout this section, we use the same sampling probabilities $p_i$ and proximal parameters $\eta_i$ as in the Euclidean space as in~\eqref{eqn:choice_grad-prox}. The procedure is summarized in Algorithm~\ref{alg:rcp_wasserstein}.

\begin{algorithm}[htb]
\caption{Random Wasserstein Coordinate Proximal{-Gradient} (RWCP)}
\label{alg:rcp_wasserstein}
\begin{algorithmic}[1]
\State \textbf{Input:} Initial measure $\mu_0 \in \mathcal{P}_2(\mathbb{R}^d)$, parameters $\{L_i, H_i\}_{i=1}^d$.
\State \textbf{Set:} $\eta_i = L_i + \sqrt{H_i^2 + L_i^2}$ and $p_i = \eta_i / \sum_{j=1}^d \eta_j$.
\For{$k = 0, 1, 2, \dots$}
    \State Sample $i_k \in \{1, \dots, d\}$ such that $\mathbb{P}(i_k = i) = p_i$.
    \State Compute $T_k$ by solving the subproblem \eqref{eqn:var_diff_grad_prox}.
    \State Update the measure: $\mu_{k+1} \gets (\operatorname{Id} + T_k)_{\#} \mu_k$.
\EndFor
\end{algorithmic}
\end{algorithm}

The algorithm exhibits convergence properties that match those established in the Euclidean setting, stated in the following theorem.
\begin{theorem}[Convergence of RWCP]\label{thm:rcp_conv}
Let $E = G + \Psi$. Assume $G$ is $L_i$-smooth and $\Psi$ is $H_i$-smooth along each coordinate $i \in \{1, \dots, d\}$. Under the parameter choices in \eqref{eqn:choice_grad-prox}, and denote $\mathcal{C} := 8 \sum_{j=1}^d \eta_j$, as in~\eqref{eqn:calC}, the iterates of Algorithm~\ref{alg:rcp_wasserstein} converges as follows:
\begin{itemize}
    \item \textbf{Non-convex:} $E[\mu_{k+1}] \leq E[\mu_k]$. Furthermore, 
    \[ \mathbb{E}\left[\min_{0 \leq t < k} \|\nabla_{\mathbb{W}} E[\mu_t]\|_{\mu_t}^2\right] \leq \frac{\mathcal{C}(E[\mu_0] - E[\mu_*])}{k}. \]
    \item \textbf{Geodesically Convex:} If $E$ is geodesically convex and satisfies Assumption~\ref{asp:bounded_level_set} with radius $R_0$, then
    \[ \mathbb{E}[E[\mu_k] - E[\mu_*]] \leq \frac{\mathcal{C} R_0^2}{k}. \]
    \item \textbf{$m$-Strongly Convex / $m$-PL:} If $E$ is $m$-geodesically strongly convex or satisfies the $m$-PL condition, then
    \[ \mathbb{E}[E[\mu_k] - E[\mu_*]] \leq \left(1 - \frac{2m}{\mathcal{C}} \right)^k (E[\mu_0] - E[\mu_*]). \]
\end{itemize}
\end{theorem}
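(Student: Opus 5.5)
The proof follows the same template as Theorem~\ref{thm:rcd_conv}. All three rates come from a single expected one-step descent inequality
\begin{equation*}
\mathbb{E}[E[\mu_{k+1}]\mid\mu_k] \le E[\mu_k] - \frac{1}{\mathcal{C}}\|\nabla_{\mathbb W}E[\mu_k]\|_{\mu_k}^2,
\end{equation*}
together with deterministic monotonicity $E[\mu_{k+1}]\le E[\mu_k]$. Once this is established, the nonconvex bound follows by telescoping. The convex bound follows from \eqref{eqn:convex_grad_bd} and the recursion $1/\phi_{k+1}-1/\phi_k\ge 1/(\mathcal C R_0^2)$; monotonicity keeps $\mu_k$ in the sublevel set of Assumption~\ref{asp:bounded_level_set}. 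The PL/strongly convex bound follows from Lemma~\ref{lem:sc_implies_PL}, giving the contraction factor $1-2m/\mathcal C$. These steps are verbatim copies of the RWCD argument, so the work lies in the descent lemma.

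\textbf{Monotonicity.} Write $i=i_k$, $\mu=\mu_k$, and $g=\nabla_{\mathbb W}G[\mu]$. We may restrict $T$ to satisfy $T=U_iT$, since components off coordinate $i$ only increase $\|T\|^2$. Apply Proposition~\ref{prop:coord_wise_smooth_1st_order} to $G$. Then for the minimizer $T_k$,
\begin{equation*}
E[\mu_{k+1}] \le G[\mu]+\langle g,U_iT_k\rangle+\tfrac{L_i}{2}\|T_k\|^2+\Psi[\mu_{k+1}].
\end{equation*}
Because $\eta_i\ge L_i$, the right-hand side is at most the subproblem objective at $T_k$ plus $G[\mu]$. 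That in turn is at most the value at $T=0$, which is $E[\mu]$.

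\textbf{Quantitative decrease.} Take the first-order optimality condition of \eqref{eqn:var_diff_grad_prox}, differentiating $\Psi$ along $U_iT$ in the sense of Definition~\ref{def:wass_grad}:
\begin{equation*}
U_i g+\eta_i T_k+U_i\nabla_{\mathbb W}\Psi[\mu_{k+1}]\circ(\operatorname{Id}+T_k)=0 .
\end{equation*}
Set $h=\nabla_{\mathbb W}\Psi[\mu]$. By $H_i$-coordinate smoothness of $\Psi$,
\begin{equation*}
\|U_i(g+h)\|_\mu \le \eta_i\|T_k\|+H_i\|T_k\| \le (\eta_i+H_i)\|T_k\|_\mu,
\end{equation*}
so $\|T_k\|^2\ge \|U_i\nabla_{\mathbb W}E[\mu]\|^2/(\eta_i+H_i)^2$.

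Next, compare the subproblem value at $T_k$ with its value at $0$. Convexity is not available, so I will instead use the descent inequality for $\Psi$ (Proposition~\ref{prop:coord_wise_smooth_1st_order} with $H_i$) at the point $\mu_{k+1}$, pulled back toward $\mu$. This is the Wasserstein analogue of $\psi(x)\ge\psi(x+s)-\psi'(x+s)s-\tfrac{H}{2}s^2$, obtained by applying Proposition~\ref{prop:coord_wise_smooth_1st_order} at $\mu_{k+1}$ with the map $-T_k\circ(\operatorname{Id}+T_k)^{-1}$. Combining it with the optimality condition gives
\begin{equation*}
E[\mu_{k+1}]\le E[\mu]-\Big(\eta_i-\tfrac{L_i+H_i}{2}\Big)\|T_k\|^2 .
\end{equation*}
Here $\eta_i=L_i+\sqrt{L_i^2+H_i^2}$ is exactly the choice that makes the coefficient $\eta_i-\tfrac{L_i+H_i}{2}$ comparable to $(\eta_i+H_i)^2/\eta_i$ up to a factor $8$, as in the Euclidean proof of Theorem~\ref{thm:rcp_conv_euclid}. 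Taking expectation with $p_i=\eta_i/\sum_j\eta_j$ cancels the $\eta_i$ and sums $\|U_i\nabla_{\mathbb W}E\|^2$ to the full norm, producing the constant $1/\mathcal C$.

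\textbf{Main obstacle.} The hardest step is this pull-back of the $\Psi$-smoothness to $\mu_{k+1}$. The map $\operatorname{Id}+T_k$ need not be invertible, and Definition~\ref{def:coord_wise_smooth} is stated only for perturbations based at a given measure. My first attempt is to use the coupling $(\operatorname{Id},\operatorname{Id}+T_k)_\#\mu$ and an integrated form of \eqref{eqn:coord_smooth_pullback} along $t\mapsto(\operatorname{Id}+tT_k)_\#\mu$. Writing $\Psi[\mu_{k+1}]-\Psi[\mu]=\int_0^1\langle\nabla_{\mathbb W}\Psi[\mu_t]\circ(\operatorname{Id}+tT_k),T_k\rangle\,dt$ and comparing the integrand with its value at $t=1$ avoids any inversion. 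This yields the needed bound $\Psi[\mu]\ge\Psi[\mu_{k+1}]-\langle\nabla_{\mathbb W}\Psi[\mu_{k+1}]\circ(\operatorname{Id}+T_k),T_k\rangle_\mu-\tfrac{H_i}{2}\|T_k\|^2$.
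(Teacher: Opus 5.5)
\textbf{There is a gap in the quantitative decrease step.} Your reduction to a one-step descent lemma, the monotonicity argument, the first-order condition, and the bound $\|U_i\nabla_{\mathbb W}E[\mu_k]\|_{\mu_k}\le(\eta_i+H_i)\|T_k\|_{\mu_k}$ are all sound and match the paper. Your triangle-inequality bound is even slightly sharper than the paper's Young's-inequality bound. The gap is the ``reverse'' descent inequality for $\Psi$ at $\mu_{k+1}$ on which your quantitative decrease rests. Comparing the integrand at time $t$ with its value at $t=1$ requires
\[
\big\|U_i\big(\nabla_{\mathbb W}\Psi[\mu_1]\circ(\operatorname{Id}+T_k)-\nabla_{\mathbb W}\Psi[\mu_t]\circ(\operatorname{Id}+tT_k)\big)\big\|_{\mu}\le(1-t)H_i\|T_k\|_{\mu}.
\]
This is a smoothness estimate based at $\mu_t$, not at $\mu$. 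Definition~\ref{def:coord_wise_smooth} delivers it only if you write $\mu_1=(\operatorname{Id}+S)_\#\mu_t$ with $S\circ(\operatorname{Id}+tT_k)=(1-t)T_k$. That is precisely the inversion of $\operatorname{Id}+tT_k$ you say you avoid.

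If you use only estimates based at $\mu$, the triangle inequality gives $(1+t)H_i$. That yields $\tfrac{3}{2}H_i$ instead of $\tfrac12 H_i$, and the coefficient $\eta_i-\tfrac{L_i+3H_i}{2}$ is then negative when, e.g., $L_i=0$. So the step does not go through as proposed. Your factor-$8$ arithmetic would be fine if the inequality held.

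\textbf{The detour is unnecessary.} The remark that ``convexity is not available'' is a red herring. $T_k$ is a global minimizer of \eqref{eqn:var_diff_grad_prox}, so comparing its value with the value at $T=0$ needs no convexity, and you already did exactly this in your monotonicity paragraph. Keeping the slack there gives directly
\[
E[\mu_{k+1}]\le E[\mu_k]-\tfrac{\eta_i-L_i}{2}\|T_k\|_{\mu_k}^2,
\]
which is what the paper does. For $\eta_i=L_i+\sqrt{L_i^2+H_i^2}$ one has the identity $\eta_i^2+H_i^2=2\eta_i(\eta_i-L_i)$, hence
\[
(\eta_i+H_i)^2\le 2(\eta_i^2+H_i^2)=4\eta_i(\eta_i-L_i).
\]
Combining this with your gradient bound and $p_i=\eta_i/\sum_j\eta_j$, the expected decrease is at least
\[
\sum_i\frac{\eta_i}{\sum_j\eta_j}\cdot\frac{\eta_i-L_i}{2}\cdot\frac{\|U_i\nabla_{\mathbb W}E[\mu_k]\|_{\mu_k}^2}{4\eta_i(\eta_i-L_i)}=\frac{1}{\mathcal C}\|\nabla_{\mathbb W}E[\mu_k]\|_{\mu_k}^2.
\]
This uses no information about $\Psi$ at $\mu_{k+1}$ beyond the first-order condition. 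The three rates then follow exactly as you describe.
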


Similar to the RWCD case, this theorem is built upon the following lemma, whose proof is deferred to Appendix~\ref{sec:RWCP_proof}.
\begin{lemma}[Descent Lemma for RWCP]\label{lem:descent_lemma_grad-prox}
Assume $E$ satisfies the same assumption as in Theorem~\ref{thm:rcp_conv}, the iterates of Algorithm~\ref{alg:rcp_wasserstein} satisfy $E[\mu_{k+1}] \leq E[\mu_k]$, and:
\begin{equation}
\mathbb{E}\left[E[\mu_{k+1}] - E[\mu_k] \mid \mu_k \right] \leq - \frac{1}{\mathcal{C}} \| \nabla_{\mathbb{W}} E[\mu_k] \|_{\mu_k}^2 \,.
\end{equation}
\end{lemma}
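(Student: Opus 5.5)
Our plan is to follow the Euclidean inseparable-proximal argument in \cite{CN23}, with pushforwards $(\operatorname{Id}+U_iT)_\#\mu_k$ replacing coordinate increments and $\langle\cdot,\cdot\rangle_{\mu_k}$ replacing Euclidean inner products. Fix $\mu=\mu_k$ and a sampled coordinate $i$. Let $T^\star$ be the minimizer of \eqref{eqn:var_diff_grad_prox}; we may assume $T^\star=U_iT^\star$, since only $U_iT$ enters the $G$ and $\Psi$ terms and the penalty $\frac{\eta_i}{2}\|T\|_\mu^2$ is smallest when the other components vanish. Write $\nu=(\operatorname{Id}+U_iT^\star)_\#\mu$.

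\textbf{Step 1 (monotonicity).} Apply Proposition~\ref{prop:coord_wise_smooth_1st_order} to $G$ to get $E[\nu]\le G[\mu]+\langle\nabla_{\mathbb W}G[\mu],U_iT^\star\rangle_\mu+\frac{L_i}{2}\|U_iT^\star\|_\mu^2+\Psi[\nu]$. Since $L_i\le\eta_i$, this is at most the value of the subproblem objective at $T^\star$ (plus $G[\mu]$). That value is at most the value at $T=0$, which is $G[\mu]+\Psi[\mu]=E[\mu]$. Hence $E[\mu_{k+1}]\le E[\mu_k]$.

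\textbf{Step 2 (first-order condition).} Perturb $T^\star$ in the direction $sU_iS$. Differentiating $\Psi$ along the curve $s\mapsto(\operatorname{Id}+U_iT^\star+sU_iS)_\#\mu$, using Definition~\ref{def:wass_grad} with a change of variables, gives the optimality condition
\[
U_i\big(\nabla_{\mathbb W}G[\mu]+\nabla_{\mathbb W}\Psi[\nu]\circ(\operatorname{Id}+U_iT^\star)\big)+\eta_iU_iT^\star=0\quad\mu\text{-a.e.}
\]
Therefore
\[
\|U_i\nabla_{\mathbb W}E[\mu]\|_\mu\le \eta_i\|U_iT^\star\|_\mu+\big\|U_i\big(\nabla_{\mathbb W}\Psi[\nu]\circ(\operatorname{Id}+U_iT^\star)-\nabla_{\mathbb W}\Psi[\mu]\big)\big\|_\mu\le(\eta_i+H_i)\|U_iT^\star\|_\mu,
\]
where the last step uses the $H_i$ coordinate smoothness \eqref{eqn:coord_smooth_pullback} of $\Psi$.

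\textbf{Step 3 (quantitative decrease).} Step~1 only gives decrease $0$, so we need a sharper bound of order $\|U_iT^\star\|_\mu^2$. We will bound $\Psi[\nu]$ from above by Proposition~\ref{prop:coord_wise_smooth_1st_order} applied to $\Psi$ at $\mu$. This gives
\[
E[\nu]\le E[\mu]+\langle\nabla_{\mathbb W}E[\mu],U_iT^\star\rangle_\mu+\frac{L_i+H_i}{2}\|U_iT^\star\|_\mu^2.
\]
The problem is the inner product term. To handle it we substitute the optimality condition, $U_i\nabla_{\mathbb W}E[\mu]=-\eta_iU_iT^\star-U_i\Delta$, where $\Delta:=\nabla_{\mathbb W}\Psi[\nu]\circ(\operatorname{Id}+U_iT^\star)-\nabla_{\mathbb W}\Psi[\mu]$ satisfies $\|U_i\Delta\|_\mu\le H_i\|U_iT^\star\|_\mu$. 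This yields
\[
E[\nu]-E[\mu]\le-\Big(\eta_i-H_i-\frac{L_i+H_i}{2}\Big)\|U_iT^\star\|_\mu^2.
\]
The coefficient may be negative, so this crude route can fail. The sharper route is to expand $\Psi$ around $\nu$ backwards, using the pulled-back smoothness at $\nu$ with map $-U_iT^\star\circ(\operatorname{Id}+U_iT^\star)^{-1}$. That route gives a lower bound $\Psi[\mu]\ge\Psi[\nu]-\langle\nabla_{\mathbb W}\Psi[\nu]\circ(\operatorname{Id}+U_iT^\star),U_iT^\star\rangle_\mu-\frac{H_i}{2}\|U_iT^\star\|^2_\mu$, which produces the sharper coefficient. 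Combining it with the subproblem comparison (the optimal value is at most the value at $0$, plus strong convexity of the quadratic $\frac{\eta_i}{2}\|T\|^2$) should give a decrease $c\,\eta_i\|U_iT^\star\|_\mu^2$ with $c$ universal, using $\eta_i=L_i+\sqrt{L_i^2+H_i^2}$. Balancing the two constraints via the quadratic $\eta^2-2L\eta-H^2=0$ is exactly why this $\eta_i$ appears. This bookkeeping, and the invertibility of $\operatorname{Id}+U_iT^\star$ needed for the backward expansion, is the main obstacle. If invertibility fails, we would instead work along the geodesic-like curve $s\mapsto(\operatorname{Id}+sU_iT^\star)_\#\mu$ and integrate.

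\textbf{Step 4 (expectation).} Combine Steps~2 and~3. From $\|U_iT^\star\|_\mu^2\ge\|U_i\nabla_{\mathbb W}E[\mu]\|_\mu^2/(\eta_i+H_i)^2$ and $H_i\le\eta_i$, the decrease is at least $\frac{c}{4\eta_i}\|U_i\nabla_{\mathbb W}E[\mu]\|_\mu^2$. Taking expectation with $p_i=\eta_i/\sum_j\eta_j$ cancels $\eta_i$, and summing over $i$ gives $\|\nabla_{\mathbb W}E[\mu]\|_\mu^2/(\tfrac4c\sum_j\eta_j)$. Tracking constants so that $4/c\le8$ gives $\mathcal{C}=8\sum_j\eta_j$.
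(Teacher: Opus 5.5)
There is a genuine gap in Step~3, which carries the whole quantitative content of the lemma; your Steps~1, 2 and~4 are sound. Your crude route fails, as you note. The ``sharper route'' has to write $\mu$ as a coordinate pushforward of $\nu$ by a map $S$ with $S\circ(\operatorname{Id}+U_iT^\star)=-U_iT^\star$. That requires injectivity of $\operatorname{Id}+U_iT^\star$ on the support of $\mu$, and the hypotheses (an $L^2(\mu)$ pulled-back Lipschitz bound at each base measure) do not give it. Your fallback, integrating along $t\mapsto(\operatorname{Id}+tU_iT^\star)_\#\mu$, only uses smoothness at $\mu$. Comparing $\nabla_{\mathbb W}\Psi[\mu_t]\circ(\operatorname{Id}+tU_iT^\star)$ with $\nabla_{\mathbb W}\Psi[\nu]\circ(\operatorname{Id}+U_iT^\star)$ then costs $(1+t)H_i$. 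This reproduces exactly the crude coefficient $\eta_i-H_i-\frac{L_i+H_i}{2}$, which is negative when $L_i\ll H_i$. ``Strong convexity of the quadratic'' does not rescue this, since $T\mapsto\Psi[(\operatorname{Id}+U_iT)_\#\mu]$ need not be convex. So the universal $c\ge\frac12$ is asserted, not proved. (If the backward expansion were available, it would give $\eta_i-\frac{L_i+H_i}{2}\ge\frac{\eta_i}{2}$; the idea is not wrong, just unjustified here.)

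The missing observation, which is the paper's entire argument, is that Step~3 is unnecessary. Your claim that Step~1 ``only gives decrease $0$'' is false: you discarded the slack when invoking $L_i\le\eta_i$. Comparing the subproblem at $T^\star$ with $T=0$ gives $\langle\nabla_{\mathbb W}G[\mu],U_iT^\star\rangle_\mu+\frac{\eta_i}{2}\|U_iT^\star\|_\mu^2+\Psi[\nu]\le\Psi[\mu]$. Adding Proposition~\ref{prop:coord_wise_smooth_1st_order} for $G$ yields
\[
E[\nu]-E[\mu]\le-\frac{\eta_i-L_i}{2}\,\|U_iT^\star\|_\mu^2 ,
\]
with no invertibility needed. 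Now combine this with your Step~2. The choice of $\eta_i$ means $\eta_i^2-2L_i\eta_i-H_i^2=0$, so $(\eta_i+H_i)^2\le 2(\eta_i^2+H_i^2)=4\eta_i(\eta_i-L_i)$. This gives a decrease of at least $\frac{1}{8\eta_i}\|U_i\nabla_{\mathbb W}E[\mu]\|_\mu^2$. Your averaging with $p_i=\eta_i/\sum_j\eta_j$ then yields $\mathcal C=8\sum_j\eta_j$.

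Do not use your Step~4 shortcut $(\eta_i+H_i)^2\le 4\eta_i^2$ here. With this decrease it only gives $c=(\eta_i-L_i)/(2\eta_i)$, which tends to $\frac14$ as $H_i\to 0$ and would double $\mathcal C$.
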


\begin{proof}[Proof of Theorem~\ref{thm:rcp_conv}]
Observe that Lemma~\ref{lem:descent_lemma_grad-prox} provides a descent bound structurally identical to Lemma~\ref{lem:RCD_one_step}, with the constant $2L_{\rm sum}$ replaced by $\mathcal{C}$. Consequently, the proof for all three regimes follows the exact steps detailed in the proof of Theorem~\ref{thm:rcd_conv}, substituting the respective constants.
\end{proof}

\section{Numerical Experiments}\label{sec:num}

In this section, we report our numerical findings and validate the theoretical convergence rates established in the previous sections. The objective functionals we study include interaction-type, MMD-type, and a mean-field limit of a two-layer NN-training. They naturally appear in many machine learning tasks.
Since a probability distribution is inherently an infinite-dimensional object, numerical implementation requires a finite-dimensional representation. 
Among the available choices, we employ a particle representation, approximating the measure $\mu_k$ at each iteration by an empirical measure:
\begin{equation}\label{eqn:mu_ensemble}
\mu_{k} = \frac{1}{N} \sum_{n=1}^{N} \delta_{x_{k}^{(n)}} \, ,
\end{equation}
where $\{x_{k}^{(n)}\}_{n=1}^N \subset \mathbb{R}^d$ are the positions of $N$ interacting particles. 

Throughout this section, we demonstrate convergence by plotting error curves against computational costs measured in \emph{coordinate-gradient evaluations}. To ensure a fair comparison with full-gradient methods, one random coordinate update counts as one unit of cost, while one full gradient update (WGD) counts as $d$ units.

\subsection{Example 1: 2D Anisotropic Potential and Interaction}\label{sec:2d_motivating}

The first example considers the following objective functional:
\begin{equation}\label{eqn:2D_exp}
E[\mu] := \underbrace{\int_{\mathbb{R}^2} \frac{x^\top \mathbf{P} x}{2} \, d\mu(x)}_{E_{\mathrm{pot}}[\mu]} + \underbrace{\frac{1}{8} \iint_{\mathbb{R}^2 \times \mathbb{R}^2} (x-y)^\top \mathbf{Q} (x-y) \, d\mu(x) \, d\mu(y)}_{E_{\mathrm{int}}[\mu]} \, ,
\end{equation}
which is composed of a linear potential term $E_{\mathrm{pot}}[\mu]$ and a quadratic interaction term $E_{\mathrm{int}}[\mu]$. We set
\[
V(x) = \frac{1}{2} x^\top \mathbf{P} x, \quad W(z) = \frac{1}{2} z^\top \mathbf{Q} z, \quad \text{with} \quad 
\mathbf{P} = \mathbf{Q} = \begin{bmatrix}
1000 & \frac{11100}{1111} \\
\frac{11100}{1111} & 1
\end{bmatrix}.
\]
In this setting, the unique optimizer is a Dirac delta located at the origin $x_* = 0$, so $\min_{\mu \in \mathcal{P}_2(\mathbb{R}^d)} E[\mu] = E[\delta_{x_*}] = 0$.

We can a-priori analyze the smoothness constants. Since the Hessian $\nabla^2 V = \mathbf{P}$, $V$ is globally $\|\mathbf{P}\|_2$-smooth and coordinate-wise $\mathbf{P}_{ii}$-smooth. By applying Proposition~\ref{prop:examples_coord_smooth}(a), $E_{\mathrm{pot}}$ is $L^{\mathrm{pot}} = \|\mathbf{P}\|_2$-smooth and $L^{\mathrm{pot}}_i = \mathbf{P}_{ii}$-smooth along coordinate $i$. 
Similarly, for the interaction term, Proposition~\ref{prop:examples_coord_smooth}(c) imply $E_{\mathrm{int}}$ is $L^{\mathrm{int}} = \|\mathbf{Q}\|_2$-smooth and $L^{\mathrm{int}}_i = \mathbf{Q}_{ii}$-smooth. By additivity, the full objective $E$ is $L$-smooth with $L = \|\mathbf{P}\|_2 + \|\mathbf{Q}\|_2$ and $L_i$-smooth along coordinate $i$ with $L_i = \mathbf{P}_{ii} + \mathbf{Q}_{ii}$. For the given $\mathbf{P}$ and $\mathbf{Q}$, this yields $L_1 = 2000 \gg L_2 = 2$, and $L \approx 2000.1$, highlighting the extreme anisotropy.

\textbf{Experimental Setup:}
We initialize $\mu_0$ using $N=2000$ particles drawn i.i.d.\ from $\mathcal{N}(0, I_d)$. We compare WGD with a fixed step size $h = 1/L$ and RWCD with probabilities $p_i \propto L_i$ and step sizes $h_i = 1/L_i$. The Wasserstein gradient for~\eqref{eqn:2D_exp} is given by:
\[
\nabla_{\mathbb{W}} E[\mu](x) = \mathbf{P} x + \frac{1}{2} \int_{\mathbb{R}^d} \mathbf{Q}(x-y) \, d\mu(y).
\]
In the particle representation, the update for WGD is:
\begin{equation}\label{eqn:particle_wgd}
x_{k+1}^{(n)} = x_k^{(n)} - h \nabla_{\mathbb{W}} E\left[\frac{1}{N} \sum_{m=1}^N \delta_{x_k^{(m)}}\right](x_k^{(n)}) \, , \quad n=1, \dots, N \, ,
\end{equation}
and for RWCD:
\begin{equation}\label{eqn:particle_rwcd}
x_{k+1}^{(n)} = x_k^{(n)} - h_{i_k} U_{i_k} \nabla_{\mathbb{W}} E\left[\frac{1}{N} \sum_{m=1}^N \delta_{x_k^{(m)}}\right](x_k^{(n)}) \, , \quad n=1, \dots, N \, .
\end{equation}
We run 50 independent trials for RWCD and report the median and the 10th-90th percentile band.

\textbf{Numerical Results:}
Numerical findings are presented in Figures~\ref{fig:2D_exp} and \ref{fig:2D_cloud}. 
Against coordinate-gradient evaluations, RWCD converges significantly faster than WGD. On the semi-log plots, WGD displays a linear convergence, whereas RWCD exhibits a distinct staircase-like pattern. 
Since $p_1 > 0.999$, the first coordinate is sampled almost exclusively and reaches its relative equilibrium nearly instantaneously. Consequently, the long plateaus in the energy curve represent the periods where $x_1$ is repeatedly sampled but provides no further energy reduction. Visible drops in the energy occur only on the rare occasions ($p_2 < 0.001$) when the second coordinate is sampled, which shifts the system's state and allows the optimization to progress to a new level. Both methods show an almost vertical drop at the outset, which stems from the anisotropy of $E[\mu]$: the first coordinate has essentially vanished for both methods in the first few iterations, also see Figure~\ref{fig:2D_cloud} middle left panel.

Figure~\ref{fig:2D_cloud} illustrates the particle cloud evolution. It is evident that RWCD particles collapse toward the origin $x_*=0$ much faster than WGD.

\begin{figure}[htbp]
  \centering
  \includegraphics[width=1.0\textwidth]{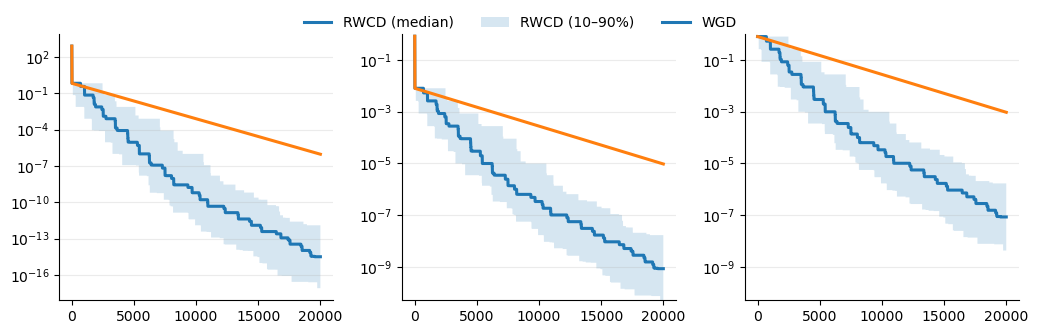}
  \caption{Example 1: RWCD vs.\ WGD on the 2D quadratic objective~\eqref{eqn:2D_exp} using $N=2000$ particles.
  Horizontal axis: work measured in coordinate-gradient evaluations (one WGD step equals $d=2$ work units).
  Blue: RWCD median with $10$--$90\%$ band across $50$ runs; orange: WGD. The three panels respectively show $E[\mu_k]$, empirical mean of the first coordinate of $N$ samples, empirical mean of the second coordinate of $N$ samples.}
  \label{fig:2D_exp}
\end{figure}

\begin{figure}[htbp]
  \centering
  \includegraphics[width=1.0\textwidth]{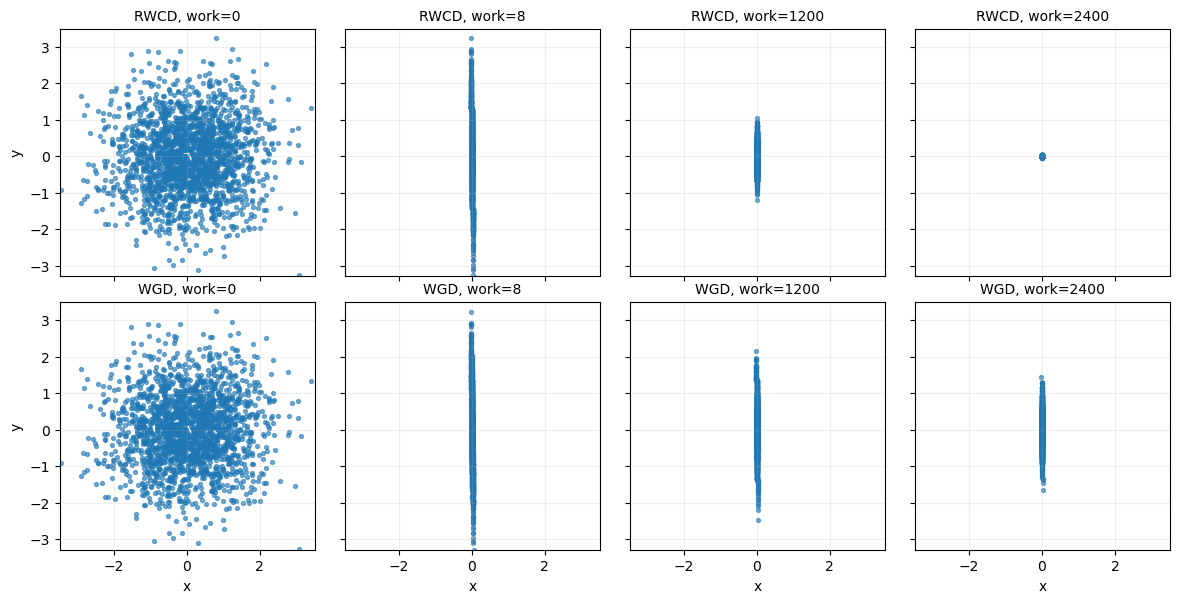}
  \caption{Example 1: Particle cloud snapshots for the 2D quadratic experiment~\eqref{eqn:2D_exp}.
  Top row: RWCD (a median trial); bottom row: WGD.
  Snapshots are taken at work $=0, 8, 1200, 2400$, where one WGD iteration is counted as $d=2$ work units.}
  \label{fig:2D_cloud}
\end{figure}

\subsection{High-dimensional Experiments}\label{sec:high_dim}

We now investigate the performance of the proposed solvers in higher-dimensional spaces, where the benefits of coordinate-wise importance sampling become more pronounced.

\subsubsection{Example 2: 50D, Potential and Interaction Functional}\label{subsubsec:highdim_quadratic}

In this example, we extend the setup of Example 1 to a higher-dimensional regime ($d=50$). The objective function remains the composite quadratic potential and interaction functional defined in \eqref{eqn:2D_exp}. 

To simulate a poorly conditioned landscape, we construct two positive definite matrices $\mathbf{P}, \mathbf{Q} \in \mathbb{R}^{d \times d}$.
Both have eigenvalues $\{\lambda_j\}_{j=1}^d$ are logarithmically equi-spaced between $1$ and $10^3$, but they have different eigenvectors.
This construction ensures a substantial variation in the coordinate-wise smoothness constants $\{L_i\}$.

Following our theory, we can compute the functional is $L$-smooth with $L = \|\mathbf{P}\|_2 + \|\mathbf{Q}\|_2 = 2000$, and $L_i$-smooth along each coordinate with $L_i = \mathbf{P}_{ii} + \mathbf{Q}_{ii}$. We set the number of particles to $N=2000$. For a fair comparison, we run WGD for $2000$ full iterations and RWCD for $2000d = 1\times 10^5$ coordinate updates, ensuring both methods use the same total number of coordinate-gradient evaluations.

Numerical results are displayed in Figure~\ref{fig:potential_interaction_explicit}. RWCD significantly outperforms WGD, with the energy decaying much faster per unit of computational work. This result is consistent with the conditioning advantage predicted by our theory: while WGD is restricted by the global smoothness constant (the largest eigenvalue), RWCD effectively adapts to the local geometry of each coordinate. 

\begin{figure}[htbp]
  \centering
  \includegraphics[width=0.8\textwidth]{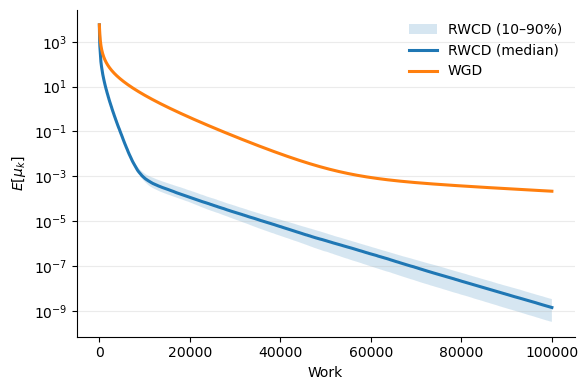}
  \caption{Example 2: RWCD vs.\ WGD for the quadratic potential and interaction objective with $N=2000$ and $d=50$. The horizontal axis represents work measured in coordinate-gradient evaluations. The blue curve shows the RWCD median with a $10$--$90\%$ confidence band across $50$ independent runs, while the orange curve represents WGD.}
  \label{fig:potential_interaction_explicit}
\end{figure}

\subsubsection{Example 3: 50D, MMD-type Functional}\label{subsubsec:highdim_mmd}

We next consider the MMD-type functional \eqref{eqn:MMD_def} with an anisotropic Gaussian kernel:
\begin{equation}
k(z) = \exp\left( -\frac{1}{2}\sum_{i=1}^d \lambda_i z_i^2 \right),
\end{equation}
where the coefficients $\{\lambda_i\}_{i=1}^d$ are logarithmically equi-spaced between $10^{-3}$ and $1$. 
By Proposition~\ref{prop:examples_coord_smooth}, this MMD functional is $L$-smooth with $L = 4\max_i \lambda_i$, and is coordinate-wise smooth with smoothness constants $L_i = 4\lambda_i$ for each $i \in \{1, \dots, d\}$.

\textbf{Experimental Setup:}
We set the dimension $d=50$. The target measure $\nu$ is an empirical distribution of $200$ particles sampled i.i.d.\ from $\mathcal{N}(0, I_d)$. The algorithm is initialized using $200$ particles sampled i.i.d.\ from $\mathcal{N}(m, \sigma^2 I_d)$, with $m = (-0.5, 0, \dots, 0)$ and $\sigma = 0.5$. Step sizes and sampling probabilities are chosen according to the derived smoothness coefficients, following the same importance sampling strategy used in the previous experiments.

\textbf{Numerical Results:}
Since this objective is non-convex, our theory characterizes the convergence of the minimum squared gradient norm $\min_{0\leq t < k} \|\nabla_{\mathbb{W}} E[\mu_t]\|_{\mu_t}^2$. We plot this convergence against computational work in Figure~\ref{fig:mmd_exp}. 

Similar to the previous examples, RWCD exhibits substantially faster decay than WGD. This result is consistent with the conditioning advantage suggested by our coordinate-wise smoothness analysis, even in a non-convex landscape. It is also noteworthy that in this experiment, the RWCD performance band is extremely narrow, indicating that the benefits of the coordinate-wise approach are highly consistent across independent runs.

\begin{figure}[htbp]
  \centering
  \includegraphics[width=0.8\textwidth]{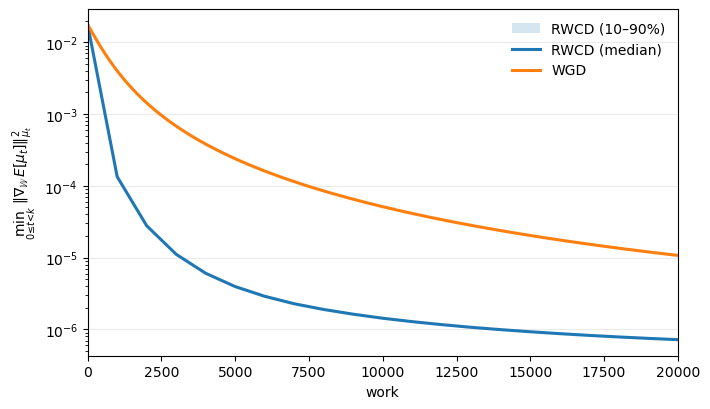}
  \caption{Example 3: RWCD vs.\ WGD for the MMD-type functional with an anisotropic Gaussian kernel in $d=50$. The horizontal axis represents work measured in coordinate-gradient evaluations. The blue curve denotes the RWCD median with a $10$--$90\%$ band across $50$ runs; the orange curve denotes WGD. Note that the RWCD band is visually indistinguishable from the median curve at this resolution.}
  \label{fig:mmd_exp}
\end{figure}

\subsection{Example 4: Composite Objective with Smooth Inseparable Regularizer}\label{subsec:highdim_composite}

In this experiment, we consider the composite functional $E = G + \Psi_\epsilon$, where the regularizer $\Psi_\epsilon$ is non-separable across coordinates. The interaction term is defined as:
\[
G[\mu] = \frac{1}{8} \iint_{\mathbb{R}^d \times \mathbb{R}^d} (x-y)^\top \mathbf{Q} (x-y) \, d\mu(x) \, d\mu(y), \quad \mathbf{Q} = \mathrm{diag}(q_1, \dots, q_d).
\]
Defining the kernel $W(z) := \frac{1}{2} z^\top \mathbf{Q} z$, we have $G[\mu] = \frac{1}{4} \iint W(x-y) \, d\mu(x)d\mu(y)$. The regularizer is a linear functional induced by the potential $V_\epsilon$:
\[
\Psi_\epsilon[\mu] = \int_{\mathbb{R}^d} V_\epsilon(x) \, d\mu(x) \, , \quad \text{with} \quad V_\epsilon(x) := \sum_{j=1}^d r_j \sqrt{(\mathbf{A} x)_j^2 + \epsilon^2} \, ,
\]
where $\epsilon = 10^{-2}$ and $\mathbf{A} \in \mathbb{R}^{d \times d}$ is an orthogonal matrix that mixes all coordinates. Since $\mathbf{Q}$ is positive definite and the unique minimizer of $V_\epsilon(x)$ is the origin, the optimizer of $E$ is $\mu_* = \delta_0$, with minimum energy $E_* = \epsilon \sum_{j=1}^d r_j$.

Following the discussion in Example 1, $G$ is $L$-smooth globally with $L = \|\mathbf{Q}\|_2$, and $L_i$-smooth along coordinate $i$ with $L_i = \mathbf{Q}_{ii}$. For the regularizer, the Hessian of $V_\epsilon$ is:
\[
\nabla^2 V_\epsilon(x) = \epsilon^2 \mathbf{A}^\top \operatorname{diag}_{j=1}^d \left( r_j \big( (\mathbf{A}x)_j^2 + \epsilon^2 \big)^{-3/2} \right) \mathbf{A}.
\]
Coordinate-wise, this implies $\partial_{ii} V_\epsilon(x) \leq \frac{1}{\epsilon} \sum_{j=1}^d r_j \mathbf{A}_{j,i}^2 =: H_i$. Globally, the operator norm is bounded by $\|\nabla^2 V_\epsilon(x)\|_2 \leq \epsilon^{-1} \max_j r_j =: H$. By Proposition~\ref{prop:examples_coord_smooth}(a), $E$ is $(L+H)$-smooth globally and $(L_i+H_i)$-smooth along coordinate $i$.

\textbf{Experimental Setup:}
We set $d=50$ and choose weights $\{r_j\}$ and $\mathbf{A}$ such that $\{H_i\}$ are approximately equi-spaced on a log scale between $1.5$ and $4959$ (see Figure~\ref{fig:rwcp_H_i} for the realized $H_i$). We initialize $N=200$ particles i.i.d.\ from $\mathcal{N}(m, \sigma^2 I_d)$ with $m = (1/8, \dots, 1/8)$ and $\sigma = 1/8$. We compare {{WPG}} ($2 \times 10^4$ iterations) against RWCP ($2 \times 10^4 d$ coordinate updates).

\begin{figure}[htbp]
  \centering
  \includegraphics[width=0.6\textwidth]{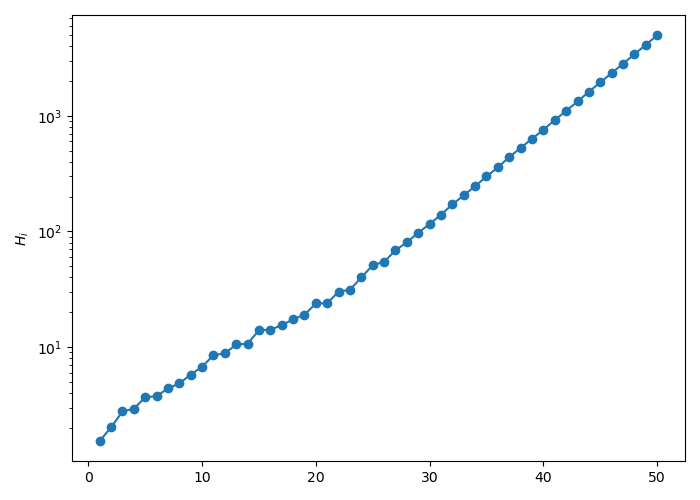}
  \caption{Example 4: Coordinate-wise smoothness constants $H_i$ of the regularizer $\Psi_\epsilon$. The $y$-axis shows $H_i$ on a logarithmic scale, and the $x$-axis indexes the coordinates after sorting by $H_i$.}
  \label{fig:rwcp_H_i}
\end{figure}

\textbf{Particle-wise Variational Subproblems:}
The variational subproblems~\eqref{eqn:var_diff_grad_prox} and~\eqref{eqn:var_WPG} are formulated directly over $\mu\in\mathcal{P}_2$. When $\mu$ is an empirical measure and takes on the form of~\eqref{eqn:mu_ensemble}, the update becomes solving the following variational subproblems for each particle $n=1,\dots,N$:
\begin{itemize}
    \item[--] {{WPG} Subproblem:} Solve for the full displacement vector $v \in \mathbb{R}^d$:
    \begin{equation}\label{eqn:particle_wpg_x}
    T_{k,n} \in \arg\min_{v \in \mathbb{R}^d} \left\{ v \cdot \nabla \frac{\delta G}{\delta \mu}[\mu_k](x_k^{(n)}) + \frac{\eta}{2} \|v\|_2^2 + V_\epsilon(x_k^{(n)} + v) \right\}.
    \end{equation}
    \item[--] {RWCP Subproblem:} Solve for the scalar increment $s \in \mathbb{R}$ in direction $e_{i_k}$:
    \begin{equation}\label{eqn:particle_rwcp}
    s_{k,n} \in \arg\min_{s \in \mathbb{R}} \left\{ s \partial_{i_k} \frac{\delta G}{\delta \mu}[\mu_k](x_k^{(n)}) + \frac{\eta_{i_k}}{2} s^2 + V_\epsilon(x_k^{(n)} + s e_{i_k}) \right\}.
    \end{equation}
\end{itemize}
Both subproblems are solved using 20 Newton iterations.

\textbf{Numerical Results:}
Figure~\ref{fig:rwcp_composite} reports the decay of the energy gap $E[\mu_k] - E_*$ and the barycenter norm $\|\frac{1}{N} \sum x_k^{(n)}\|_2$. RWCP exhibits consistently faster decay in both metrics. This confirms that the coordinate-wise proximal scheme successfully exploits landscape anisotropy through coordinate-specific parameters, even when the regularizer is non-separable.

\begin{figure}[htbp]
  \centering
  \includegraphics[width=0.495\textwidth]{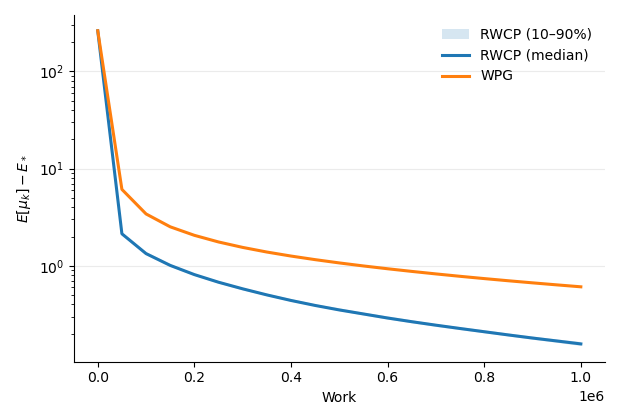}
  \includegraphics[width=0.495\textwidth]{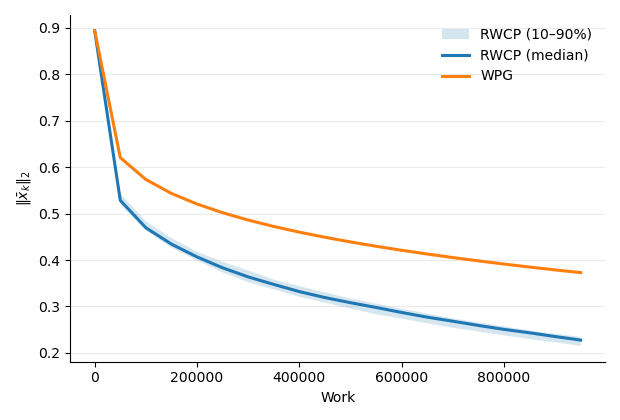}
  \caption{Example 4: RWCP vs.\ {WPG} on the composite objective $E=G+\Psi_\epsilon$. Left: energy gap $E[\mu_k]-E_*$; Right: barycenter norm. Horizontal axis: work measured in coordinate-gradient evaluations. Blue: RWCP median with $10$--$90\%$ band across $50$ runs; orange: {WPG}.}
  \label{fig:rwcp_composite}
\end{figure}

\subsection{Example 5: Two-layer Neural Network}\label{subsec:2nn}

In this final experiment, we consider two-layer neural network training. This is to train an neural network to approximate the reference function $f(x)$. Following~\cite{CB18, MMN18, SS20, DCLW22, CLTW25}, in the mean-field regime, neurons in the neural network can be viewed as drawings from a probability measure $\mu$, and the training becomes running optimization problem of the functionals
\begin{equation}\label{eqn:nn_func}
E[\mu] = \frac{1}{2}\int \bigl(g(x;\mu)-f(x)\bigr)^2\,d\pi(x) \, ,
\end{equation}
where
\begin{equation}\label{eqn:nn_gV}
g(x;\mu):=\int V(x,z)\,d\mu(z) \, , \quad\text{with}\quad V(x,z)=\alpha\,\sigma(w^\top x+b) + \beta
\end{equation}
denotes the NN generated function, with $z=(\alpha,\beta,w,b)\in\mathbb{R}\times\mathbb{R}\times\mathbb{R}^d\times\mathbb{R}$ denotes the trainable parameters (neurons) drawing from $\mu \in \mathcal{P}_2(\mathbb{R}^{d+3})$. Data $x$ is sampled from a given distribution $\pi$.

\textbf{Experimental setup.} Numerically we set the activation function as $\sigma=\tanh$. The target function has the dimension of $d=50$ and is set to be $f(x)=\tanh(w_*^\top x+0.2)+0.1$ with $(w_*)_j\propto (3/4)^{\frac{j}{2}}$ and $\|w_*\|_2=1$. The data distribution is the truncated anisotropic Gaussian $\pi=\operatorname{Law}(X\,|\,\|X\|_\infty\leq 3)$ with $X\sim\mathcal{N}(0,\Sigma)$ and $\Sigma=\operatorname{diag}(\lambda_1,\dots,\lambda_d)$, where $\lambda_j=(3/4)^{j-1}$ for $j\in[d]$. Since the reference function has a form of $\tanh$, the functional is able to attain its minimum value $E_*=0$ if $\mu$ is a Dirac measure: $\mu_*=\delta_{(1,\,0.1,\,w_*,\,0.2)}$.

In our implementation, the initial measure $\mu_0$ is supported in the bounded region $|\alpha|\leq R_\alpha$, $|\beta|\leq R_\beta$, $\|w\|_2\leq R_w$, and $|b|\leq R_b$, with $R_\alpha=R_\beta=R_b=0.3$ and $R_w=0.8$. 
We also define the working region
\begin{equation}\label{eqn:nn_region}
\mathcal{B} = \bigl\{(\alpha,\beta,w,b): |\alpha|\leq A,\ |\beta|\leq B,\ \|w\|_2\leq W,\ |b|\leq C\bigr\} \, ,
\end{equation}
where $A = 10 R_\alpha=3$, $B = 10 R_\beta = 3$, $W = 10 R_w = 8$, and $C = 10 R_b = 3$.
In all numerical runs, the particles remain inside $\mathcal{B}$.

Details of the Wasserstein gradient and smoothness analysis are deferred to Appendix~\ref{sec:nn_details}. 
In particular, the Wasserstein gradient of $E$ can be computed explicitly~\eqref{eqn:nn_grad}. One can show it is globally smooth and coordinate-smooth on $\mathcal{P}_2(\mathcal{B})$. We provide Lipschitz constants in Proposition~\ref{prop:nn_smooth}. Through this computation, we find $L = 125.033$, and $L_i$-s are plotted in Figure~\ref{fig:2nn_L_i}. 

The algorithm is initialized with $N=200$ particles sampled independently from $\alpha_n^{(0)}\sim\mathrm{Unif}[-R_\alpha,R_\alpha]$, $\beta_n^{(0)}\sim\mathrm{Unif}[-R_\beta,R_\beta]$, $b_n^{(0)}\sim\mathrm{Unif}[-R_b,R_b]$, and $w_n^{(0)}\sim\mathrm{Unif}(B_2(0,R_w))$, where $B_2(0,R_w)$ is the Euclidean ball of radius $R_w$ in $\mathbb{R}^d$. 
Integration against $\pi$ is approximated using $500$ i.i.d.\ samples. 
We compare WGD run for $2000$ iterations with RWCD run for $2000d$ coordinate updates.

\begin{figure}[htbp]
  \centering
  \includegraphics[width=0.6\textwidth]{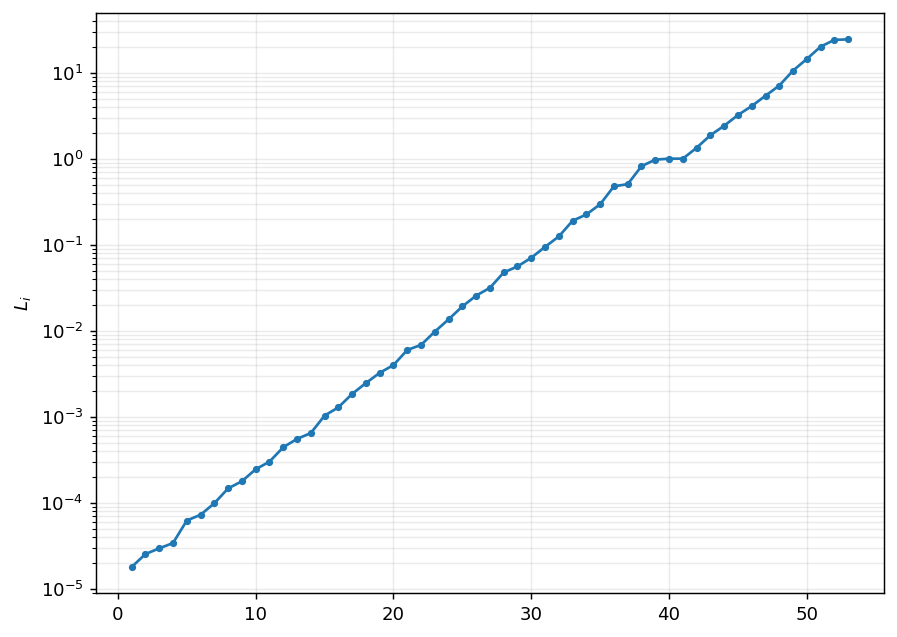}
  \caption{Example 5: Coordinate-wise smoothness constants $L_i$ of $E$ on $\mathcal{P}_2(\mathcal{B})$. The $y$-axis shows $L_i$ on a logarithmic scale, and the $x$-axis indexes the coordinates after sorting by $L_i$.}
  \label{fig:2nn_L_i}
\end{figure}

\textbf{Numerical results.}
Figure~\ref{fig:nn_rwcd} reports the decay of the energy gap $E[\mu_k]-E_*$ and the running best gradient norm $\min_{0\le i<k}\|\nabla_{\mathbb{W}}E[\mu_i]\|_{\mu_i}^2$. 
RWCD exhibits consistently faster decay in both metrics.
This suggests that the coordinate-wise scheme can effectively exploit anisotropy in the optimization landscape through coordinate-dependent parameters, even in this nonconvex neural network setting.

\begin{figure}[htbp]
  \centering
  \includegraphics[width=0.47\textwidth]{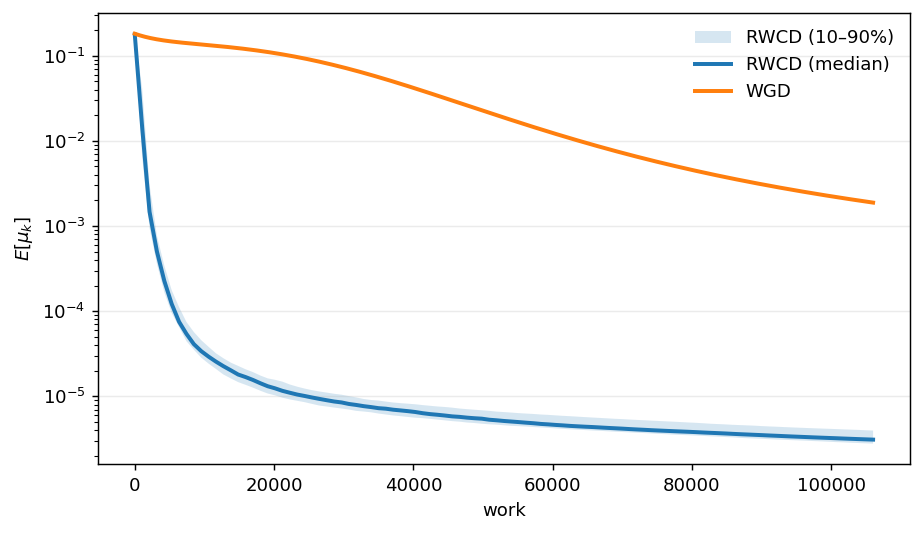}
  \includegraphics[width=0.47\textwidth]{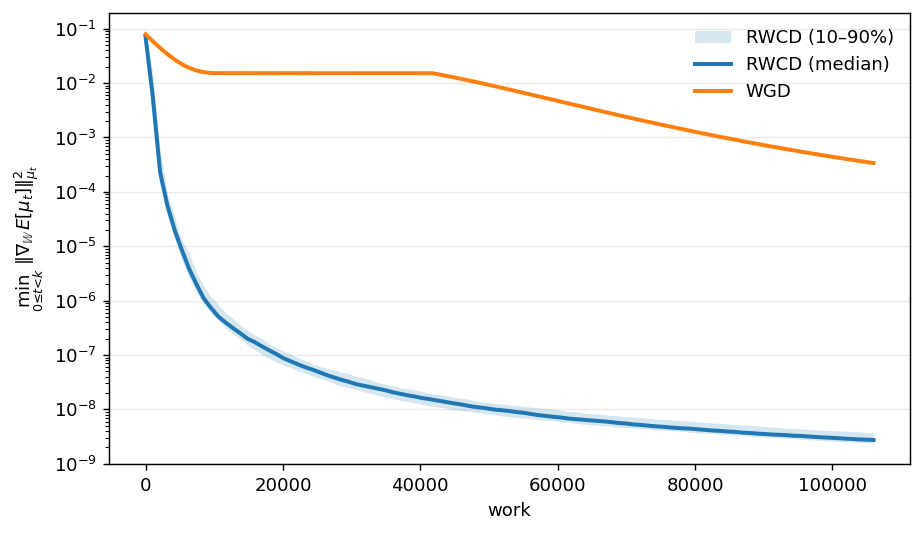}
  \caption{Example 5: RWCD vs.\ WGD on the neural network objective~\eqref{eqn:nn_func}. Left: energy gap $E[\mu_k]-E_*$. Right: running best gradient norm $\min_{0\leq i<k}\|\nabla_{\mathbb{W}}E[\mu_i]\|_{\mu_i}^2$. The horizontal axis measures work in coordinate-gradient evaluations. Blue: RWCD median with $10$--$90\%$ band over $50$ runs. Orange: WGD.}
  \label{fig:nn_rwcd}
\end{figure}

\section{Conclusion}\label{sec:conclusion}

In this work, we investigated coordinate descent algorithms in the space of probability measures $\mathcal{P}_2(\mathbb{R}^d)$. 
Inspired by the efficiency of Random Coordinate Descent (RCD) and Random Coordinate Proximal{-Gradient} (RCP) in Euclidean spaces, we introduced their counterparts in the Wasserstein setting: Random Wasserstein Coordinate Descent (RWCD) and Random Wasserstein Coordinate Proximal{-Gradient} (RWCP).

The core of our technical development lies in the realization that ``coordinates" in $\mathbb{R}^d$ correspond to marginal-distribution variations in $\mathcal{P}_2$. 
Consequently, updating coordinates to optimize a functional translates to updating these marginal distributions.
Algorithmically, this is achieved by projecting the desired pushforward maps onto individual coordinate directions. 
This projection is implemented either through direct restriction, as in RWCD, or by solving coordinate-restricted variational subproblems, as in RWCP.

Mirroring the advantages of RCD and RCP over standard Gradient Descent, particularly in high-dimensional and anisotropic landscapes, we established convergence guarantees that generalize classical Euclidean results. We provided rigorous complexity bounds for three geometric regimes: $\mathcal{O}(1/k)$ convergence to stationary points in non-convex settings, sublinear $\mathcal{O}(1/k)$ rates for geodesically convex functionals, and exponential rates under the Polyak-{\L}ojasiewicz (PL) condition or $m$-strong geodesic convexity. 
Our numerical experiments confirm these theoretical findings, demonstrating that the proposed coordinate-wise schemes significantly reduce computational costs in ill-conditioned regimes.

While completing the picture for randomized coordinate optimization solvers, the discussions presented here open the door for other coordinate-based methods in the Wasserstein space. 
In particular, examining the counterpart of Block Coordinate Descent (BCD) appears highly feasible within this framework. 
Furthermore, while we have relied on particle methods to represent probability measures numerically, other forms of approximation, such as kernel-based or grid-based methods, could be deployed. 
From a practical perspective, the scalability of our framework suggests exciting applications in large-scale machine learning where high-dimensional efficiency is paramount.

\section*{Acknowledgement}

Q.~Li thanks Professors Francois Golse and Stephen J. Wright for insightful discussions. Both Q.~Li and Y.~Xu are supported by NSF-DMS-2308440. Q.~Li is further supported by an ONR Award AWD-100997.

\newpage

\appendix

\section{Main proof for RWCD and RWCP}
We separate the discussion for RWCD and RWCP.

\subsection{Proof for RWCD}\label{sec:RWCD_proof}

In this section, we give the proof of the descent Lemma~\ref{lem:RCD_one_step} for RWCD.

\begin{proof}[Proof of Lemma~\ref{lem:RCD_one_step}]
Let $i_k$ be the coordinate sampled at iteration $k$. The update is defined by the map $T_{k} = - \gamma_{i_k} \nabla_{\mathbb{W}} E[\mu_k]$. Applying Proposition~\ref{prop:coord_wise_smooth_1st_order} with $U_i T = - \gamma_{i_k} U_{i_k} \nabla_{\mathbb{W}} E[\mu_k]$, we have:
\begin{align*}
E[\mu_{k+1}] 
&\leq E[\mu_k] - \gamma_{i_k} \langle \nabla_{\mathbb{W}}E[\mu_k], U_{i_k} \nabla_{\mathbb{W}}E[\mu_k] \rangle_{\mu_k} + \frac{L_{i_k}\gamma_{i_k}^2}{2} \left\Vert U_{i_k} \nabla_{\mathbb{W}}E[\mu_k] \right\Vert^2_{\mu_k} \\
&= E[\mu_k] - \left( \gamma_{i_k} - \frac{L_{i_k} \gamma_{i_k}^2}{2} \right) \left\Vert U_{i_k} \nabla_{\mathbb{W}}E[\mu_k] \right\Vert_{\mu_k}^2 \, .
\end{align*}
Substituting our choice of step size $\gamma_{i_k} = \frac{1}{L_{i_k}}$, we get:
\[
E[\mu_{k+1}] \leq E[\mu_k] - \frac{1}{2L_{i_k}}\left\Vert U_{i_k} \nabla_{\mathbb{W}}E[\mu_k] \right\Vert_{\mu_k}^2 \, .
\]
This implies $E[\mu_{k+1}] \leq E[\mu_k]$. Taking the expectation with respect to the random variable $i_k$:
\begin{align*}
\mathbb{E}\left[E[\mu_{k+1}] - E[\mu_k] \mid \mu_k \right] 
&= - \sum_{i=1}^d p_i \cdot \frac{1}{2 L_i} \left\Vert U_i \nabla_{\mathbb{W}}E[\mu_k] \right\Vert_{\mu_k}^2 \\
&= - \sum_{i=1}^d \left( \frac{L_i}{L_{\mathrm{sum}}} \right) \frac{1}{2 L_i} \left\Vert U_i \nabla_{\mathbb{W}} E[\mu_k] \right\Vert_{\mu_k}^2 \\
&= - \frac{1}{2L_{\mathrm{sum}}} \sum_{i=1}^d \left\Vert U_i \nabla_{\mathbb{W}} E[\mu_k] \right\Vert_{\mu_k}^2 \\
&= - \frac{1}{2 L_{\mathrm{sum}}} \left\Vert \nabla_{\mathbb{W}} E[\mu_k] \right\Vert_{\mu_k}^2 \, .
\end{align*}
This completes the proof.
\end{proof}

\subsection{Proof for RWCP}\label{sec:RWCP_proof}
A primary distinction between the analysis of RWCP and RWCD lies in the inclusion of the variational subproblem~\eqref{eqn:var_diff_grad_prox}, which introduces significant complexity to the convergence proof. To analyze this subproblem implicitly, we observe that its optimal solution must satisfy the first-order optimality conditions of the objective functional. This necessitates a careful characterization of the functional derivatives involved. While the derivatives for the linear term $\langle \nabla_{\mathbb{W}} G[\mu], U_{i} T \rangle_{\mu}$ and the quadratic penalty $\| T \|_{\mu}^2$ are straightforward, , the regularizer $\Psi$ requires a more delicate treatment.

Specifically, we consider the perturbation of $\Psi$ under the push-forward map $(\operatorname{Id} + U_{i} T)_{\#}$. For any $T \in L_2(\mu)$, we define the functional derivative of $\Psi [ (\operatorname{Id} + U_{i} T)_{\#} \mu ]$ with respect to $T$ at $T_0$ as the operator $\frac{\delta\Psi}{\delta T} [T_0]$ that satisfies the following first-order expansion for a small perturbation $\delta T$:
\begin{equation}\label{def:function_der_T_Psi}
\Psi[(\operatorname{Id} + U_{i} (T_0 + \delta T))_{\#} \mu] - \Psi[(\operatorname{Id} + U_{i} T_0)_{\#} \mu] = \int \frac{\delta\Psi}{\delta T} [T_0](x) \cdot \delta T(x) \, d\mu(x) + o(\|\delta T\|_\mu)\,.
\end{equation}

This derivative can be made explicit as is summarized in the following lemma.
\begin{lemma}\label{lem:grad_var_der_func}
Let $\Psi : \mathcal{P}_2(\mathbb{R}^d) \to \mathbb{R}$ be differentiable in the sense that its first variation $\frac{\delta \Psi}{\delta \mu}[\nu]$ exists for $\nu\in\mathcal P_2(\mathbb R^d)$ and $\nabla_{\mathbb W} \Psi[\nu]=\nabla\frac{\delta \Psi}{\delta \mu}[\nu]$ is well-defined.
Let $\mu \in \mathcal{P}_2(\mathbb{R}^d)$, $U \in \mathbb{R}^{d \times d}$, $T_0 \in \mathcal{L}_2(\mu;\mathbb{R}^d)$ be given.
Then \begin{equation}
\frac{\delta\Psi}{\delta T} [T_0] =  U^\top \nabla_{\mathbb{W}} \Psi\left[\left(\operatorname{Id} + U T_0 \right)_{\#}\mu\right]\left( x + U T_0(x)\right) \, .
\end{equation}
\end{lemma}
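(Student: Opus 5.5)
The plan is a chain-rule computation: view the perturbed measure as a pushforward along a curve in $\mathcal{P}_2$, then read off the derivative from the definition of the Wasserstein gradient. Set $\nu_0 := (\operatorname{Id}+UT_0)_\#\mu$ and, for $t\in[0,1]$, $\nu_t := (\operatorname{Id}+UT_0+tU\delta T)_\#\mu$. The difference in \eqref{def:function_der_T_Psi} is $\Psi[\nu_1]-\Psi[\nu_0]$.

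\textbf{Step 1: a first-order expansion along the curve.} Write the difference as
\[
\Psi[\nu_1]-\Psi[\nu_0] = \int \Big(\frac{\delta\Psi}{\delta\mu}[\nu_0]\big(x+UT_0(x)+U\delta T(x)\big)-\frac{\delta\Psi}{\delta\mu}[\nu_0]\big(x+UT_0(x)\big)\Big)\,d\mu(x) + R,
\]
where $R$ is the second-order remainder from linearizing $\Psi$ in the measure argument. This is the standard first-variation expansion $\Psi[\nu_1]-\Psi[\nu_0]=\int \frac{\delta\Psi}{\delta\mu}[\nu_0]\,d(\nu_1-\nu_0)+R$, combined with the pushforward identity $\int \phi\, d(S_\#\mu)=\int \phi\circ S\, d\mu$.

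\textbf{Step 2: Taylor-expand the potential.} Expand $\frac{\delta\Psi}{\delta\mu}[\nu_0]$ in space around $y=x+UT_0(x)$. The integrand becomes
\[
\nabla\frac{\delta\Psi}{\delta\mu}[\nu_0](y)\cdot U\delta T(x)+o(|U\delta T|).
\]
By hypothesis the gradient here is $\nabla_{\mathbb W}\Psi[\nu_0](y)$, and moving $U$ across the dot product gives
\[
\int U^\top\nabla_{\mathbb W}\Psi[\nu_0]\big(x+UT_0(x)\big)\cdot\delta T(x)\,d\mu(x).
\]
This is exactly the claimed form.

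An equivalent and cleaner route avoids first variations altogether. Note that $t\mapsto\nu_t$ is an absolutely continuous curve. Its velocity at $t=0$, compatible with the continuity equation, is the field $v_0$ defined by $v_0(y)=U\delta T(x)$ at $y=x+UT_0(x)$; more precisely, the barycentric projection of $U\delta T$ onto $\nu_0$ (the conditional expectation of $U\delta T$ given $(\operatorname{Id}+UT_0)(x)=y$). Definition~\ref{def:wass_grad} then gives
\[
\frac{d}{dt}\Psi[\nu_t]\Big|_{t=0}=\langle\nabla_{\mathbb W}\Psi[\nu_0],v_0\rangle_{\nu_0}.
\]
Since $\nabla_{\mathbb W}\Psi[\nu_0]\circ(\operatorname{Id}+UT_0)$ is measurable with respect to $(\operatorname{Id}+UT_0)(x)$, the projection can be undone, so this equals $\int \nabla_{\mathbb W}\Psi[\nu_0](x+UT_0(x))\cdot U\delta T(x)\,d\mu(x)$. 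This identifies the Gâteaux derivative.

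\textbf{Step 3: upgrade to $o(\|\delta T\|_\mu)$.} This is the main obstacle: the remainder must be uniform in the direction of $\delta T$, not merely small along each fixed direction. I would write
\[
\Psi[\nu_1]-\Psi[\nu_0]=\int_0^1\frac{d}{dt}\Psi[\nu_t]\,dt
\]
and compare $\nabla_{\mathbb W}\Psi[\nu_t]\circ(\operatorname{Id}+UT_0+tU\delta T)$ with $\nabla_{\mathbb W}\Psi[\nu_0]\circ(\operatorname{Id}+UT_0)$ in $L^2(\mu)$. Cauchy–Schwarz bounds the remainder by $\|\delta T\|_\mu$ times this difference. The difference tends to zero as $\|\delta T\|_\mu\to0$ by continuity of the pulled-back gradient, and this continuity is exactly what the smoothness condition \eqref{eqn:smooth_pullback} provides (applied at $\nu_0$). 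When $\Psi$ is $H$-smooth, that bound even gives an $O(\|\delta T\|_\mu^2)$ remainder. I would assume this regularity as part of ``differentiable'' in the lemma, consistent with how the lemma is later used under $H_i$-smoothness.
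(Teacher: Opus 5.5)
Your Steps 1--2 are exactly the paper's proof: it linearizes $\Psi$ via its first variation at $\nu_0=(\operatorname{Id}+UT_0)_\#\mu$, changes variables through the pushforward, Taylor-expands $\frac{\delta\Psi}{\delta\mu}[\nu_0]$ around $x+UT_0(x)$, and moves $U$ across the inner product, writing equalities throughout and never controlling the remainders; so your velocity-field identification and your Step 3 add rigor rather than giving a different route. One caveat on Step 3: if $\operatorname{Id}+UT_0$ is not injective, $\nu_t$ need not be a map-pushforward of $\nu_0$, so \eqref{eqn:smooth_pullback} applied at $\nu_0$ does not literally supply the continuity you need, and you should assume it directly (as you propose), e.g.\ as a Lipschitz bound on $T\mapsto\nabla_{\mathbb W}\Psi[(\operatorname{Id}+T)_\#\mu]\circ(\operatorname{Id}+T)$ in $L^2(\mu)$, which all the examples in Proposition~\ref{prop:examples_coord_smooth} satisfy.
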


\begin{proof}
Consider the perturbation of $T \mapsto \Psi\left[\left( \operatorname{Id} + U T \right)_\# \mu \right]$ by $\delta T$: $$\begin{aligned}
& \Psi\left[\left( \operatorname{Id} + U \left( T_0  + \delta T \right)\right)_{\#} \mu \right] - \Psi\left[\left( \operatorname{Id} + U T_0 \right)_{\#} \mu \right] \\
= & \int_{\mathbb{R}^d} \frac{\delta \Psi}{\delta \mu}\left[\left(\operatorname{Id} + U T_0 \right)_{\#}\mu \right]\left( x \right) d\left[\left( \operatorname{Id} + U\left( T_0 + \delta T \right)\right)_{\#}\mu - \left( \operatorname{Id} + U T_0 \right)_{\#}\mu \right](x) \\
= & \int_{\mathbb{R}^d} \left(\frac{\delta \Psi}{\delta \mu}\left[\left(\operatorname{Id} + U T_0 \right)_{\#}\mu\right]\left( x + U \left(T_0(x) + \delta T(x) \right)\right) - \frac{\delta \Psi}{\delta \mu}\left[\left(\operatorname{Id} + U T_0 \right)_{\#}\mu\right]\left( x + U T_0(x)\right) \right) d \mu(x) \\
= &  \left\langle \nabla\frac{\delta \Psi}{\delta \mu}\left[\left(\operatorname{Id} + U T_0 \right)_{\#}\mu\right]\left( x + U T_0(x)\right), U \delta T(x) \right\rangle_{\mu} \\
= & \left\langle U^\top  \nabla\frac{\delta \Psi}{\delta \mu}\left[\left(\operatorname{Id} + U T_0 \right)_{\#}\mu\right]\left( x + U T_0(x)\right), \delta T(x) \right\rangle_{\mu} \,.
\end{aligned}$$ 
Comparing this with the definition~\eqref{def:function_der_T_Psi}, we have
$$\frac{\delta\Psi}{\delta T} [T_0]=  U^\top \nabla\frac{\delta \Psi}{\delta \mu}\left[\left(\operatorname{Id} + U T_0 \right)_{\#}\mu\right]\left( x + U T_0(x)\right) \, .$$
This completes the proof of the lemma.
\end{proof}

We now give a proof of Lemma~\ref{lem:descent_lemma_grad-prox} below.

\begin{proof}[Proof of Lemma~\ref{lem:descent_lemma_grad-prox}]
By the optimality of $T_k$ in~\eqref{eqn:var_diff_grad_prox}, we have \begin{equation}\label{eqn:var_minimizer_0th_cond}
\left\langle \nabla_{\mathbb{W}} G[\mu_k](x), U_{i_k} T_k(x) \right\rangle_{\mu_k}  + \frac{\eta_{i_k}}{2} \left\Vert T_k(x) \right\Vert_{\mu_k}^2 + \Psi\left[ \mu_{k+1} \right]  \leq \Psi\left[ \mu_k \right] \, .
\end{equation}
Still by optimality of $T_k$ in~\eqref{eqn:var_diff_grad_prox}, $T_k$ must be a critical point of the variational problem (by the first order optimality condition), so by Lemma~\ref{lem:grad_var_der_func}, we obtain \begin{equation}\label{eqn:crit_var_grad_prox}
U_{i_k}^\top \nabla_{\mathbb{W}} G[\mu_k](x) + \eta_{i_k} T_k(x) + U_{i_k}^\top \nabla_{\mathbb{W}} \Psi \left[\left(\operatorname{Id} + U_{i_k}T_k \right)_{\#}\mu_k\right]\left( x + U_{i_k} T_k(x)\right) = 0 \, .
\end{equation}
Now \begin{equation}\label{eqn:grad_prox_grad_movement_bd}
\begin{aligned}
& \left\Vert U_{i_k}^\top \nabla_{\mathbb{W}} E[\mu_k]\right\Vert_{\mu_{k}}^2 \\
= & \left\Vert U_{i_k}^\top \left( \nabla_{\mathbb{W}} G[\mu_k] + \nabla_{\mathbb{W}} \Psi[\mu_k] \right)\right\Vert_{\mu_{k}}^2  \\
= & \left\Vert U_{i_k}^\top \left(\nabla_{\mathbb{W}} \Psi[\mu_k](x) - \nabla_{\mathbb{W}} \Psi[\mu_{k+1}](x + U_{i_k}T_k(x))\right) - \eta_{i_k} T_k(x)\right\Vert_{\mu_{k}}^2  \\
\leq & 2 \left\Vert U_{i_k}^\top \left(\nabla_{\mathbb{W}} \Psi[\mu_k](x) - \nabla_{\mathbb{W}} \Psi[\mu_{k+1}](x + U_{i_k}T_k(x))\right)\right\Vert_{\mu_{k}}^2 +  2 \left\Vert \eta_{i_k} T_k(x)\right\Vert_{\mu_{k}}^2  \\
\leq & 2\left(H_{i_k}^2 + \eta_{i_k}^2 \right) \cdot \left\Vert T_k \right\Vert_{\mu_{k}}^2 \, ,
\end{aligned}
\end{equation} where the second equality used~\eqref{eqn:crit_var_grad_prox}; the first inequality used Young's; and the last inequality used the assumption that $\Psi$ is $H_{i_k}$-smooth along the $i_k$-th coordinate.
Then \begin{equation}\label{eqn:grad_prox_gap_movement}
\begin{aligned}
E[\mu_{k+1}] - E[\mu_k]  = & G[\mu_{k+1}] - G[\mu_k] + \Psi[\mu_{k+1}] - \Psi[\mu_k] \\
\leq & \int_{\mathbb{R}^{d}} \left\langle \nabla_{\mathbb{W}} G[\mu_k](x), U_{i_k} T_k(x) \right\rangle d \mu_{k}(x) + \frac{L_{i_k}}{2}\int_{\mathbb{R}^d} \left\Vert U_{i_k} T_k(x) \right\Vert^2 d \mu_{k}(x) \\
& + \Psi[\mu_{k+1}] - \Psi[\mu_k] \\
\leq & - \frac{\eta_{i_k} - L_{i_k}}{2}\int_{\mathbb{R}^d} \left\Vert T_k(x) \right\Vert_2^2 d \mu_k(x) \, ,
\end{aligned}
\end{equation} where in the first inequality we used Proposition~\ref{prop:coord_wise_smooth_1st_order} on $G$; and in the last inequality we used~\eqref{eqn:var_minimizer_0th_cond}.
The choice of $\eta_i$ in~\eqref{eqn:choice_grad-prox} guarantees $E[\mu_{k+1}] \leq E[\mu_k]$. Taking expectations, we obtain \begin{equation}
\begin{aligned}
& \mathbb{E}\left[ E\left(\mu_{k+1}\right) \left\vert\right. \mu_k \right] - E\left[\mu_{k}\right] \\
\leq & -\frac{1}{2} \mathbb{E}\left[ \left(\eta_{i_k} - L_{i_k}\right)\left\Vert T_k \right\Vert_{\mu_{k}}^2 \left\vert\right. \mu_k \right] \\
\leq & - \frac{1}{4} \mathbb{E}\left[ \frac{\eta_{i_k} - L_{i_k}}{H_{i_k}^2 + \eta_{i_k}^2}\left\Vert U_{i_k}^\top \nabla_{\mathbb{W}} E[\mu_k]\right\Vert_{\mu_{k}}^2 \left\vert\right. \mu_k \right] \\
= & -\frac{1}{4} \sum_{i=1}^d \frac{L_i + \sqrt{H_{i}^2 + L_{i}^2}}{\sum_{j=1}^d \left( L_j + \sqrt{H_{j}^2 + L_{j}^2} \right)} \cdot \frac{1}{2 \left( L_i + \sqrt{H_i^2 + L_i^2} \right)} \left\Vert U_{i_k}^\top \nabla_{\mathbb{W}} E[\mu_k]\right\Vert_{\mu_{k}}^2  \\
= & -\frac{1}{8 \sum_{j=1}^d \left( L_j + \sqrt{H_j^2 + L_j^2} \right)} \left\Vert \nabla_{\mathbb{W}} E[\mu_k]\right\Vert_{\mu_{k}}^2 \, ,
\end{aligned}
\end{equation} where in the first inequality we applied~\eqref{eqn:grad_prox_gap_movement}; in the second inequality we used~\eqref{eqn:grad_prox_grad_movement_bd}; and in the first inequality we used the choices of $p_i$, $\eta_i$ specified in~\eqref{eqn:choice_grad-prox}.

\end{proof}

\section{Proof of Auxiliary Results}
\subsection{Proof of Proposition~\ref{prop:coord_wise_smooth_1st_order}}\label{sec:lem:coord_wise_smooth_1st_order_proof}
Proposition~\ref{prop:coord_wise_smooth_1st_order} is a key lemma that plays an important role in the convergence proof for RWCD and RWCP.

\begin{proof}
For any $t \in [0,1]$, by the coordinate smoothness definition (Definition~\ref{def:coord_wise_smooth}), we have:
\[
\left\Vert \nabla_{\mathbb{W}} E[(\operatorname{Id} + t U_i T)_\# \mu](x + t U_i T(x)) - \nabla_{\mathbb{W}} E[\mu](x) \right\Vert_{\mu} \leq t L_i \left\Vert U_i T(x) \right\Vert_{\mu} \, .
\]
By the chain rule for the functional $E$ along the path $(\operatorname{Id} + t U_i T)_\# \mu$, we obtain:
\begin{align*}
& E[(\operatorname{Id} + U_i T)_\# \mu] - E[\mu] \\
= & \int_0^1  \left\langle \nabla_{\mathbb{W}} E[(\operatorname{Id} + t U_i T)_\# \mu](x + t U_i T(x)),  U_i T(x) \right\rangle_{\mu} \, dt \\
= & \left\langle  \nabla_{\mathbb{W}} E[\mu](x), U_i T(x) \right\rangle_{\mu} + \int_0^1  \left\langle \nabla_{\mathbb{W}} E[(\operatorname{Id} + t U_i T)_\# \mu](x_t) - \nabla_{\mathbb{W}} E[\mu](x), U_i T(x) \right\rangle_{\mu} \, dt \\
\leq & \left\langle  \nabla_{\mathbb{W}} E[\mu](x), U_i T(x) \right\rangle_{\mu} + \int_0^1  \left\Vert \nabla_{\mathbb{W}} E[(\operatorname{Id} + t U_i T)_\# \mu](x_t) - \nabla_{\mathbb{W}} E[\mu](x) \right\Vert_{\mu} \cdot \left\Vert U_i T(x) \right\Vert_{\mu} \, dt \\
\leq & \left\langle  \nabla_{\mathbb{W}} E[\mu](x), U_i T(x) \right\rangle_{\mu} + \int_0^1 t L_i \left\Vert U_i T(x) \right\Vert_{\mu}^2 \, dt \\
= & \left\langle  \nabla_{\mathbb{W}} E[\mu](x), U_i T(x) \right\rangle_{\mu} + \frac{L_i \left\Vert U_i T(x) \right\Vert_{\mu}^2}{2} \, ,
\end{align*}
where we denoted $x_t = x + t U_i T(x)$. The first inequality follows from Cauchy-Schwarz and the second from the coordinate-wise smoothness assumption.
\end{proof}

\subsection{Proof of Lemma~\ref{lem:sc_implies_PL}}\label{sec:proof_lem:sc_implies_PL}

Here we give a proof of Lemma~\ref{lem:sc_implies_PL} that links strong convexity with PL condition.

\begin{proof}
For any $\mu \in \mathcal{P}_2(\mathbb{R}^d)$, let $\pi_* \in \Pi_o(\mu_*, \mu)$.
The $m$-strong convexity of $E$ implies $$\begin{aligned}
E[\mu] - E[\mu_*] & \leq - \int \langle \nabla_{\mathbb{W}} E[\mu](y) , x - y\rangle d\pi_*(x,y) - \frac{m}{2} \mathbb{W}_2^2(\mu, \mu_*) \\
& = - \int \langle \nabla_{\mathbb{W}} E[\mu](y) , x - y\rangle d\pi_*(x,y) - \frac{m}{2} \int \Vert x - y\Vert^2 d\pi_*(x,y) \\
& \leq \frac{1}{2m} \int \Vert \nabla_{\mathbb{W}} E[\mu](y) \Vert_2^2 d \mu(y) \, ,
\end{aligned}$$
where in the first equality we used our assumption that $\pi_*$ is an optimal transport plan, and in the second inequality we applied Jensen's inequality.
\end{proof}

\subsection{Proof of Proposition~\ref{prop:examples_coord_smooth}}\label{sec:proof_prop:examples_coord_smooth}
This proposition gives four examples of energy functionals that are smooth and coordinate-wise smooth.
\begin{proof}
We prove the directional smoothness only in this proof. The overall regularity can be obtained in the same way. Fix $i\in[d]$, $\mu\in\mathcal P_2(\mathbb R^d)$, and $T\in L^2(\mu;\mathbb R^d)$, we denote $\mu_T=(\operatorname{Id}+U_iT)_\#\mu$.

\medskip\noindent\textbf{(a) Potential energy.}
For $E[\mu]=\int V\,d\mu$, one has $\frac{\delta E}{\delta\mu}[\mu](x)=V(x)$ and hence \begin{equation}\label{eqn:grad_potential}
\nabla_{\mathbb W} E[\mu](x)=\nabla V(x) \, .
\end{equation}
Therefore
\begin{align*}
&\Big\|\,U_i\big(\nabla_{\mathbb W} E[\mu_T]\circ(\operatorname{Id}+U_iT)-\nabla_{\mathbb W} E[\mu]\big)\Big\|_{\mu}^2 \\
= & \int_{\mathbb R^d}
\Big\|U_i\big(\nabla V(x+U_iT(x))-\nabla V(x)\big)\Big\|^2\,d\mu(x) \\
\leq & \int_{\mathbb R^d} L_i^2 \|U_i T(x)\|^2\,d\mu(x) \\
= & L_i^2\,\|U_i T\|_{\mu}^2 \, ,
\end{align*}
where the third line uses the $L_i$-smoothness of $V$ along the $i$-th coordinate.

\medskip\noindent\textbf{(b) Function of the mean.}
Let $m(\mu):=\int x\,d\mu(x)$ and $E[\mu]=\phi(m(\mu))$.
A direct first-variation computation yields \begin{equation}\label{eqn:grad_func_mean}
\frac{\delta E}{\delta\mu}[\mu](x)=\big\langle \nabla\phi(m(\mu)),x\big\rangle,
\qquad
\nabla_{\mathbb W} E[\mu](x)=\nabla\phi(m(\mu)),
\end{equation}
so $\nabla_{\mathbb W} E[\mu]$ is constant in $x$.
Moreover,
\[
m(\mu_T)=\int_{\mathbb R^d} (x+U_iT(x))\,d\mu(x)=m(\mu)+\int_{\mathbb R^d} U_iT(x)\,d\mu(x).
\]
Hence,
\begin{align*}
&\Big\|\,U_i\big(\nabla_{\mathbb W} E[\mu_T]\circ(\operatorname{Id}+U_iT)-\nabla_{\mathbb W} E[\mu]\big)\Big\|_{\mu}^2 \\
= &\int_{\mathbb R^d} \Big\|U_i\big(\nabla\phi(m(\mu_T))-\nabla\phi(m(\mu))\big)\Big\|^2\,d\mu(x) \\
\leq & L_i^2 \int_{\mathbb R^d} \Big\|U_i \big( m(\mu_T)-m(\mu) \big) \Big\|^2\,d\mu(x) \\
= & L_i^2 \Big\|U_i \big( m(\mu_T)-m(\mu) \big) \Big\|^2 \\
= & L_i^2 \Big\|U_i \big( \int_{\mathbb{R}^d} (x + U_i T(x)) d \mu(x) - \int_{\mathbb{R}^d} x d \mu(x) \big) \Big\|^2 \\
= & L_i^2\,\Big\|\int_{\mathbb R^d} U_iT(x)\,d\mu(x)\Big\|^2 \\
\leq & L_i^2 \int_{\mathbb R^d}\|U_iT(x)\|^2\,d\mu(x) \\
= & L_i^2\,\|U_iT\|_{\mu}^2 \, ,
\end{align*}
where the third line uses the $L_i$-smoothness of $\phi$ along the $i$-th coordinate; and the seventh line comes from Jensen's inequality.

\medskip\noindent\textbf{(c) Interaction energy.}
For $E[\mu]=\frac{1}{4}\iint W(x-y)\,d\mu(x)d\mu(y)$, we have \begin{equation}\label{eqn:grad_interaction}
\frac{\delta E}{\delta\mu}[\mu](x)=\frac{1}{2}\int_{\mathbb R^d} W(x-y)\,d\mu(y),
\qquad
\nabla_{\mathbb W} E[\mu](x)=\frac{1}{2}\int_{\mathbb R^d}\nabla W(x-y)\,d\mu(y).
\end{equation}
Using the pushforward relation $\mu_T=(\operatorname{Id}+U_iT)_\#\mu$, for $x\sim\mu$ we have
\[
\nabla_{\mathbb W} E[\mu_T](x+U_iT(x))
=\frac{1}{2}\int_{\mathbb R^d}\nabla W\big((x+U_iT(x))-(y+U_iT(y))\big)\,d\mu(y).
\]
Define
\[
\Delta(x) := U_i\Big(\nabla_{\mathbb W} E[\mu_T]\circ(\operatorname{Id}+U_iT)-\nabla_{\mathbb W} E[\mu]\Big)(x).
\]
Then, writing $T_{x,y}:=x-y$, we obtain
\begin{align*}
\|\Delta(x)\|
= & \Big\|\int_{\mathbb R^d}
U_i\Big(\nabla W\big(T_{x,y}+U_i(T(x)-T(y))\big)-\nabla W(T_{x,y})\Big)\,d\mu(y)\Big\| \\
\leq & \int_{\mathbb R^d}
\Big\|U_i\Big(\nabla W\big(T_{x,y}+U_i(T(x)-T(y))\big)-\nabla W(T_{x,y})\Big)\Big\|\,d\mu(y) \\
\leq & \int_{\mathbb R^d} L_i\,\|U_i(T(x)-T(y))\|\,d\mu(y) \\
\leq & L_i\Big(\|U_iT(x)\|+\int_{\mathbb R^d}\|U_iT(y)\|\,d\mu(y)\Big) \, ,
\end{align*}
where the second and the fourth lines use the triangle inequality; and the third line uses the $L_i$-smoothness of $W$ along the $i$-th coordinate.

Taking the square and applying Jensen's inequality, we obtain
$\big(\int \|U_iT(y)\|\,d\mu(y)\big)^2\le \int \|U_iT(y)\|^2\,d\mu(y)$:
\begin{align*}
\|\Delta(x)\|^2
\leq & 2L_i^2\|U_iT(x)\|^2
+2L_i^2\Big(\int_{\mathbb R^d}\|U_iT(y)\|\,d\mu(y)\Big)^2 \\
\leq & 2L_i^2\|U_iT(x)\|^2
+2L_i^2\int_{\mathbb R^d}\|U_iT(y)\|^2\,d\mu(y).
\end{align*}
Integrating over $x\sim\mu$ gives
\begin{align*}
\|\Delta\|_{\mu}^2
= & \int_{\mathbb R^d}\|\Delta(x)\|^2\,d\mu(x) \\
\leq & 2L_i^2\int_{\mathbb R^d}\|U_iT(x)\|^2\,d\mu(x)
+2L_i^2\int_{\mathbb R^d}\|U_iT(y)\|^2\,d\mu(y) \\
= & 4L_i^2\,\|U_iT\|_{\mu}^2.
\end{align*}
Therefore \eqref{eqn:coord_smooth_pullback} holds for the interaction energy with constant $L_i$.

\medskip\noindent\textbf{(d) MMD-type kernel energy.}
For $\sigma:=\mu-\nu$ and
$E[\mu]=\frac{1}{2}\iint k(x-y)\,d\sigma(x)d\sigma(y)$, we have \begin{equation}\label{eqn:grad_MMD}
\frac{\delta E}{\delta\mu}[\mu](x)=\int_{\mathbb R^d} k(x-y)\,d\sigma(y),
\qquad
\nabla_{\mathbb W} E[\mu](x)=\int_{\mathbb R^d}\nabla k(x-y)\,d\sigma(y).
\end{equation}
Proceeding as in part (c) with $k$ in place of $W$ and $\sigma$ in place of $\mu$, we obtain for $x\sim\mu$ the bound
\[
\|\Delta(x)\|\leq L_i\Big(\|U_iT(x)\|+\int_{\mathbb R^d}\|U_iT(y)\|\,d|\sigma|(y)\Big),
\]
where $|\sigma|$ is the total variation measure.
Since $\nu$ is fixed and $\mu$ is a probability measure, using $|\sigma|(\mathbb R^d)\leq 2$ and $\int \|U_iT\|^2\,d|\sigma|\le 2\int \|U_iT\|^2\,d\mu$. With the same process, we apply Young's and Jensen as was done in part (c), we obtain
\[
\|\Delta\|_{\mu}^2 \leq 16L_i^2\,\|U_iT\|_{\mu}^2,
\]
so \eqref{eqn:coord_smooth_pullback} holds for the MMD-type kernel energy with constant $4L_i$.
\end{proof}

\section{Convergence of RCP in Euclidean Space}\label{sec:proof_rcp_euclidean}

In this section, we provide the proof for the convergence rate of RCP summarized in Section~\ref{sec:euclid_rcd_rcp} for the Euclidean space. Many presented results are extensions from~\cite{CN23}. The main difference is that, instead of being generic, we deploy the optimal coordinate sampling rate.

\begin{theorem}[Convergence of RCP]\label{thm:rcp_conv_euclid}
Let $f(x) = g(x) +\psi(x)$, where $g$ and $\psi$ are differentiable, $L_i$-smooth and $H_i$-smooth along the i-th coordinate respectively. Let RCP be defined as in~\eqref{eq:euclid_rcp} and~\eqref{eq:euclid_rcp_subprob}, and let $\mathcal{C}$ be defined as in~\eqref{eqn:calC}.
Choose the sampling probabilities and step sizes as in~\eqref{eqn:choice_grad-prox}. Then $f(x_{k+1}) \leq f(x_k)$ for all $k$, and RCP converges at the following rate:
\begin{itemize}
    \item \textbf{Non-convex:} The algorithm finds an $\epsilon$-approximate stationary point in $\mathcal{O}(\frac{\mathcal{C}}{\epsilon})$ iterations:
    $$ \mathbb{E}\left[\min_{0 \leq t < k} \|\nabla f(x_t)\|^2\right] \leq \frac{\mathcal{C}\left(f(x_0) - f(x_*) \right)}{k} \, .$$
    \item \textbf{Convex:}  If $f$ is convex and satisfies Assumption~\ref{asp:euclid_bls} with radius $R_0$ at initialization $x_0$, then
    $$ \mathbb{E}[f(x_k) - f(x_*)] \leq \frac{\mathcal{C} R_0^2}{k} \, ,$$ meaning, the algorithm finds an optimality gap of $\epsilon$ in $\mathcal{O}(\frac{\mathcal{C}}{\epsilon})$ iterations.
    \item \textbf{$m$-strongly convex or $m$-PL} if $f$ is $m$-strongly convex or $m$-PL, then
    $$ \mathbb{E}[f(x_k) - f(x_*)] \leq \left(1 - \frac{2m}{\mathcal{C}} \right)^k (f(x_0) - f(x_*)) \, ,$$ meaning, the algorithm finds an optimality gap of $\epsilon$ in $\mathcal{O}(\frac{\mathcal{C}}{m} \ln(\frac{1}{\epsilon}))$ iterations.

\end{itemize}
\end{theorem}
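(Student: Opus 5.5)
The plan is to reduce Theorem~\ref{thm:rcp_conv_euclid} to a one-step expected descent inequality identical in form to Lemma~\ref{lem:descent_lemma_grad-prox}:
\[
\mathbb{E}\left[f(x_{k+1}) - f(x_k)\mid x_k\right] \leq -\frac{1}{\mathcal{C}}\|\nabla f(x_k)\|^2, \qquad f(x_{k+1})\le f(x_k).
\]
Given this, the three regimes follow exactly as in the proof of Theorem~\ref{thm:rcd_conv}, with $2L_{\rm sum}$ replaced by $\mathcal{C}$. Telescoping gives the non-convex bound. For the convex case, combine $f(x_k)-f_*\le \|\nabla f(x_k)\| R_0$ (convexity, Cauchy--Schwarz, and Assumption~\ref{asp:euclid_bls}, which applies by monotonicity) with the $1/\phi_k$ recursion and Jensen. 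For the PL case (strong convexity implies PL in the Euclidean setting by the standard argument, mirroring Lemma~\ref{lem:sc_implies_PL}), this yields the contraction factor $1-2m/\mathcal{C}$.

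The descent inequality is the Euclidean specialization of the argument in Appendix~\ref{sec:RWCP_proof}; equivalently, it is the case $\mu_k=\delta_{x_k}$ with $T$ constant. Write $i=i_k$ and let $s_k$ solve \eqref{eq:euclid_rcp_subprob}.

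\textbf{Step 1 (zeroth-order optimality).} Compare with $s=0$:
\[
\partial_i g(x_k)s_k+\tfrac{\eta_i}{2}s_k^2+\psi(x_{k+1})\le\psi(x_k).
\]

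\textbf{Step 2 (first-order optimality).} Since $\psi$ is differentiable, $s_k$ is a critical point:
\[
\partial_i g(x_k)+\eta_i s_k+\partial_i\psi(x_k+s_ke_i)=0.
\]
Hence
\[
\partial_i f(x_k)=\partial_i\psi(x_k)-\partial_i\psi(x_k+s_ke_i)-\eta_i s_k,
\]
and by Young's inequality together with the $H_i$-coordinate smoothness of $\psi$ (Definition~\ref{def:coord_smooth_euclid}),
\[
|\partial_i f(x_k)|^2\le 2(H_i^2+\eta_i^2)s_k^2.
\]

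\textbf{Step 3 (decrease).} The coordinate descent lemma for $g$ (the Euclidean version of Proposition~\ref{prop:coord_wise_smooth_1st_order}, i.e.\ the integral form of Taylor's theorem along $e_i$) gives
\[
g(x_{k+1})\le g(x_k)+\partial_i g(x_k)s_k+\tfrac{L_i}{2}s_k^2.
\]
Adding Step 1 yields
\[
f(x_{k+1})-f(x_k)\le-\tfrac{\eta_i-L_i}{2}s_k^2\le 0.
\]

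\textbf{Step 4 (expectation).} Combining Steps 2 and 3,
\[
f(x_{k+1})-f(x_k)\le-\frac{\eta_i-L_i}{4(H_i^2+\eta_i^2)}|\partial_i f(x_k)|^2.
\]
The key algebraic check is that with $\eta_i=L_i+\sqrt{H_i^2+L_i^2}$ one has $\eta_i^2-2L_i\eta_i=H_i^2$. Hence $H_i^2+\eta_i^2=2\eta_i(\eta_i-L_i)$, and the coefficient simplifies to $1/(8\eta_i)$. Averaging with $p_i=\eta_i/\sum_j\eta_j$ cancels $\eta_i$ and gives $-\|\nabla f(x_k)\|^2/(8\sum_j\eta_j)=-\|\nabla f(x_k)\|^2/\mathcal{C}$.

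The main obstacle is Step 2. The subproblem \eqref{eq:euclid_rcp_subprob} may be nonconvex when $\psi$ is nonconvex, so one must take $s_k$ to be a global minimizer. Existence follows because the objective is coercive in $s$ once $\eta_i>H_i$: $\psi$ along the line has at most quadratic negative curvature $H_i$, and $\eta_i\ge\sqrt{H_i^2+L_i^2}\ge H_i$. If $\eta_i=H_i$ exactly (only possible when $L_i=0$), I would handle it separately or assume the argmin exists, as the statement implicitly does. Stationarity then holds at the global minimizer. Everything else is bookkeeping, and the identity $\eta_i^2-2L_i\eta_i=H_i^2$ is precisely what motivates the choice \eqref{eqn:choice_grad-prox}.
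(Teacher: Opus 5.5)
Your proposal is correct and takes essentially the same approach as the paper. The paper proves Lemma~\ref{lem:descent_lemma_grad-prox_euclid} through the same Steps 1--4: zeroth- and first-order optimality of $s_k$, Young's inequality with the $H_i$-smoothness of $\psi$, the coordinate descent bound for $g$, and averaging with $p_i\propto\eta_i$. It then obtains the three regimes exactly as in Theorem~\ref{thm:rcd_conv}, with $2L_{\rm sum}$ replaced by $\mathcal{C}$. Your identity $H_i^2+\eta_i^2=2\eta_i(\eta_i-L_i)$ and your remarks on existence of the minimizer make explicit steps that the paper leaves implicit.
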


The proof of the theorem strongly relies on the following proposition.

\begin{proposition}[Coordinate-wise descent]\label{lem:coord_wise_smooth_1st_order_euclid}
Let $f : \mathbb R^d \to \mathbb R$ be differentiable. Fix $x \in \mathbb R^d$ and $i\in[d]$. Assume $f$ is $L_i$-smooth along the $i$-th coordinate at $x$ in the sense of Definition~\ref{def:coord_smooth_euclid}, then for any $s \in \mathbb{R}$,
\begin{equation}
f(x + s e_i) \leq f(x) + \partial_{i} f(x) s + \frac{L_i}{2}s^2 \, .
\end{equation}
\end{proposition}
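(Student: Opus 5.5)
The plan is to restrict $f$ to the line through $x$ in direction $e_i$ and apply the fundamental theorem of calculus, exactly as in the Wasserstein proof of Proposition~\ref{prop:coord_wise_smooth_1st_order} but now in one dimension. Fix $x\in\mathbb R^d$, $i\in[d]$ and $s\in\mathbb R$, and set $\varphi(t):=f(x+ts e_i)$ for $t\in[0,1]$. Since $f$ is differentiable, $\varphi$ is differentiable with $\varphi'(t)=\langle \nabla f(x+tse_i), se_i\rangle = s\,\partial_i f(x+tse_i)$ by the chain rule.

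Next, write
\[
f(x+se_i)-f(x)=\int_0^1 s\,\partial_i f(x+tse_i)\,dt = s\,\partial_i f(x) + \int_0^1 s\big(\partial_i f(x+tse_i)-\partial_i f(x)\big)\,dt .
\]
Bound the remainder using Definition~\ref{def:coord_smooth_euclid} applied with increment $ts$: $|\partial_i f(x+tse_i)-\partial_i f(x)|\le L_i t|s|$. The integrand is then at most $L_i t s^2$, and $\int_0^1 L_i t s^2\,dt = \tfrac{L_i}{2}s^2$, which gives the claim.

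The only potential obstacle is justifying the integral form of the fundamental theorem of calculus, since $f$ is merely differentiable. However, the coordinate-Lipschitz bound shows that $t\mapsto \partial_i f(x+tse_i)$ is Lipschitz, hence continuous, on $[0,1]$. So $\varphi$ is $C^1$ and the integral representation holds. Alternatively, one can avoid integrals by applying the mean value theorem to $\varphi(t)-t s\partial_i f(x)-\tfrac{L_i}{2}t^2s^2$, which is differentiable with nonpositive derivative.
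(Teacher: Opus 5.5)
Your proof is correct and is the standard argument. The paper does not prove this proposition itself (it cites \cite{N12, WR22}); its proof of the Wasserstein analogue, Proposition~\ref{prop:coord_wise_smooth_1st_order}, is the same integrate-along-the-segment argument with the remainder bounded by the coordinate Lipschitz estimate, here specialized to one dimension. Your mean-value-theorem variant is a welcome addition, since it needs only differentiability of $t\mapsto f(x+tse_i)$ and the coordinate bound at the single base point $x$.
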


This is a standard result and can be found in many literature. See e.g.~\cite{N12, WR22}.

We can now show the one step decent lemma of RCP.

\begin{lemma}[Descent Lemma for RCP]\label{lem:descent_lemma_grad-prox_euclid}
Under the same conditions as in Theorem~\ref{thm:rcp_conv_euclid}, we denote $(x_k)_{k \geq 0}$ the sequence generated by RCP using the prescribed probability sampling rate and step sizes, then for any $k \geq 0$, $f(x_{k+1}) \leq f(x_k)$, and
\begin{equation}\label{eqn:rcp_onestep_euclid}
\mathbb{E}\left[f(x_{k+1} - f(x_k)) \mid x_k \right] \leq - \frac{1}{\mathcal{C}} \left\Vert \nabla f(x_k) \right\Vert^2 \, ,
\end{equation}
where $\mathcal{C}$ is defined in~\eqref{eqn:calC}.
\end{lemma}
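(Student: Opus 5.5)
The proof will mirror the Wasserstein argument for Lemma~\ref{lem:descent_lemma_grad-prox}, specialized to $\mathbb{R}^d$ with scalar increments $s_k$ along $e_{i_k}$. There are three steps: a zeroth-order optimality bound, a first-order optimality bound relating the coordinate gradient to the step, and then averaging over $i_k$ with the prescribed $p_i,\eta_i$.

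\emph{Step 1 (monotone decrease).} Comparing the objective in \eqref{eq:euclid_rcp_subprob} at $s_k$ with its value at $s=0$ gives
\[
\partial_{i_k} g(x_k)s_k+\tfrac{\eta_{i_k}}{2}s_k^2+\psi(x_{k+1})\le\psi(x_k).
\]
Proposition~\ref{lem:coord_wise_smooth_1st_order_euclid} applied to $g$ gives
\[
g(x_{k+1})\le g(x_k)+\partial_{i_k}g(x_k)s_k+\tfrac{L_{i_k}}{2}s_k^2.
\]
Adding the two inequalities yields
\[
f(x_{k+1})-f(x_k)\le-\tfrac{\eta_{i_k}-L_{i_k}}{2}s_k^2\le 0,
\]
since $\eta_i-L_i=\sqrt{H_i^2+L_i^2}\ge0$.

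\emph{Step 2 (gradient controlled by step length).} Since $\psi$ is differentiable, the first-order condition at the minimizer $s_k$ reads
\[
\partial_{i_k}g(x_k)+\eta_{i_k}s_k+\partial_{i_k}\psi(x_k+s_ke_{i_k})=0.
\]
It follows that
\[
\partial_{i_k}f(x_k)=\partial_{i_k}\psi(x_k)-\partial_{i_k}\psi(x_{k+1})-\eta_{i_k}s_k.
\]
Young's inequality together with the $H_{i_k}$-coordinate smoothness of $\psi$ then gives
\[
|\partial_{i_k}f(x_k)|^2\le 2(H_{i_k}^2+\eta_{i_k}^2)s_k^2.
\]

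\emph{Step 3 (expectation).} Combining Steps 1 and 2,
\[
f(x_{k+1})-f(x_k)\le-\frac{\eta_{i_k}-L_{i_k}}{4(H_{i_k}^2+\eta_{i_k}^2)}|\partial_{i_k}f(x_k)|^2.
\]
The key algebraic identity is that, with $\eta_i=L_i+\sqrt{L_i^2+H_i^2}$, one has $\eta_i^2=2L_i\eta_i+H_i^2$. Hence
\[
H_i^2+\eta_i^2=2\eta_i(\eta_i-L_i),
\]
and the coefficient simplifies exactly to $1/(8\eta_i)$. Taking expectation over $i_k$ with $p_i=\eta_i/\sum_j\eta_j$ cancels the $\eta_i$, leaving
\[
-\frac{1}{8\sum_j\eta_j}\sum_i|\partial_if(x_k)|^2=-\frac{1}{\mathcal{C}}\|\nabla f(x_k)\|^2.
\]

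The only delicate points are the existence of a minimizer $s_k$ and the validity of the stationarity condition. For existence, the objective in \eqref{eq:euclid_rcp_subprob} is coercive because $\eta_{i_k}\ge H_{i_k}$ dominates the at most quadratic negative growth of $\psi$ along $e_{i_k}$. Note also that Step 1 needs only the global optimality of $s_k$, while Step 2 needs only that $s_k$ is a stationary point, which holds for any minimizer of a differentiable function on $\mathbb{R}$. The main thing to get right is the identity in Step 3, which is what makes the prescribed parameter choice optimal.
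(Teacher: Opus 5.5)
Your proposal is correct and follows the paper's proof essentially step for step: the comparison with $s=0$, the stationarity condition combined with Young's inequality and the $H_{i_k}$-smoothness of $\psi$, and the averaging step, in which $\frac{\eta_i-L_i}{H_i^2+\eta_i^2}=\frac{1}{2\eta_i}$ cancels against $p_i\propto\eta_i$ (your identity $\eta_i^2=2L_i\eta_i+H_i^2$ is exactly what the paper uses implicitly). Your side remark on the existence of $s_k$, which the paper simply takes for granted, needs the strict inequality $\eta_{i_k}>H_{i_k}$ (true whenever $L_{i_k}>0$) to give coercivity, because with equality only a linear lower bound remains.
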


\begin{proof}
By the optimality of $s_k$ in~\eqref{eq:euclid_rcp_subprob}, we have \begin{equation}\label{eqn:var_minimizer_0th_cond_euclid}
\partial_{i_k}g(x_k) s_k + \frac{\eta_{i_k}}{2} s_{k}^2  + \psi(x_{k+1})  \leq \psi(x_k) \,,
\end{equation}
where the right hand side can be viewed as setting $s=0$. Meanwhile, since $s_k$ is the optimal value obtained in~\eqref{eq:euclid_rcp_subprob}, it must be a critical point of the variational problem, so we obtain \begin{equation}\label{eqn:crit_var_grad_prox_euclid}
\partial_{i_k}g(x_k) + \eta_{i_k} s_{k} + \partial_{i_k} \psi(x_{k+1}) = 0 \, .
\end{equation}
Then:
\begin{equation}\label{eqn:grad_prox_grad_movement_bd_euclid}
\begin{aligned}
\left\Vert \partial_{i_k} f(x_k) \right\Vert^2 
= & \left\Vert \partial_{i_k} g(x_k) + \partial_{i_k} \psi(x_k) \right\Vert_{\mu_{k}}^2  \\
= & \left\Vert \partial_{i_k} \psi(x_k) - \partial_{i_k} \psi(x_{k+1}) - \eta_{i_k} s_{k} \right\Vert^2  \\
\leq & 2 \left\Vert \partial_{i_k} \psi(x_k) - \partial_{i_k} \psi(x_{k+1}) \right\Vert^2 +  2 \left\Vert \eta_{i_k} s_{k}\right\Vert^2  \\
\leq & 2\left(H_{i_k}^2 + \eta_{i_k}^2 \right) \cdot \left\Vert s_k \right\Vert^2 \, ,
\end{aligned}
\end{equation}
where we used~\eqref{eqn:crit_var_grad_prox_euclid}, Young's, and the assumption the smoothness of $\psi$. Therefore:
\begin{equation}\label{eqn:grad_prox_gap_movement_euclid}
\begin{aligned}
f(x_{k+1}) - f(x_k)  = & g(x_{k+1}) - g(x_k) + \psi(x_{k+1}) - \psi(x_k) \\
\leq & \partial_{i_k} g(x_k) s_k + \frac{L_{i_k}}{2} s_k^2 + \psi(x_{k+1}) - \psi(x_k) \\
\leq & - \frac{\eta_{i_k} - L_{i_k}}{2} s_k^2 \, ,
\end{aligned}
\end{equation}
where we used Proposition~\ref{lem:coord_wise_smooth_1st_order_euclid} and the inequality~\eqref{eqn:var_minimizer_0th_cond_euclid}.
The choice of $\eta_i$ in~\eqref{eqn:choice_grad-prox} guarantees $f(x_{k+1}) \leq f(x_k)$. Taking expectations, we obtain \begin{equation}
\begin{aligned}
\mathbb{E}\left[ f(x_{k+1}) \left\vert\right. x_k \right] - f(x_k) \leq & -\frac{1}{2} \mathbb{E}\left[ \left(\eta_{i_k} - L_{i_k}\right) s_k^2 \left\vert\right. x_k \right] \\
\leq & - \frac{1}{4} \mathbb{E}\left[ \frac{\eta_{i_k} - L_{i_k}}{H_{i_k}^2 + \eta_{i_k}^2}\left\Vert \partial_{i_k} f(x_k) \right\Vert^2 \left\vert\right. x_k \right] \\
= & -\frac{1}{8 \sum_{j=1}^d \left( L_j + \sqrt{H_j^2 + L_j^2} \right)} \left\Vert \nabla f(x_k) \right\Vert^2 \, ,
\end{aligned}
\end{equation}
where we plugged in the choice of $p_i$, $\eta_i$ specified in~\eqref{eqn:choice_grad-prox}.
\end{proof}

Now we are ready to give the proof of Theorem~\ref{thm:rcp_conv_euclid}. We note the proof is structurally the same as that for Theorem~\ref{thm:rcp_conv}.

\begin{proof}[Proof of Theorem~\ref{thm:rcp_conv_euclid}]
\begin{itemize}
\item[--] In the nonconvex case. By~\eqref{eqn:rcp_onestep_euclid}, $$\left\Vert \nabla f(x_k) \right\Vert^2 \leq \mathcal{C} \left( f(x_k) - \mathbb{E}\left[ f(x_{k+1}) \left\vert\right. x_k \right] \right) \,.$$
Summing all iterations up, one gets
$$\begin{aligned}
\mathbb{E}\left[\min_{\{0 \leq t < k\}} \left\Vert \nabla f(x_t) \right\Vert^2\right] 
\leq &  \mathbb{E}\left[\frac{\sum_{t=0}^{k-1} \left\Vert \nabla f(x_t) \right\Vert^2}{k}\right] \\
\leq & \frac{\mathcal{C}}{k} \sum_{t=0}^{k-1}\mathbb{E} \left[\mathbb{E}\left[ f(x_t) - f(x_{t+1}) \left\vert \right. x_t  \right] \right] \\
\leq & \frac{\mathcal{C}}{k} \mathbb{E}\left[f(x_0) - f(x_k)\right] \\
\leq & \frac{\mathcal{C}}{k} \left( f(x_0) - f(x_*) \right) \, .
\end{aligned}$$
\item[--] In the convex case:
\[
\begin{aligned}
f(x_k) - f(x_*) \leq &  (\nabla f(x_k))^\top ( x_k - x_* )  \\
\leq & \| \nabla f(x_k) \| \cdot \| x_k - x_*  \| \\ 
\leq & R_0 \| \nabla f(x_k) \| \, ,
\end{aligned}
\]
where we used Cauchy-Schwarz inequality, Definition~\ref{def:euclid_convex_pl} and Assumption~\ref{asp:euclid_bls}. 
Therefore, conditioned on $x_k$, utilizing~\eqref{eqn:rcp_onestep_euclid},
$$\begin{aligned}
& \mathbb{E}\left[ f(x_{k+1}) -f(x_*) \right] \\
\leq & f(x_k) -f(x_*) - \frac{1}{\mathcal{C}}\left\Vert \nabla f(x_k) \right\Vert^2  \\
\leq & f(x_k) -f(x_*) - \frac{\left(f(x_k) -f(x_*)\right)^2}{\mathcal{C} R_0^2} \, .
\end{aligned}$$
Denoting $\phi_k := \mathbb{E}[f(x_k) -f(x_*)]$, this inequality translates to $\phi_{k+1} \leq \phi_k - \frac{\phi_k^2}{\mathcal{C} R_0^2}$. With the same arguments as in~\eqref{eq:phi_sequel}, one conclude the analysis.
\item[--] In the $m$-strongly convex or $m$-PL case.
By~\eqref{eqn:rcp_onestep_euclid},
$$\mathbb{E}\left[ f(x_{k+1}) \left\vert\right. x_k \right] - f(x_k) \leq - \frac{1}{\mathcal{C}} \left\Vert \nabla f(x_k) \right\Vert^2 \leq -  \frac{2m}{\mathcal{C}} \left(f(x_k) - f(x_*) \right)  \, .$$
Subtracting $f[x_*]$ from both sides and reorganizing the terms, we obtain $$\mathbb{E}\left[ f(x_{k+1}) \left\vert\right. x_k \right] - f(x_*) \leq \left(1 - \frac{2m}{\mathcal{C}}  \right) \cdot \left( f(x_k) - f(x_*) \right) \, .$$
The conclusion follows by induction.
\end{itemize}

\end{proof}

\section{Details for Example 5}\label{sec:nn_details}

Here we present the details of the Wasserstein gradient and smoothness analysis for Example~5 in Section~\ref{subsec:2nn}. 
Taking the first variation of~\eqref{eqn:nn_func}, we obtain
\begin{equation}\label{eqn:first_variation_E_nn}
\frac{\delta E}{\delta \mu}[\mu](z)
 = \int r_\mu(x)  V(x,z) \, d\pi(x) \, , \quad r_\mu(x) := g(x;\mu) - f(x) \,,
\end{equation}
so the Wasserstein gradient of $E$ is \begin{equation}\label{eqn:nn_grad}
G[\mu](z) := \nabla_{\mathbb{W}} E[\mu](z) = \nabla_z \frac{\delta E}{\delta \mu}[\mu](z) = \int r_\mu(x)\phi(z;x)
d\pi(x) \, , 
\end{equation}
where:
\begin{equation}\label{eqn:phi_definition}
\phi(z;x):=\nabla_z V(x,z) = \begin{bmatrix}
\sigma(s)\\
1\\
\alpha \sigma'(s)x\\
\alpha \sigma'(s)
\end{bmatrix}\, , \qquad s=w^\top x+b \, .
\end{equation}

To state the corresponding smoothness bounds, we introduce the auxiliary functions
$K,S:\operatorname{supp}(\pi)\to\mathbb{R}$ defined by
\begin{equation}\label{eqn:nn_aux_defs}
K(x):=\sqrt{1+\|x\|_2^2} \, ,\qquad S(x):=W\|x\|_2+C \,
\end{equation}
and
\begin{equation}\label{eqn:nn_local_sup_bounds}
m_0(x):=\sup_{|u|\leq S(x)}|\sigma(u)| \, , \quad m_1(x) := \sup_{|u|\leq S(x)}|\sigma'(u)| \, , \quad m_2(x) := \sup_{|u| \leq S(x)}|\sigma''(u)| \, .
\end{equation}
Since $\sigma=\tanh$, these become \[  m_0(x)=\tanh(S(x)) \, , \qquad m_1(x) = 1 \, , \qquad m_2(x)=\sup_{|u| \leq S(x)} 2|\tanh(u)|(1-\tanh^2(u)) \, . \]
The smoothness of $E$ on $\mathcal{P}_2(\mathcal{B})$ is characterized by the following proposition:

\begin{proposition}\label{prop:nn_smooth}
Let $E$ be defined by~\eqref{eqn:nn_func} and \eqref{eqn:nn_gV}, with $\sigma=\tanh$, and let $\mathcal{B}\subset \mathbb{R}^{d+3}$ be the bounded region defined in~\eqref{eqn:nn_region}. 
Then $E$ is $L$-smooth on $\mathcal{P}_2(\mathcal{B})$, with
\begin{equation}\label{eqn:nn_L_global}
L = \int_{\mathbb{R}^d} \Bigl(M_0(x)^2 + R(x)M_1(x)\Bigr)\,d\pi(x) \, ,
\end{equation}
where
\begin{subequations}
\begin{align}
R(x) & := A\,m_0(x)+B+|f(x)| \, , \label{eqn:nn_Rx}\\
M_0(x) & := \sup_{|u|\leq S(x)} \sqrt{1+\sigma(u)^2+A^2\sigma'(u)^2K(x)^2} \, , \label{eqn:nn_M0}\\
M_1(x)& := \frac{K(x)}{2}\,\sup_{|u|\leq S(x)}\bigl\{ A|\sigma''(u)|K(x) + \sqrt{A^2\sigma''(u)^2K(x)^2+4\sigma'(u)^2} \bigr\} \, .
\label{eqn:nn_M1}
\end{align}
\end{subequations}

Moreover, $E$ is coordinate-wise smooth on $\mathcal{P}_2(\mathcal{B})$. Denote \(L_\alpha\), $L_\beta$, $L_b$ and $L_{w_i}$ smoothness parameter along the $\alpha$, \(\beta\), $b$ and $w_i$, we have:
\begin{subequations}\label{eqn:nn_L_coord}
\begin{align}
L_\alpha & = \int_{\mathbb{R}^d} m_0(x)^2\,d\pi(x) \, , \qquad L_\beta = 1 \, , \label{eqn:nn_Lalpha_beta}\\
L_b & = \int_{\mathbb{R}^d} \Bigl(A^2m_1(x)^2 + A\,R(x)m_2(x)\Bigr)\,d\pi(x), \label{eqn:nn_Lb} \\
L_{w_i} & = \int_{\mathbb{R}^d} x_i^2\Bigl(A^2m_1(x)^2 + A\,R(x)m_2(x)\Bigr)\,d\pi(x) \, , \qquad i\in[d].
\label{eqn:nn_Lwi}
\end{align}
\end{subequations}
Here $K,S,m_0,m_1,m_2$ are defined in~\eqref{eqn:nn_aux_defs} and \eqref{eqn:nn_local_sup_bounds}.
\end{proposition}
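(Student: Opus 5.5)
The plan is to reduce both smoothness claims to pointwise Lipschitz estimates on $\phi(z;x)=\nabla_z V(x,z)$ and on the residual $r_\mu$. For $\mu\in\mathcal P_2(\mathcal B)$ and a perturbation $T$ with $\mu_T=(\operatorname{Id}+T)_\#\mu$ (or $U_iT$ in the coordinate case), both supported in $\mathcal B$, we use \eqref{eqn:nn_grad} to split
\[
G[\mu_T](z+T(z))-G[\mu](z)=\int \bigl(r_{\mu_T}(x)-r_\mu(x)\bigr)\phi(z+T(z);x)\,d\pi(x)+\int r_\mu(x)\bigl(\phi(z+T(z);x)-\phi(z;x)\bigr)\,d\pi(x).
\]
This mirrors part (b) of Proposition~\ref{prop:examples_coord_smooth}, since the first piece behaves like a function of a mean. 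It also mirrors part (a), since the second piece behaves like a potential term.

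\textbf{First term.} Since $r_{\mu_T}(x)-r_\mu(x)=\int \bigl(V(x,z'+T(z'))-V(x,z')\bigr)d\mu(z')$, the mean value theorem gives $|r_{\mu_T}(x)-r_\mu(x)|\le \sup_{\mathcal B}\|\phi(\cdot;x)\|\,\|T\|_{\mu}$, using Jensen as in part (b). On $\mathcal B$ we have $|s|=|w^\top x+b|\le S(x)$, so $\|\phi(z;x)\|^2\le 1+\sigma(s)^2+\alpha^2\sigma'(s)^2(\|x\|^2+1)\le M_0(x)^2$. Hence the first term is bounded in $\mathcal L^2(\mu)$ by $\int M_0(x)^2d\pi\,\|T\|_\mu$.

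\textbf{Second term.} We use $|r_\mu(x)|\le A m_0(x)+B+|f(x)|=R(x)$ and bound the Lipschitz constant of $z\mapsto\phi(z;x)$ on $\mathcal B$ by the operator norm of the Hessian $\nabla_z^2V(x,z)$. This Hessian is nonzero only in the blocks coupling $\alpha$ with $(w,b)$, namely $\sigma'(s)(x,1)$, and in the $(w,b)$ block, namely $\alpha\sigma''(s)(x,1)(x,1)^\top$. It is therefore effectively a $2\times2$ symmetric matrix in the basis $\{e_\alpha,(x,1)/K\}$ with entries $0$, $\sigma'K$, $\alpha\sigma''K^2$. Its spectral radius is $\tfrac{K}{2}\bigl(|\alpha\sigma''|K+\sqrt{\alpha^2\sigma''^2K^2+4\sigma'^2}\bigr)\le M_1(x)$, which gives $R(x)M_1(x)$. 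Convexity of $\mathcal B$ keeps the segment $z+tT(z)$ inside it, so the mean value theorem applies. Adding the two terms yields \eqref{eqn:nn_L_global}.

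\textbf{Coordinate-wise constants.} We project with $U_i$, so only the $i$-th row and column matter. For $\alpha$: $\partial_\alpha\phi_\alpha=0$, so only the first term survives, with $(\phi_\alpha)^2=\sigma^2\le m_0^2$, giving $L_\alpha$. For $\beta$: $\phi_\beta=1$, and the first term gives $L_\beta=1$. For $b$ and $w_i$: $\phi_{w_i}=\alpha\sigma'(s)x_i$, so the first term contributes $x_i^2A^2m_1^2$, and the diagonal Hessian entry $\alpha\sigma''x_i^2$ contributes $x_i^2 A R m_2$; the $b$ case is the same with $x_i$ replaced by $1$.

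\textbf{Main obstacle.} The delicate point is justifying that $r_{\mu_T}-r_\mu$ is controlled by the projected displacement $\|U_iT\|_\mu$ while tracking only the $i$-th component of $\phi$. It needs Cauchy–Schwarz in $\mu$ applied separately inside each $\pi$-integral, and then an integration in $x$. A plain Cauchy–Schwarz over $\pi$ risks producing a product of integrals rather than the stated $\int(\cdots)\,d\pi$. I would handle this by bounding pointwise in $x$ first and then using Minkowski's inequality for the $\pi$-integral.
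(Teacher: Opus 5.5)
Your proposal is correct and follows essentially the same route as the paper: the same split of $G[\nu]\circ(\operatorname{Id}+T)-G[\mu]$ into a residual-difference term bounded by $\int M_0^2\,d\pi$ and a $\phi$-increment term bounded by $\int R\,M_1\,d\pi$, the same $2\times 2$ matrix on $\{e_\alpha,(x,1)/K\}$ giving $M_1$, and the same coordinate-by-coordinate bounds. Your Hessian-of-$V$ framing is just the paper's computation of $D_z\phi$ (since $\phi=\nabla_z V$). Your explicit use of the convexity of $\mathcal{B}$ to justify the mean value theorem fills in a step the paper leaves implicit.
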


\begin{lemma}
    Under the same assumption as in Proposition~\ref{prop:nn_smooth}, we have, $\forall z_1,z_2\in\mathcal{B}$:
\begin{equation}\label{eqn:nn_V_lip}
\sup_{z\in\mathcal{B}}\|\phi(z;x)\|_2=M_0(x)\,,\quad\text{and}\quad |V(x,z_1)-V(x,z_2)| \leq M_0(x)\|z_1-z_2\|_2 \,,
\end{equation}
and
\begin{equation}\label{eqn:nn_phi_lip} \|\phi(z_1;x)-\phi(z_2;x)\|_2 \leq M_1(x)\,\|z_1-z_2\|_2 \, . \end{equation}
\end{lemma}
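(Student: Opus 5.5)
The lemma has three claims, and I will prove each directly from the explicit formula for $\phi$ in \eqref{eqn:phi_definition}. Throughout, $x\in\operatorname{supp}(\pi)$ is fixed and $z=(\alpha,\beta,w,b)\in\mathcal{B}$. The first observation is that $|s|=|w^\top x+b|\le W\|x\|_2+C=S(x)$ by Cauchy--Schwarz. Hence every occurrence of $\sigma,\sigma',\sigma''$ evaluated at $s$ is controlled by a supremum over $|u|\le S(x)$.

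\textbf{Sup bound and Lipschitz bound for $V$.} Computing the norm gives
\[
\|\phi(z;x)\|_2^2=\sigma(s)^2+1+\alpha^2\sigma'(s)^2(\|x\|^2+1)=1+\sigma(s)^2+\alpha^2\sigma'(s)^2K(x)^2 .
\]
Since $|\alpha|\le A$ and $|s|\le S(x)$, this is at most $M_0(x)^2$. For the equality in \eqref{eqn:nn_V_lip}, I take $\alpha=A$ and choose $(w,b)$ so that $s=u$ for any $|u|\le S(x)$. This is possible because $w=t\,x/\|x\|$ with $|t|\le W$ together with $|b|\le C$ realizes every $s\in[-S(x),S(x)]$. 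The supremum is therefore attained, or at least approached. The Lipschitz bound on $V$ then follows from the mean value theorem along the segment $z_2+\tau(z_1-z_2)$. The segment stays in $\mathcal{B}$ because $\mathcal{B}$ is convex, being a product of convex sets. Thus $|V(x,z_1)-V(x,z_2)|\le\sup_{\mathcal{B}}\|\nabla_zV\|\,\|z_1-z_2\|$.

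\textbf{Lipschitz bound for $\phi$.} This is the main step. Again by convexity of $\mathcal{B}$ and the mean value inequality, it suffices to bound the operator norm of the Jacobian $D_z\phi(z;x)=\nabla_z^2V$ uniformly on $\mathcal{B}$ by $M_1(x)$. Write $a:=(x,1)\in\mathbb{R}^{d+1}$, so $\|a\|=K(x)$, and write $y=(w,b)$. The Hessian has zero $\beta$-row and zero $\beta$-column. On the remaining $(\alpha,y)$ block it equals
\[
\begin{pmatrix}0&\sigma'(s)a^\top\\ \sigma'(s)a&\alpha\sigma''(s)aa^\top\end{pmatrix}.
\]
This symmetric matrix acts nontrivially only on $\mathrm{span}\{e_\alpha,(0,a/\|a\|)\}$, and on that plane it reduces to the $2\times2$ matrix
\[
\begin{pmatrix}0&\sigma'K\\ \sigma'K&\alpha\sigma''K^2\end{pmatrix}.
\]
The eigenvalues of this $2\times 2$ matrix are
\[
\tfrac12\bigl(\alpha\sigma''K^2\pm\sqrt{\alpha^2\sigma''^2K^4+4\sigma'^2K^2}\bigr),
\]
so the spectral norm is $\frac{K}{2}\bigl(|\alpha\sigma''|K+\sqrt{\alpha^2\sigma''^2K^2+4\sigma'^2}\bigr)$. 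Bounding $|\alpha|\le A$ and taking the supremum over $|u|\le S(x)$ yields exactly $M_1(x)$ in \eqref{eqn:nn_M1}.

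I expect the only delicate points to be this rank-two reduction of the Hessian, which must be checked carefully, and the attainment claim in the equality for $\sup\|\phi\|$. If exact attainment of the equality is problematic, it suffices to prove the inequality, because only the upper bound is used in Proposition~\ref{prop:nn_smooth}.
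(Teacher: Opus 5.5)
Your proposal is correct and follows essentially the same route as the paper: bound $|s|\le S(x)$, compute $\|\phi(z;x)\|_2$ directly, and reduce the Jacobian $D_z\phi$ to the $2\times 2$ matrix $\begin{pmatrix}0&\sigma'K\\ \sigma'K&\alpha\sigma''K^2\end{pmatrix}$, whose spectral norm gives $M_1(x)$. The paper gets there by writing $\|D_z\phi[u]\|$ as this matrix applied to $(\delta_\alpha, q/K)$ with $q=x^\top\delta_w+\delta_b$, whereas you use the invariant plane of the Hessian; your explicit appeal to the convexity of $\mathcal{B}$ for the mean value inequality, and your check that the supremum defining $M_0(x)$ is actually attained, fill in details the paper leaves implicit.
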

\begin{proof}
We first prove the global $L$-smoothness. For any $z=(\alpha,\beta,w,b)\in\mathcal{B}$, noting the definition~\eqref{eqn:phi_definition}:
\[ |s| \leq \|w\|_2\|x\|_2+|b| \leq W\|x\|_2 + C = S(x) \, . \]
Therefore $\sup_{z\in\mathcal{B}}\|\phi(z;x)\|_2=M_0(x)$,
indicating that $V(x,\cdot)$ is $M_0(x)$-Lipschitz on $\mathcal{B}$. This proves~\eqref{eqn:nn_V_lip}.

Next we estimate the Lipschitz constant of $\phi(\cdot;x)$. This amounts to compute $D_z\phi(z;x)$ for a given fixed $x$. In a perturbation direction $u=(\delta_\alpha,\delta_\beta,\delta_w,\delta_b)$:
\[
D_z\phi(z;x)[u] = \left.\frac{d}{dt}\right|_{t=0}\phi(z+tu;x) = \begin{bmatrix} \sigma'(s)q \\ 0 \\ \bigl(\sigma'(s)\delta_\alpha+\alpha\sigma''(s)q\bigr)x \\ \sigma'(s)\delta_\alpha+\alpha\sigma''(s)q \end{bmatrix} \,,
\]
where we used the notation $q:=x^\top\delta_w+\delta_b$. Therefore,
\[\begin{aligned} \|D_z\phi(z;x)[u]\|_2^2 = & \sigma'(s)^2 q^2 + \bigl(\sigma'(s)\delta_\alpha+\alpha\sigma''(s)q\bigr)^2 \bigl(\|x\|_2^2+1\bigr) \\ = &  \sigma'(s)^2 q^2 + K(x)^2\bigl(\sigma'(s)\delta_\alpha+\alpha\sigma''(s)q\bigr)^2 \\
= & \left\| \begin{bmatrix} 0 & \sigma'(s)K(x)\\ \sigma'(s)K(x) & \alpha\sigma''(s)K(x)^2 \end{bmatrix} \begin{bmatrix} \delta_\alpha \\ \frac{q}{K(x)} \end{bmatrix} \right\|_2^2 \, . \end{aligned}
\] 
It follows that
\begin{align}
\|D_z\phi(z;x)\|_{\operatorname{op}} &\leq \left\| \begin{bmatrix} 0 & \sigma'(s)K(x)\\ \sigma'(s)K(x) & \alpha\sigma''(s)K(x)^2 \end{bmatrix} \right\|_{\operatorname{op}}\\
&\leq \frac{K(x)}{2} \Bigl( |\alpha|\,|\sigma''(s)|\,K(x) + \sqrt{\alpha^2\sigma''(s)^2K(x)^2+4\sigma'(s)^2} \Bigr)\,.
\end{align}
Consequently, taking the supremum over $z\in\mathcal{B}$ and using $|\alpha|\leq A$ together with $|s|\leq S(x)$, we obtain
\[ \sup_{z\in\mathcal{B}} \|D_z\phi(z;x)\|_{\operatorname{op}} \leq M_1(x) \, , \] where $M_1(x)$ is defined in~\eqref{eqn:nn_M1}. 
Hence $\phi(\cdot;x)$ is $M_1(x)$-Lipschitz on $\mathcal{B}$ and we conclude~\eqref{eqn:nn_phi_lip}.
\end{proof}

\begin{proof}[Proof for Proposition~\ref{prop:nn_smooth}]
Denote $\nu=(\operatorname{Id}+ T)_\#\mu$, where $T\in L^2(\mu; \mathbb{R}^{d+3})$ and $\mu, \nu\in\mathcal{P}_2(\mathcal{B})$. To show global $L$-smoothness, we are to prove that there is a constant $L$ so that:
\begin{equation}\label{eqn:Lip_nn_L}
    \bigl\|G[\nu]\circ(\mathrm{Id}+T)-G[\mu] \bigr\|_{\mu}
\leq L\|T\|_{\mu} \,,
\end{equation}
where $G$ is defined in~\eqref{eqn:nn_grad}. To do so, we first expand:
\begin{align}\label{eqn:expansion_NN_G}
& G[\nu](z+T(z))-G[\mu](z) \\
= & \underbrace{\int_{\mathbb{R}^d} \bigl(r_\nu(x)-r_\mu(x)\bigr)\phi(z+T(z);x)\,d\pi(x)}_{\mathrm{Term I}} + \underbrace{\int_{\mathbb{R}^d} r_\mu(x)\bigl(\phi(z+T(z);x)-\phi(z;x)\bigr)\,d\pi(x)}_{\mathrm{Term II}} \, .\nonumber
\end{align}
We now analyze the two terms separately:
\begin{description}
    \item[Term I:] According to~\eqref{eqn:first_variation_E_nn}, we have:
\[ |r_\nu(x)-r_\mu(x)| = |g(x;\nu)-g(x;\mu)| =
\left| \int \bigl(V(x,z+T(z))-V(x,z)\bigr)\,d\mu(z) \right| \leq M_0(x)\|T\|_{\mu} \, ,
\]
where we used the Lipschitz condition in~\eqref{eqn:nn_V_lip} and Jensen. Noting again~\eqref{eqn:nn_V_lip} to bound $\phi$:
\[
\left\|\int_{\mathbb{R}^d} \bigl(r_\nu(x)-r_\mu(x)\bigr)\phi(z+T(z);x)\,d\pi(x) \right\|_2 \leq \left(\int_{\mathbb{R}^d}M_0(x)^2\,d\pi(x)\right)\|T\|_{\mu} \, .
\]
    \item[Term II:] Noting $\mu$ is supported in $\mathcal{B}$,
\[ |g(x;\mu)| \leq \int \bigl(|\alpha|\,|\sigma(s)|+|\beta|\bigr)\,d\mu(z) \leq A\,m_0(x)+B \, , \]
and hence
\begin{equation}\label{eqn:r_mu_R_bd}
|r_\mu(x)| \leq A\,m_0(x)+B+|f(x)| = R(x) \, .
\end{equation}
We obtain, using~\eqref{eqn:nn_phi_lip}:
\[
\left\| \int_{\mathbb{R}^d} r_\mu(x)\bigl(\phi(z+T(z);x)-\phi(z;x)\bigr)\,d\pi(x) \right\|_2
\leq
\left(\int_{\mathbb{R}^d}R(x)M_1(x)\,d\pi(x)\right)|T(z)| \, .
\]
\end{description}
Combine these two norms and take the $\|\cdot\|_{\mu}$ norm:
\[
\bigl\|G[\nu]\circ(\mathrm{Id}+T)-G[\mu] \bigr\|_{\mu}
\leq
\left(\int_{\mathbb{R}^d}M_0(x)^2\,d\pi(x) + \int_{\mathbb{R}^d}R(x)M_1(x)\,d\pi(x) \right)\|T\|_{\mu} \, .
\]
This concludes~\eqref{eqn:Lip_nn_L} with $L$ defined in~\eqref{eqn:nn_L_global}.

To demonstrate the coordinate-wise smoothness, one proceeds similarly.
\begin{description}
    \item[--] \textbf{The $\alpha$-coordinate.}
Let $\nu=(\operatorname{Id}+U_\alpha T)_\#\mu$, where $T\in L^2(\mu;\mathbb{R}^{d+3})$ and $\mu,\nu\in\mathcal{P}_2(\mathcal{B})$. We are to control:
\[
\|U_\alpha G[\nu](z+U_\alpha T(z)) - U_\alpha G[\mu](z)\|_\mu
\]
where by~\eqref{eqn:nn_grad},
\[
U_\alpha G[\mu](z) = \int_{\mathbb{R}^d} r_\mu(x)\sigma(s)\,d\pi(x) \, .
\]
Similar to what is done in~\eqref{eqn:expansion_NN_G}, this term is split into two. The analysis for Term II is exactly the same, and we focus on Term I below.

Since perturbing only the $\alpha$-coordinate does not change $s=w^\top x+b$, we have
\[
\left|V(x,z+U_\alpha T(z))-V(x,z)\right| = |U_\alpha T(z)|\,\cdot |\sigma(s)| \leq m_0(x)\,\cdot |U_\alpha T(z)| \, .
\]
Thus by definition~\eqref{eqn:nn_gV}:
\[
\left|g(x;\nu)-g(x;\mu)\right|=
\left| \int \bigl(V(x,z+U_\alpha T(z))-V(x,z)\bigr)\,d\mu(z) \right| \leq m_0(x) \cdot \|U_\alpha T\|_\mu \, .
\]
Therefore,
\[
\left|U_\alpha G[\nu](z+U_\alpha T(z)) - U_\alpha G[\mu](z)\right| \leq \left(\int_{\mathbb{R}^d}m_0(x)^2\,d\pi(x)\right)\|U_\alpha T\|_\mu \, .
\]
Taking the $\|\cdot\|_\mu$ norm, we conclude that $E$ is $L_\alpha$-smooth along the $\alpha$-coordinate.

\item[--] \textbf{The $\beta$-coordinate.}
Let $\nu=(\operatorname{Id}+U_\beta T)_\#\mu$, where $T\in L^2(\mu;\mathbb{R}^{d+3})$ and $\mu,\nu\in\mathcal{P}_2(\mathcal{B})$.
From \eqref{eqn:nn_grad},
\[
U_\beta G[\mu](z) = \int_{\mathbb{R}^d} r_\mu(x)\,d\pi(x) \, .
\]
Since perturbing only the $\beta$-coordinate changes $V(x,z)$ by exactly $U_\beta T(z)$, we have
\[
\left|V(x,z+U_\beta T(z))-V(x,z)\right| = |U_\beta T(z)| \, ,
\]
and hence
\[
\left|g(x;\nu)-g(x;\mu)\right| \leq \|U_\beta T\|_\mu \, .
\]
Therefore,
\[
\left|U_\beta G[\nu](z+U_\beta T(z)) - U_\beta G[\mu](z)\right| \leq \|U_\beta T\|_\mu \, .
\]
Taking the $\|\cdot\|_\mu$ norm, we obtain that $E$ is $L_\beta$-smooth along the $\beta$-coordinate.
\item[--] \textbf{The $b$-coordinate.}
Let $\nu=(\operatorname{Id}+U_b T)_\#\mu$, where $T\in L^2(\mu;\mathbb{R}^{d+3})$ and $\mu,\nu\in\mathcal{P}_2(\mathcal{B})$. We are to control
\begin{align*}
\|U_b G[\nu](z+U_bT(z)) - U_b G[\mu](z)\|_\mu\,.
\end{align*}
Similar to what is done in~\eqref{eqn:expansion_NN_G}, we split this term into:
\begin{align*}
&|U_b G[\nu](z+U_bT(z)) - U_b G[\mu](z)| \\
\leq & \int_{\mathbb{R}^d}
\left|r_\nu(x)-r_\mu(x)\right|\,\cdot\left|\alpha\sigma'(s+U_bT(z))\right|\,d\pi(x)\,.
\end{align*}
Here we used the fact that $U_b G[\mu](z) = \int_{\mathbb{R}^d} r_\mu(x)\,\alpha\sigma'(s)\,d\pi(x)$. By straightforward computation, we have:
\[
\left|g(x;\nu)-g(x;\mu)\right| \leq A\, \cdot m_1(x) \cdot \|U_bT\|_\mu \,,
\]
and
\[
\left|\alpha\sigma'(s+U_bT(z))\right| \leq A\, \cdot m_1(x) \, ,
\]
and
\[
\left|\alpha\sigma'(s+U_bT(z))-\alpha\sigma'(s)\right|  \leq A\, \cdot m_2(x)\, \cdot |U_bT(z)| \, .
\]
Taking the $\|\cdot\|_\mu$ norm, we conclude that $E$ is $L_b$-smooth along the $b$-coordinate.
\item[--]\textbf{The $w_i$-coordinate.}
Let $\nu=(\operatorname{Id}+U_{w_i}T)_\#\mu$, where $T\in L^2(\mu;\mathbb{R}^{d+3})$ and $\mu,\nu\in\mathcal{P}_2(\mathcal{B})$.
From \eqref{eqn:nn_grad},
\[
U_{w_i} G[\mu](z) = \int_{\mathbb{R}^d} r_\mu(x)\,\alpha\sigma'(s)x_i\,d\pi(x) \, .
\]
Since perturbing only the $w_i$-coordinate replaces $s$ by $s+U_{w_i}T(z)\,x_i$, we have
\[
\left|V(x,z+U_{w_i}T(z))-V(x,z)\right| \leq A\, \cdot m_1(x)\, \cdot |x_i|\, \cdot |U_{w_i}T(z)| \, ,
\]
and hence
\[
\left|g(x;\nu)-g(x;\mu) \right| \leq A\,\cdot m_1(x)\,\cdot |x_i|\,\cdot \|U_{w_i}T\|_\mu \, .
\]
Moreover,
\[
\left|\alpha\sigma'(s+U_{w_i}T(z)\, \cdot x_i) \cdot x_i \right| \leq A\, \cdot m_1(x)\, \cdot |x_i| \, ,
\]
and
\[
\left|\alpha\sigma'(s+U_{w_i}T(z)\, \cdot x_i) \cdot x_i-\alpha\sigma'(s) \cdot x_i\right| \leq A\, \cdot m_2(x)\, \cdot x_i^2\, \cdot |U_{w_i}T(z)|.
\]
Again using the splitting in~\eqref{eqn:expansion_NN_G},
\begin{align*}
&\left|U_{w_i} G[\nu](z+U_{w_i}T(z)) - U_{w_i} G[\mu](z)\right| \\
\leq & \left( \int_{\mathbb{R}^d} x_i^2 \cdot \bigl(A^2m_1(x)^2+A\,R(x) \cdot m_2(x)\bigr)\,d\pi(x) \right)\|U_{w_i}T\|_\mu \, .
\end{align*}
Taking the $\|\cdot\|_\mu$ norm, we obtain that $E$ is $L_{w_i}$-smooth along the $w_i$-coordinate for each $i\in[d]$.
\end{description}
This concludes the entire proposition.
\end{proof}

\bibliographystyle{plain}
\bibliography{main_ref}

\end{document}